\newtheorem{theorem}{Theorem}
\newtheorem{lemma}[theorem]{Lemma}
  \newtheorem{proposition}[theorem]{Proposition}
  \newtheorem{remark}[theorem]{Remark}
  \newtheorem{corollary}[theorem]{Corollary}
  \newtheorem{definition}[theorem]{Definition}
\newcommand{\comment}[1]{}
\title{Quadratic Memory is Necessary for Optimal Query Complexity in Convex Optimization: Center-of-Mass is Pareto-Optimal}
\author{
  Mo\"ise Blanchard\\
  MIT\\
  \small{\texttt{moiseb@mit.edu}}
  \and
  Junhui Zhang\\
  MIT\\
  \small{\texttt{junhuiz@mit.edu}}
  \and 
  Patrick Jaillet\\
  MIT\\
  \small{\texttt{jaillet@mit.edu}}
}
\date{}
\newenvironment{game}[1][]
  {\renewcommand{\algorithmcfname}{Game}%
   \begin{algorithm}[#1]
   \long\def\@caption##1[##2]##3{%
     \par
     \begingroup\@parboxrestore
     \if@minipage\@setminipage\fi
     \normalsize \@makecaption{\AlCapSty{\AlCapFnt\algorithmcfname}}
     \par\endgroup
   }}
  {\end{algorithm}
}
\newenvironment{proc}[1][]
  {\renewcommand{\algorithmcfname}{Procedure}%
   \begin{algorithm}[#1]
   \long\def\@caption##1[##2]##3{%
     \par
     \begingroup\@parboxrestore
     \if@minipage\@setminipage\fi
     \normalsize \@makecaption{\AlCapSty{\AlCapFnt\algorithmcfname}}{\ignorespaces ##3}%
     \par\endgroup
   }}
  {\end{algorithm}
}
\newcommand{\acks}[1]{\section*{Acknowledgments}#1}
\newcommand{\nonl}{\renewcommand{\nl}{\let\nl\oldnl}}
\DeclareMathOperator*{\argmax}{arg\,max}
\DeclareMathOperator*{\polylog}{polylog}
\renewenvironment{proof}[1][]{\par\noindent{\bf Proof #1\ }}{\hfill$\blacksquare$\\[2mm]}
\begin{document}

\newcommand{\trw}{\text{\small TRW}}
\newcommand{\maxcut}{\text{\small MAXCUT}}
\newcommand{\maxcsp}{\text{\small MAXCSP}}
\newcommand{\suol}{\text{SUOL}}
\newcommand{\wuol}{\text{WUOL}}
\newcommand{\crf}{\text{CRF}}
\newcommand{\sual}{\text{SUAL}}
\newcommand{\suil}{\text{SUIL}}
\newcommand{\fs}{\text{FS}}
\newcommand{\fmv}{{\text{FMV}}}
\newcommand{\smv}{{\text{SMV}}}
\newcommand{\wsmv}{{\text{WSMV}}}
\newcommand{\trwp}{\text{\small TRW}^\prime}
\newcommand{\alg}{\text{ALG}}
\newcommand{\rhos}{\rho^\star}
\newcommand{\brhos}{\brho^\star}
\newcommand{\bzero}{{\mathbf 0}}
\newcommand{\bs}{{\mathbf s}}
\newcommand{\bw}{{\mathbf w}}
\newcommand{\bws}{\bw^\star}
\newcommand{\ws}{w^\star}
\newcommand{\Prt}{{\mathsf {Part}}}
\newcommand{\Fs}{F^\star}

\newcommand{\Hs}{{\mathsf H} }

\newcommand{\hL}{\hat{L}}
\newcommand{\hU}{\hat{U}}
\newcommand{\hu}{\hat{u}}

\newcommand{\bu}{{\mathbf u}}
\newcommand{\ubf}{{\mathbf u}}
\newcommand{\hbu}{\hat{\bu}}

\newcommand{\primal}{\textbf{Primal}}
\newcommand{\dual}{\textbf{Dual}}

\newcommand{\Ptree}{{\sf P}^{\text{tree}}}
\newcommand{\bv}{{\mathbf v}}

\newcommand{\bq}{\boldsymbol q}

\newcommand{\rvM}{\text{M}}

\newcommand{\Acal}{\mathcal{A}}
\newcommand{\Bcal}{\mathcal{B}}
\newcommand{\Ccal}{\mathcal{C}}
\newcommand{\Dcal}{\mathcal{D}}
\newcommand{\Ecal}{\mathcal{E}}
\newcommand{\Fcal}{\mathcal{F}}
\newcommand{\Gcal}{\mathcal{G}}
\newcommand{\Hcal}{\mathcal{H}}
\newcommand{\Ical}{\mathcal{I}}
\newcommand{\Lcal}{\mathcal{L}}
\newcommand{\Ncal}{\mathcal{N}}
\newcommand{\Pcal}{\mathcal{P}}
\newcommand{\Scal}{\mathcal{S}}
\newcommand{\Tcal}{\mathcal{T}}
\newcommand{\Ucal}{\mathcal{U}}
\newcommand{\Vcal}{\mathcal{V}}
\newcommand{\Wcal}{\mathcal{W}}
\newcommand{\Xcal}{\mathcal{X}}
\newcommand{\Ycal}{\mathcal{Y}}
\newcommand{\Ocal}{\mathcal{O}}
\newcommand{\Qcal}{\mathcal{Q}}
\newcommand{\Rcal}{\mathcal{R}}

\newcommand{\brho}{\boldsymbol{\rho}}

\newcommand{\Cbb}{\mathbb{C}}
\newcommand{\Ebb}{\mathbb{E}}
\newcommand{\Nbb}{\mathbb{N}}
\newcommand{\Pbb}{\mathbb{P}}
\newcommand{\Qbb}{\mathbb{Q}}
\newcommand{\Rbb}{\mathbb{R}}
\newcommand{\Sbb}{\mathbb{S}}
\newcommand{\Xbb}{\mathbb{X}}
\newcommand{\Ybb}{\mathbb{Y}}
\newcommand{\Zbb}{\mathbb{Z}}

\newcommand{\Rbbp}{\Rbb_+}

\newcommand{\bX}{{\mathbf X}}
\newcommand{\bx}{{\boldsymbol x}}

\newcommand{\btheta}{\boldsymbol{\theta}}

\newcommand{\Pb}{\mathbb{P}}

\newcommand{\hPhi}{\widehat{\Phi}}

\newcommand{\Sigmah}{\widehat{\Sigma}}
\newcommand{\thetah}{\widehat{\theta}}

\newcommand{\indep}{\perp \!\!\! \perp}
\newcommand{\notindep}{\not\!\perp\!\!\!\perp}

\newcommand{\one}{\mathbbm{1}}
\newcommand{\1}{\mathbbm{1}}
\newcommand{\aprx}{\alpha}

\newcommand{\ST}{\Tcal(\Gcal)}
\newcommand{\x}{\mathsf{x}}
\newcommand{\y}{\mathsf{y}}
\newcommand{\Ybf}{\textbf{Y}}
\newcommand{\smiddle}[1]{\;\middle#1\;}

\definecolor{dark_red}{rgb}{0.2,0,0}
\newcommand{\detail}[1]{\textcolor{dark_red}{#1}}

\newcommand{\ds}[1]{{\color{red} #1}}
\newcommand{\rc}[1]{{\color{green} #1}}

\newcommand{\mb}[1]{\ensuremath{\boldsymbol{#1}}}

\newcommand{\metric}{\rho}

\newcommand{\paren}[1]{\left( #1 \right)}
\newcommand{\sqb}[1]{\left[ #1 \right]}

\maketitle

\begin{abstract}%
    We give query complexity lower bounds for convex optimization and the related feasibility problem. We show that quadratic memory is necessary to achieve the optimal oracle complexity for first-order convex optimization. In particular, this shows that center-of-mass cutting-planes algorithms in dimension $d$ which use $\tilde\Ocal(d^2)$ memory and $\tilde\Ocal(d)$ queries are Pareto-optimal for both convex optimization and the feasibility problem, up to logarithmic factors. Precisely, building upon techniques introduced in \cite{marsden2022efficient}, we prove that to minimize $1$-Lipschitz convex functions over the unit ball to $1/d^4$ accuracy, any deterministic first-order algorithms using at most $d^{2-\delta}$ bits of memory must make $\tilde\Omega(d^{1+\delta/3})$ queries, for any $\delta\in[0,1]$. For the feasibility problem, in which an algorithm only has access to a separation oracle, we show a stronger trade-off: for at most $d^{2-\delta}$ memory, the number of queries required is $\tilde\Omega(d^{1+\delta})$. This resolves a COLT 2019 open problem of Woodworth and Srebro.
\end{abstract}

\paragraph{Keywords.}
Convex optimization, feasibility problem, first-order methods, cutting-planes, center-of-mass, memory lower bounds, query complexity

\section{Introduction}\label{sec:introduction}

We consider the canonical problem of first-order convex optimization in which one aims to minimize a convex function $f:\Rbb^d\to \Rbb$ with access to an oracle that for any query $\mb x$ returns $(f(\mb x),\nabla f(\mb x))$ the value of the function and a subgradient of $f$ at $\mb x$. Arguably, this is one of the most fundamental problems in optimization, mathematical programming and machine learning.

A classical question is how many oracle queries are required to guarantee finding an $\epsilon$-approximate minimizer for any $1$-Lipschitz convex functions $f:\Rbb^d\to\Rbb$ over the unit ball. We denote by $B_d(\mb x, r)=\{\mb x'\in \Rbb^d:\|\mb x-\mb x'\|_2\leq \epsilon\}$ the ball centered in $\mb x$ of radius $r$. There exist methods that given first-order oracle access only need $\Ocal(d\log 1/\epsilon)$ queries and this query complexity is worst-case optimal \cite{nemirovskij1983problem} when $\epsilon\ll 1/\sqrt d$. Known methods achieving the optimal $\Ocal(d\log 1/\epsilon)$ query complexity fall in the broad class of cutting plane methods, that build upon the well-known ellipsoid method \cite{yudin1976informational, shor1977cut} which uses $\Ocal(d^2\log 1/\epsilon)$ queries. These include the inscribed ellipsoid \cite{tarasov1988method,nesterov1989self}, volumetric center or Vaidya's method \cite{atkinson1995cutting,vaidya1996new}, approximate center-of-mass via sampling techniques \cite{levin1965algorithm,bertsimas2004solving} and recent improvements \cite{lee2015faster,jiang2020improved}. Unfortunately, all these methods suffer from at least $\Omega(d^3\log 1/\epsilon)$ time complexity and further require storing all subgradients, or at least an ellipsoid in $\Rbb^d$, therefore at least $\Omega(d^2\log 1/\epsilon)$ bits of memory. These limitations are prohibitive for large-scale optimization, hence cutting plane methods are viewed as rather impractical and less frequently used for high-dimensional applications. On the other hand, the simplest, perhaps most commonly used and practical gradient descent requires $\Ocal(1/\epsilon^2)$ queries, which is not optimal for $\epsilon\ll 1/\sqrt d$, but only needs $\Ocal(d)$ time per query and $\Ocal(d\log 1/\epsilon)$ memory.

A natural question is whether one can preserve the optimal query lower bounds from cutting-planes methods with simpler methods, for instance, inspired by gradient descent techniques. Such hope is largely motivated by the fact that in many different theoretical settings, cutting plane methods have achieved state-of-the-art runtimes including semidefinite programming \cite{anstreicher2000volumetric,lee2015faster}, submodular optimization \cite{mccormick2005submodular,grotschel2012geometric,lee2015faster,jiang2021minimizing} or equilibrium computation \cite{papadimitriou2008computing,jiang2011polynomial}. Towards this goal, \cite{woodworth2019open} first posed this question in terms of query complexity / memory trade-off: given a certain number of bits of memory, which query complexity is achievable?  While cutting planes methods require $\Omega(d^2\log 1/\epsilon)$ memory, gradient descent only requires storing one vector and as a result, uses $\Ocal(d\log 1/\epsilon)$ memory, which is information-theoretically optimal \cite{woodworth2019open}\footnote{$\Omega(d\log 1/\epsilon)$ bits of memory are already required just to represent the answer to the optimization problem.}. Understanding this trade-off could pave the way for the design of more efficient methods in convex optimization. 

The first result in this direction was provided in \cite{marsden2022efficient}, where they showed that it is impossible to be both optimal in query complexity and in memory. Specifically, they proved that any potentially randomized algorithm that uses at most $d^{1.25-\delta}$ memory must make at least $\tilde\Omega(d^{1+4/3\delta})$ queries. This implies that a super-linear amount of memory $d^{1.25}$ is required to achieve the optimal rate of convergence (that is achieved by algorithms using more than quadratic memory). However, this leaves open the fundamental question of whether one can improve over the memory of cutting-plane methods while keeping optimal query complexity.

\paragraph{Question (COLT 2019 \cite{woodworth2019open}).} Is it possible for a first-order algorithm that uses at most $\Ocal(d^{2-\delta})$ bits of memory to achieve query complexity $\tilde\Ocal(d \polylog 1/\epsilon)$ when $d=\Omega(\log^c 1/\epsilon)$ but $d=o(1/\epsilon^c)$ for all $c>0$?\\

In this paper, building upon the techniques introduced in \cite{marsden2022efficient}, we provide a negative answer to this question: quadratic memory is necessary to achieve the optimal query complexity with deterministic algorithms. As a result, cutting plane methods including the standard center-of-mass algorithm are Pareto-optimal up to logarithmic factors within the query complexity / memory trade-off. Our main result for convex optimization is the following.

\begin{theorem}\label{thm:main_opt}
    For $\epsilon =1/d^4$ and any $\delta\in[0,1]$, a deterministic first-order algorithm guaranteed to minimize $1$-Lipschitz convex functions over the unit ball with $\epsilon$ accuracy uses at least $d^{2-\delta}$ bits or makes $\tilde\Omega(d^{1+\delta/3})$ queries.
\end{theorem}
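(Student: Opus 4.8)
The plan is to follow the adversarial information-theoretic framework of \cite{marsden2022efficient} but with a sharper hard instance whose construction is tuned to extract the improved $d^{1+\delta/3}$ lower bound from $d^{2-\delta}$ memory. The hard function will be of the Nemirovski-type form
\[
f(\mb x) = \max\left\{ \|A\mb x\|_\infty - \eta,\; \max_{i\le N} \left( \mb v_i^\top \mb x - i\gamma \right) \right\},
\]
where $A$ is a random $\{\pm1\}$-type matrix (scaled) with roughly $d^{1-\delta/3}$ rows, and the $\mb v_i$ are random unit vectors revealed one at a time as the algorithm probes them, with a small geometric decay parameter $\gamma$. The intuition is that to certify optimality the algorithm must either (a) discover many nearly-orthogonal constraint directions $\mb v_i$ — each query reveals at most one — or (b) exploit the hidden subspace structure encoded by $A$, which requires remembering enough of $A$ (equivalently, enough past gradients) to keep making progress. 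Since $A$ has $\approx d^{1-\delta/3} \cdot d$ bits of information but the algorithm has only $d^{2-\delta}$ bits, it cannot store $A$; a careful accounting shows it must instead rely on the $\mb v_i$'s, forcing $\tilde\Omega(d^{1+\delta/3})$ queries.

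The key steps, in order, are as follows. First I would fix the parameters: dimension $d$, accuracy $\epsilon = 1/d^4$, number of rows $k \approx d^{1-\delta/3}$ of the "hard subspace'' matrix, decay $\gamma \approx 1/d^{3}$, and margin $\eta$ chosen so the value at the origin and the target accuracy are correctly calibrated and the function is $1$-Lipschitz on $B_d(\mb 0,1)$. Second, I would set up the oracle as an adaptive adversary: maintain the set of directions queried so far, and whenever the algorithm's query has small inner product with all previously revealed $\mb v_i$, the oracle reveals a fresh random $\mb v_i$ (drawn orthogonal to the realized transcript so far, via a Gram–Schmidt-type argument) and returns it as the subgradient. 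Third — the core of the argument — I would prove a memory-bounded "progress'' lemma: conditioned on the algorithm's memory state at any time, the unrevealed part of $A$ is still essentially uniformly random in a large subspace, so with high probability the algorithm's next query is (nearly) orthogonal to all rows of $A$ it has not effectively learned; hence it cannot decrease the $\|A\mb x\|_\infty$ term and is forced to make progress only through the $\mb v_i$ constraints. This requires a compression/encoding argument: if the algorithm used few queries, one could use its memory state plus the query transcript to describe $A$ in fewer than its entropy of bits, a contradiction. Fourth, I would chain these per-round bounds: each query lowers $f$ only by $O(\gamma)$ through the revealed-$\mb v_i$ mechanism, so to reach accuracy $\epsilon$ one needs $\tilde\Omega(k \cdot (\text{something}))$ queries, and optimizing the parameter choices yields the $\tilde\Omega(d^{1+\delta/3})$ bound. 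Finally I would verify the reduction constants ($\epsilon = 1/d^4$ suffices, and the Lipschitz/boundedness constraints hold).

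The main obstacle I anticipate is the memory-bounded progress lemma — specifically, handling the adaptivity: the algorithm's queries depend on past oracle answers, which are correlated with $A$ through the memory state, so one cannot naively claim the next query is independent of the unrevealed rows of $A$. The fix, following and refining \cite{marsden2022efficient}, is a delicate "observation/exploration'' decomposition of the transcript together with a union bound over the (few) bits of memory: since the memory has only $d^{2-\delta}$ bits, it can "leak'' information about at most $\approx d^{1-\delta}$ additional directions in $A$'s row space, which is a lower-order term compared to the $\approx d^{1-\delta/3}$ rows, leaving enough randomness for the orthogonality argument to go through. Making the quantitative trade-off between "bits of memory,'' "directions leaked,'' and "queries needed'' tight — so that $d^{2-\delta}$ bits gives exactly the $d^{1+\delta/3}$ query bound rather than something weaker — is the technically demanding part, and is presumably where the improvement over the $d^{1.25-\delta}$ threshold of \cite{marsden2022efficient} is won.
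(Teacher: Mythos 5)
Your skeleton is right — adversarial hard instance with a barrier term $\|\mb A\mb x\|_\infty-\eta$ plus Nemirovski-type maxima, adaptive revelation of the $\mb v_i$'s, and an information-theoretic compression argument against a matrix $\mb A$ with roughly $d^2$ bits of entropy — and it matches the paper's general framework inherited from \cite{marsden2022efficient}. But as written your plan would reproduce the $d^{1.25}$ threshold of \cite{marsden2022efficient}, not the quadratic one, because it is missing the two key structural innovations that the paper introduces and that carry the entire improvement.

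First, a single small offset $\gamma\approx 1/d^3$ is not admissible. In the Nemirovski construction the offset serves two roles: it forces the algorithm to discover the $\mb v_i$ in order, and it ensures that any query discovering a new $\mb v_i$ leaves the span of the prior informative queries. Concentration on the sphere forces $\gamma=\Omega(\sqrt{k\log d/d})$ for the second role (a random unit vector has $\Theta(\sqrt{k/d})$ mass on any fixed $k$-dimensional subspace), and the total offset budget $N\gamma=O(1)$ then caps $N$, which is precisely what limits \cite{marsden2022efficient}. The paper's fix — absent from your proposal — is to split the construction into \emph{periods}: within a period, the vectors $\mb v_{p,1},\ldots,\mb v_{p,l_p}$ are sampled explicitly orthogonal (via Gram--Schmidt) to the exploratory queries so that the second role is enforced by construction and only a tiny within-period offset $\gamma_2=\gamma_1/(4d)$ is needed; between periods, the larger offset $\gamma_1=\Theta(\sqrt{\log d/d})$ handles the ordering. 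The budget constraint $p_{\max}\gamma_1 \lesssim 1/\sqrt{kp_{\max}\log d}$ (the right-hand side being the achievable optimum value) then gives $p_{\max}=\tilde\Theta((d/k)^{1/3})$, and with $k=\tilde\Theta(M/d)$ and $\Omega(d)$ queries forced per period you get $Q=\tilde\Omega(p_{\max}\cdot d)=\tilde\Omega(d^{4/3}/k^{1/3})=\tilde\Omega(d^{1+\delta/3})$. Nothing in your proposal recovers the $(d/k)^{1/3}$ exponent, and your parameter $k\approx d^{1-\delta/3}$ for the number of rows of $\mb A$ is also not what is needed: the paper keeps $n=\lceil d/4\rceil$ rows and tunes only the per-period dimension $k$.

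Second, adaptivity introduces a new information-leakage channel that your proposal does not close. Because the $\mb v_{p,l}$ are sampled as a function of the queries (which themselves depend on rows of $\mb A$), handing the algorithm a previously constructed $\mb v_{p',l'}$ during a later period reveals information about $\mb A$ beyond what the $M$-bit memory and the in-period row queries contain. A naive Orthogonal Vector Game reduction would therefore give an untrustworthy bound. The paper introduces an Orthogonal Vector Game \emph{with Hints} and proves (via a discretization $\Dcal_\delta$ of the sphere, an anti-concentration bound, and the chain rule on mutual information) that each such hint contributes at most $\tilde\Ocal(k)$ bits of information about $\mb A$, so the total leakage is $\tilde\Ocal(M)$ and does not dominate. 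Your ``leak at most $d^{1-\delta}$ directions'' heuristic gestures at the right order of magnitude, but without the hint-by-hint entropy bound the reduction is not sound. Fixing your proposal essentially requires importing both the two-scale period construction and the hint-bounded game, at which point you arrive at the paper's proof.
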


A key component of cutting plane methods is that they merely rely on the subgradient information at each query to restrict the search space. As a result, these can be used to solve the larger class of feasibility problems that are essential in mathematical programming and optimization. In a feasibility problem, one aims to find an $\epsilon$-approximation of an unknown vector $\mb x^\star$, and has access to a separation oracle. For any query $\mb x$, the separation oracle either returns a separating hyperplane $\mb g$ from $\mb x$ to $B_d(\mb x^\star,\epsilon)$---such that $\langle \mb g,\mb x-\mb z\rangle>0$ for any $\mb z\in B_d(\mb x^\star,\epsilon)$---or signals that $\|\mb x -\mb x^\star\|\leq \epsilon$. This class of problems is broader than convex optimization since the negative subgradient always provides a separating hyperplane from a suboptimal query to the optimal set. Hence, feasibility and convex minimization problem are closely related and it is often the case that obtaining query lower bounds for the feasibility problem simplifies the analysis while still providing key insights for the more restrictive convex optimization problem \cite{nemirovskij1983problem,nesterov2003introductory}.

As a result, a similar fundamental question is to understand the query complexity / memory trade-off for the feasibility problem. As noted above, any lower bound for convex optimization yields the same lower bound for the feasibility problem. Here, we can significantly improve over the previous trade-off.

\begin{theorem}\label{thm:main_feasibility}
For $\epsilon=1/(48d^2\sqrt d)$ and any $\delta\in[0,1]$, a deterministic algorithm guaranteed to solve the feasibility problem over the unit ball with $\epsilon$ accuracy uses at least $d^{2-\delta}$ bits of memory or makes at least $\tilde \Omega(d^{1+\delta})$ queries.
\end{theorem}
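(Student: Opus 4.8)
The plan is to construct a hard distribution over feasibility instances in the style of \cite{marsden2022efficient} and extract an information-theoretic lower bound via a communication/encoding argument. I would first set up the hard instance: fix a random orthonormal collection of vectors organized into $N$ ``blocks'' together with a random ``wall'' direction, so that the optimal point $\mb x^\star$ is essentially the solution of a nested family of linear feasibility constraints. The separation oracle is designed so that answering a query productively forces the algorithm to have ``discovered'' a large batch of mutually (near-)orthogonal vectors from the hidden collection --- concretely, the oracle returns either a scaled copy of one of the hidden vectors $\mb v_i$ (if the query has not yet exposed enough of the structure) or a ``wall'' response that only becomes available once the algorithm has explored all $N$ blocks. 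The parameters $\epsilon = 1/(48 d^2\sqrt d)$ and the block structure are chosen so that (i) the number of rounds needed is $\tilde\Theta(N)$ and (ii) between two consecutive ``wall'' responses the algorithm must re-derive $\Omega(d/N)$ fresh orthogonal directions.

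The core of the argument is the memory-vs-queries tension. Let the algorithm use $M$ bits of memory and $Q$ queries. I would argue that if $Q = \tilde o(d^{1+\delta})$, then across the execution the algorithm must, at some ``wall'' step, recover a set $S$ of roughly $k := \tilde\Omega(d^{1-\delta})$ of the hidden orthonormal vectors using only the information currently stored in memory together with $\tilde O(Q/N)$ new oracle responses since the last wall. The key lemma --- this is the step I expect to be the main obstacle --- is an \emph{information-theoretic leakage bound}: conditioned on the queries and the algorithm's state, each oracle response of the ``return a hidden vector'' type leaks only $\tilde O(d)$ bits about the hidden collection, while reconstructing $k$ specified orthonormal vectors in $\Rbb^d$ from the adversary's set requires $\tilde\Omega(kd)$ bits. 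Balancing $M + \tilde O((Q/N) \cdot d) \gtrsim kd$ with $k \approx d/N$ and choosing $N \approx d^{\delta}$ yields $M \gtrsim d^{2-\delta}$ unless $Q \gtrsim d^{1+\delta}$, which is exactly the claimed trade-off. Compared to the optimization case (Theorem~\ref{thm:main_opt}), the feasibility oracle can be made ``cleaner'' --- it need not encode function values consistently with a global convex function --- which is why we gain the full exponent $d^{1+\delta}$ rather than $d^{1+\delta/3}$; the cube-root loss in the optimization setting comes from having to embed the hard instance into a genuine Lipschitz convex function and from the resulting slack in how much structure each gradient query can be forced to reveal.

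Concretely the steps, in order, are: (1) define the random feasibility instance (hidden orthonormal blocks $\{\mb v_i\}$, wall vectors, nested constraints) and verify it is a valid feasibility instance over the unit ball with an $\epsilon$-ball of feasible points; (2) define the adversarial separation oracle and show consistency --- every response is a genuine separating hyperplane and the oracle can be answered adaptively; (3) show a \emph{lower bound on progress}: to receive the final ``you are $\epsilon$-close'' signal the algorithm must have triggered all $N$ wall responses, and between consecutive walls it must expose $\Omega(d/N)$ new hidden directions (a geometric/linear-algebra argument about how many constraints are needed to pin down $\mb x^\star$ to accuracy $\epsilon$); (4) prove the leakage lemma bounding the mutual information between the hidden collection and the transcript restricted to one inter-wall phase, given the memory state; (5) combine (3) and (4) via Fano / a counting argument to conclude $M + \tilde O(Qd/N) = \tilde\Omega(d^2/N)$, and optimize over the free parameter $N$ to obtain $M \geq d^{2-\delta}$ or $Q \geq \tilde\Omega(d^{1+\delta})$. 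Step (4) is the crux: making the leakage bound tight (as opposed to the weaker bound that would only give the $d^{1.25}$-type barrier of \cite{marsden2022efficient}) requires carefully arguing that the algorithm cannot ``batch'' discoveries across walls, i.e., that memory carried across a wall is not enough to shortcut the re-derivation, which is where the nested/resampled structure of the instance must be exploited.
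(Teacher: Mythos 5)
Your sketch follows the same high-level route as the paper: adaptively constructed feasibility instances organized into phases, a reduction to an orthogonal-vector type game, and an information-theoretic lower bound for that game. The phase count $N\approx p_{max}=\tilde\Theta(d/k)$, the per-phase exploration requirement $\Omega(d/N)\approx k$, and the final balance ``$Q\gtrsim p_{max}\cdot m$'' all match the structure of Propositions \ref{prop:valid_problem}--\ref{prop:lower_bound_queries_game}.

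The gap is concrete and it is in your step (4), the leakage lemma. You claim each ``return a hidden vector'' response leaks $\tilde O(d)$ bits, and then balance $M+\tilde O((Q/N)\cdot d)\gtrsim kd$. Run the numbers: with $k\approx d/N$ and $m\approx Q/N$ queries in a single phase, your inequality is $M+\tilde O(md)\gtrsim kd$, which gives only $m\gtrsim k-M/d=\Theta(k)$ and hence $Q=m\cdot p_{max}=\tilde\Omega(k\cdot d/k)=\tilde\Omega(d)$ --- the trivial lower bound, not $\tilde\Omega(d^{1+\delta})$. The point of the paper's Proposition \ref{prop:lower_bound_queries_game} is that each hint vector leaks only $\tilde O(k)$ bits about the unseen rows $\mb A'$ (concretely $\leq 3k\log d + o(1)$), not $\tilde O(d)$: the hint $\mb v_l=\phi_\delta(\mb y_l)$ is sampled uniformly from the $\approx d^{-3k}$ fraction of the sphere orthogonal to the (at most $k$) exploratory queries, so by Lemma \ref{lemma:anti-concentration} it retains $\log|\Dcal_\delta|-3k\log d$ bits of entropy conditioned on the queries, and hence can reveal at most $3k\log d$ bits of anything else, including $\mb A$. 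With this tighter bound the balance becomes $M+\tilde O(mk)\gtrsim k(n-m)$, and since $k\geq 20(M+\ldots)/(c_H n)$ this forces $m=\tilde\Omega(d)$, whence $Q=\tilde\Omega(d\cdot p_{max})=\tilde\Omega(d^2/(k\log^3 d))=\tilde\Omega(d^3/(M\log^3 d))$, which is what the theorem states.

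Your diagnosis that ``the algorithm cannot batch discoveries across walls'' is not really the mechanism the tight bound rests on; in fact the game isolates a single phase entirely (Proposition \ref{prop:reduction_orthogonal_game} lets the player choose which phase to play from $\mathsf{Message}$), so there is no cross-phase batching to rule out. What makes the bound tight is the per-hint argument above: each hint carries nearly full fresh randomness modulo a $k$-dimensional orthogonality constraint, independently of how hints accumulate over the execution. A couple of smaller inaccuracies: the hidden vectors $\mb v_{p,l}$ are neither orthonormal nor fixed in advance --- they are sampled online, approximately orthogonal to the phase's exploratory queries rather than to each other --- and the progress argument is not about pinning down $\mb x^\star$ to accuracy $\epsilon$ information-theoretically, but about the structural fact that any output $\mb x_T$ made before all $p_{max}$ phases end has $\mb v_{p,l}^\top\mb x_T > -\eta_1$ for the most recently constructed $\mb v_{p,l}$ and so is infeasible (Proposition \ref{prop:finish_procedure_necessary}).
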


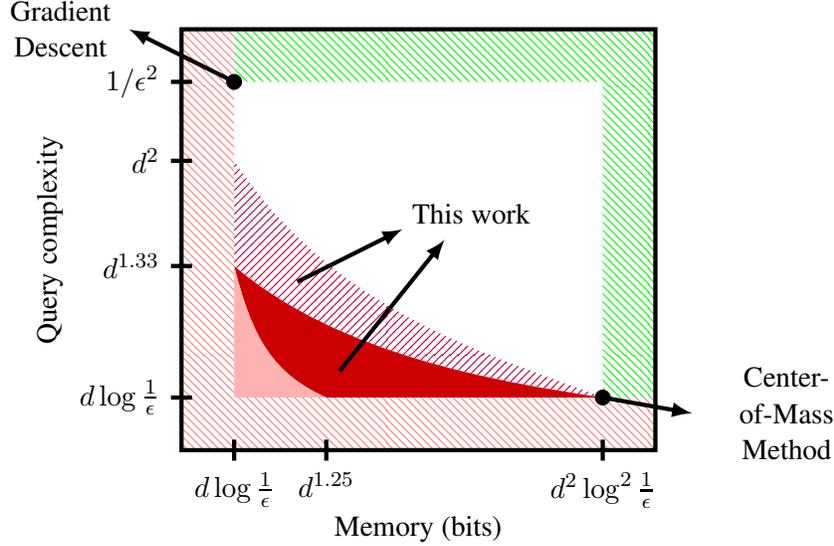
\begin{figure}[ht]
    \centering
    \begin{tikzpicture}[scale=0.7]
    
    \draw[thick,draw=none,pattern=north west lines,pattern color=red!50] (-1,-1) -- (-1,7) -- (0,7) -- (0,0) -- (8,0) -- (8,-1);
    
    \draw[thick,draw=none,pattern=north west lines,pattern color=green] (0,7) -- (0,6)-- (7,6)--(7,0) -- (8,0) -- (8,7);
    
    
    \draw[ultra thick,black] (-1,-1) rectangle (8,7);

    \draw[draw=none,pattern=north east lines,pattern color=purple] (0,4.5) .. controls (2,1.5) and (5,0.8) .. (7,0) .. controls  (5,0.3) and (2,0.8) .. (0,2.5) ;
    
    \draw[draw=none,fill=red!80!black] (0,2.5) .. controls (0.35,1.1) and (0.75,0.45) .. (7/4,0) -- (7,0) .. controls  (5,0.3) and (2,0.8) .. (0,2.5) ;
    \draw[draw=none,fill=red!30] (0,0) -- (0,2.5) .. controls (0.35,1.1) and (0.75,0.45) .. (7/4,0) ;

    \draw[ultra thick,black] (0,-1.2) node[below]{$d\log \frac{1}{\epsilon}$}  -- (0,-0.8);

    \draw[ultra thick,black] (7/4,-1.2) node[below]{$d^{1.25}$}  -- (7/4,-0.8);
    \draw[ultra thick,black] (7,-1.2) node[below]{$d^2\log^2\frac{1}{\epsilon}$}  -- (7,-0.8);

    \draw[ultra thick,black] (-1.2,0) node[left]{$d\log\frac{1}{\epsilon}$}  -- (-0.8,0);
    \draw[ultra thick,black] (-1.2,2.5) node[left]{$d^{1.33}$}  -- (-0.8,2.5);
    \draw[ultra thick,black] (-1.2,4.5) node[left]{$d^{2}$}  -- (-0.8,4.5);
    \draw[ultra thick,black] (-1.2,6) node[left]{$1/\epsilon^2$}  -- (-0.8,6);

    \draw (3.5,-2.5) node{Memory (bits)};
    \draw (-3.5,3) node[rotate=90]{Query complexity};

    \filldraw (0,6) circle (4pt);
    \draw [ultra thick,black,-latex] (0,6) -- (-2,7) node[left,align=center,text width=1.5cm]{Gradient Descent} ;

    \draw [ultra thick,black,-latex] (7,0) -- (8.7,-0.3)node [align=center,right,text width=2.2cm]{Center-of-Mass Method};
    \filldraw (7,0) circle (4pt);

    \draw[ultra thick,-latex] (2,0.5) -- (4,3);
    \draw[ultra thick,-latex] (1.2,2.2) -- (3.2,3.2);
    \draw (4.5,3.5) node{This work};
    
    \end{tikzpicture}
    
    \caption{Trade-offs between available memory and first-order oracle complexity for minimizing 1-Lipschitz convex functions over the unit ball (adapted from \cite{woodworth2019open,marsden2022efficient}). The dashed pink “L” (resp. green inverted "L") shaped region corresponds to historical information-theoretic lower bounds (resp. upper bounds) on the memory and query-complexity. The solid pink region corresponds to the recent lower bound trade-off from \cite{marsden2022efficient}, which holds for randomized algorithms. In our work, we show that the solid red region is not achievable for any deterministic algorithms. For the feasibility problem, we also show that the dashed red region is not achievable either for any deterministic algorithms.}
    \label{fig:memory_query_complexity_tradeoff}
\end{figure}

\subsection{Literature review}
Recently, there has been a series of studies exploring the trade-offs between sample complexity and memory constraints for learning problems, such as linear regression \cite{Steinhardt15,Sharan2019}, principal component analysis (PCA) \cite{Mitliagkas2013}, learning under the statistical query model \cite{steinhardt16} and other general learning problems \cite{Brown2021, brown22a,Moshkovitz2017,Moshkovitz2018,Beame2018,Garg2018,Kol2017}.

For parity problems that meet certain spectral (mixing) requirements, \cite{Raz2017} first proved by a computation tree argument that an exponential number of random samples is needed if the memory is sub-quadratic. Similar trade-offs have been obtained when the learning problem satisfies other types of properties \cite{Moshkovitz2017,Moshkovitz2018,Beame2018,Garg2018,Kol2017}. It should be noted that all the above-mentioned results hold for learning problems over finite fields, i.e. the concept classes are finite. For continuous problems, \cite{Sharan2019} was the first to apply \cite{Raz2017}'s framework and showed a sample-complexity lower bound for memory-constrained linear regression.

In contrast to learning with random samples, there is limited understanding of the memory-constrained optimization and feasibility problem. \cite{Nemirovsky1983} demonstrated that, in the absence of memory constraints, finding an $\epsilon$-approximate solution for Lipschitz convex functions requires $\Omega(d\log 1/\epsilon)$ queries, which can be achieved by the center-of-mass method using $O(d^2\log^21/\epsilon)$ bits of memory. At the other extreme, gradient descent needs $\Omega(1/\epsilon^2)$ queries but only $O(d\log1/\epsilon)$ bits of memory, the minimum memory needed to represent a solution. These two extreme cases are represented by dashed pink ``impossible region'' and dashed green ``achievable region'' in Figure \ref{fig:memory_query_complexity_tradeoff}. Since then, \cite{marsden2022efficient} showed that there is a trade-off between memory and query for convex optimization: it is impossible to be both optimal in query complexity and memory. Their lower bound is represented by the solid pink ``impossible region'' in Figure \ref{fig:memory_query_complexity_tradeoff}. In this paper, we significantly improve these results to match the quadratic upper bound of cutting plane methods. Additionally, there has been recent progress in the study of query complexity for randomized algorithms \cite{Woodworth2016, Woodworth2017LowerBF}. 

On the algorithmic side, the afore-mentioned methods that achieve $O(poly(d))$ query complexity \cite{yudin1976informational, shor1977cut,tarasov1988method,nesterov1989self,atkinson1995cutting,vaidya1996new,levin1965algorithm,bertsimas2004solving,lee2015faster,jiang2020improved} all require at least $\Omega(d^2\log1/\epsilon)$ bits of memory. There is also significant literature on memory-efficient optimization algorithms, such as the Limited-memory-BFGS \cite{Nocedal1980,liu_limited_1989}. However, the convergence behavior for even the original BFGS on non-smooth convex objectives is still a challenging, open question \cite{lewis_nonsmooth_2013}.



\paragraph{Comparison with \cite{marsden2022efficient}}

Our proof techniques build upon those introduced in \cite{marsden2022efficient}. We follow the proof strategy that they introduced to derive lower bounds for the memory/query complexity. Below, we delineate which ideas and techniques are borrowed from \cite{marsden2022efficient} and which are the novel elements that we introduce. Details on these proof elements are given in Section \ref{subsec:proof_techniques}. 

First, \cite{marsden2022efficient} define a class of difficult functions for convex optimization of the following form
\begin{equation}\label{eq:previous_form}
    \max\left\{\|\mb A\mb x\|_\infty - \eta_0,\eta_1\left(\max_{i\leq N} \mb v_i^\top \mb x -i\gamma \right) \right\},
\end{equation}
where $\mb A\sim\Ucal(\{\pm 1\}^{d/2\times d})$ is a matrix with $\pm 1$ entries sampled uniformly, and $v_i\sim\Ucal(d^{-1/2}\{\pm 1\}^d)$ are sampled independently, uniformly within the rescaled hypercube. To give intuition on this class, the term $\|\mb A\mb x\|_\infty-\eta_0$ acts as barrier : in order to observe subgradients from the other term, one needs to use queries $\mb x$ that are approximately within the nullspace of $\mb A$. The second term $\max_{i\leq N} \mb v_{i}^\top \mb x - i\gamma$ is the ``Nemirovski'' function, which was used in previous works \cite{nemirovski1994parallel,balkanski2018parallelization,bubeck2019complexity} to obtain lower bounds in parallel convex optimization. At a high level, the limitation in the lower bounds from \cite{marsden2022efficient} comes from the fact that one is limited in the number $N$ of vectors $\mb v_1,\ldots,\mb v_N$ that can be used in the Nemirovski function. To resolve this issue, we introduce adaptivity within the choice of a modified Nemirovski function. At a high level, we choose the vectors $\mb v_1,\ldots,\mb v_N$ depending on the queries of the algorithm which allows to fit in more terms. In turn, this allows to improve the lower bounds.

As a second step, \cite{marsden2022efficient} relate the optimization problem on the defined class of functions to an Orthogonal Vector Game. In this game, the goal is to find vectors that are approximately orthogonal to a matrix $\mb A$ with access to row queries of $\mb A$. The argument is as follows: because of the barrier term $\|\mb A\mb x\|_\infty -\eta_0$, optimizing the Nemirovski function requires exploring independent directions of the nullspace of $\mb A$, which is performed at \emph{informative queries}. With our new class of functions, we can adapt this logic. However, the adaptivity in the vectors $\mb v_i$ provides information to the learner on $\mb A$ in addition to the queried rows of $\mb A$. We therefore need to modify the game by introducing an Orthogonal Vector Game with Hints, where hints encapsulate this extra information.

For the last step,  \cite{marsden2022efficient} give an information-theoretic argument to provide a query complexity lower bound on the defined Orthogonal Vector Game. Following the same structure, we show that a similar argument holds for our modified game. The main added difficulty resides in bounding the information leakage from the hints, and we show that these provide no more information than the memory itself.

As a last remark, the lower bounds provided in \cite{marsden2022efficient} hold for randomized algorithms, while the adaptivity of our procedure only applies to deterministic algorithms.

\subsection{Outline of paper}
Our main results for the trade-off between memory and query complexity for optimization and feasibility problem have been presented in Section \ref{sec:introduction} (Theorem \ref{thm:main_opt}, \ref{thm:main_feasibility}). In Section \ref{sec:setup_statement}, we formally define memory-constrained algorithms and provide a brief overview of our proof techniques and contributions. Our proofs for convex optimization are given in Section \ref{sec:optimization}. We introduce the \textit{optimization procedure} which adaptively constructs a hard family of functions, provide a reduction from this hard family to an \textit{orthogonal vector game with hints}, and show a memory-sample trade-off (Proposition \ref{prop:lower_bound_queries_game}) for the game, which completes the proof of the Theorem \ref{thm:main_opt}. Last, in Section \ref{sec:feasibility}, we consider the feasibility problem and, with a similar methodology, prove Theorem \ref{thm:main_feasibility}.

\section{Formal setup and overview of techniques}\label{sec:setup_statement}

Standard results in oracle complexity give the minimal number of queries for algorithms to solve a given problem. However, this does not account for possible restrictions on the memory available to the algorithm. In this paper, we are interested in the trade-off between memory and query complexity for both convex optimization and the feasibility problem. Our results apply to a large class of \emph{memory-constrained} algorithms. We give below a general definition of the memory constraint for algorithms with access to an oracle $\Ocal:\Scal\to\Rcal$ taking as input a query $q\in\Scal$ and returning as response $\Ocal(q)\in\Rcal$.

\begin{definition}[$M$-bit memory-constrained deterministic algorithm]
\label{def:memory_constraint}
    Let $\Ocal:\Scal\to \Rcal$ be an oracle. An $M$-bit memory-constrained deterministic algorithm is specified by a query function $\psi_{query}:\{0,1\}^M\to \Scal$ and an update function $\psi_{update}:\{0,1\}^M \times \Scal \times \Rcal \to \{0,1\}^M$. The algorithm starts with the memory state $\mathsf{Memory}_0 = 0^M$ and iteratively makes queries to the oracle. At iteration $t$, it makes the query $q_t=\psi_{query}(\mathsf{Memory}_{t-1})$ to the oracle, receives the response $r_t=\Ocal(q_t)$ then
    updates its memory $\mathsf{Memory}_t = \psi_{update}(\mathsf{Memory}_{t-1},q_t, r_t)$.
\end{definition}

The algorithm can stop making queries at any iteration and the last query is its final output.
Notice that the memory constraint applies only between each query but not for internal computations, i.e. the computation of the update $\psi_{update}$ and the query $\psi_{query}$ can potentially use unlimited memory. This is a rather weak memory constraint on the algorithm; a fortiori, our negative results also apply to stronger notions of memory-constrained algorithms. In Definition \ref{def:memory_constraint}, we ask the query and update functions to be time-invariant. In our context, this is without loss of generality: any $M$-bit algorithm using $T$ queries with time-dependent query and update functions \cite{woodworth2019open,marsden2022efficient} can be turned into an $(M+\lceil\log T\rceil)$-bit time-invariant algorithm by storing the iteration number $t$ as part of the memory. The query lower bounds we provide are at most $T\leq poly(d)$. Hence, an additional $\log T= O(\log d)$ bits to the memory size $M$, does not affect our main results, Theorems \ref{thm:main_opt} and \ref{thm:main_feasibility}. 

In this paper, we use the above described framework to study the interplay between query complexity and memory for two fundamental problems in optimization and machine learning.

\paragraph{Convex optimization.} We first consider convex optimization in which one aims to minimize a $1$-Lipschitz convex function $f:\Rbb^d\to \Rbb$ over the unit ball $B_d(0,1)\subset \Rbb^d$. The goal is to output a point $\tilde{\mb x}\in B_d(0,1)$ such that $f(\tilde{\mb x})\leq \min_{\mb x\in B_d(0,1)} f(\mb x) + \epsilon$, referred to as $\epsilon$-approximate points. The optimization algorithm has access to a first order oracle $\Ocal_{CO}:\Rbb^d\to\Rbb\times\Rbb^d$, which for any query $\mb x$ returns the couple $(f(\mb x), \partial f(\mb x))$ where $\partial f(\mb x)$ is a subgradient of $f$ at the query point $\mb x$. 

\begin{remark}
The above requirement for $\epsilon$-approximate optimality is weaker than asking to find a point that is at distance $\epsilon$ from $\arg\min_{\mb x\in B_d(\mb 0,1)}f(\mb x)$ (for $1$-Lipschitz convex functions). As a result, our lower bounds for $\epsilon$-approximate optimality hold a fortiori for the problem where one aims to find a point at distance at most $\epsilon$ from the solution set.
\end{remark}

\comment{
Standard results in oracle complexity for convex optimization investigate the minimum number of queries necessary to ensure that one can find such an $\epsilon$-optimal point. However, these do not account for possible restrictions on the memory available to the algorithm. In this paper, we give lower bounds for query complexity for a general definition of memory-constrained deterministic first-order algorithms defined below.

\begin{definition}[$M$-bit memory-constrained deterministic first-order algorithm with $T$ queries]
    A deterministic first-order optimization algorithm is said to use $M$ bits memory and $T$ queries if it's specified by an update function $\psi_{update}:\{0,1\}^M \times \Rbb^d \times \Rbb\times \Rbb^d\to \{0,1\}^M$ and a query function $\psi_{query}:\{0,1\}^M \to \Rbb^d $. The algorithm starts with a memory state $\mathsf{Memory}_0\in \{0,1\}^M$ and at $t = 1,2,\ldots,T$ makes the query $\mb x_t = \psi_{query}(\mathsf{Memory}_{t-1})$ to the first order oracle. Then it
    updates $\mathsf{Memory}_t = \psi_{update}(\mathsf{Memory}_{t-1},\mb x_t, f(\mb x_t),\partial f(\mb x_t))$. The final output of the algorithm is $\psi_{query}(\mathsf{Memory}_{T})$.
    \end{definition}
Notice that the above definition uses a time-independent update function $\psi_{update}$ and query function $\psi_{query}$, which is different from the version introduced in \cite{woodworth2019open,marsden2022efficient}. However, any time-dependent function can be turned into an equivalent, time-independent one by incorporating $t$ as part of the input. Since the regime we consider is $T\leq poly(d)$, storing the iteration number will only add
$\log(T) = O(\log(d))$ bits to the memory, thus won't affect our main result stated below:
\begin{theorem}\label{thm:main_theorem_optimization}
    For $\epsilon =1/d^4$ and any $\delta\in[0,1]$, a deterministic algorithm guaranteed to solve convex optimization within optimality $\epsilon$ uses at least $d^{1+\delta}$ bits or makes $\tilde\Omega(d^{1+(1-\delta)/3})$ queries.
\end{theorem}

Notice that the memory constraint applies only between each query but not for internal computations, i.e. the computation of the update $\psi_{update,t}$, the query $\psi_{query,t}$, and the output $\psi_{output}$ can use unlimited memory. This is the weakest notion of memory constraint. A fortiori, our negative result (Theorem \ref{thm:main_theorem_optimization}) also applies to stronger notions of memory-constrained algorithms.

}

\paragraph{Feasibility problem.} Second, we consider the trade-off between memory and query complexity for the feasibility problem, where the goal is to find an element $\tilde {\mb x}\in Q$ for a convex set $Q\subset B_d(0,1)$. Instead of a first-order oracle, the algorithm has access to a separation oracle $\Ocal_F:\Rbb^d\to \{\mathsf{Success}\}\cup \Rbb^d$. For any query $\mb x\in \Rbb^d$, the separation oracle either returns $\mathsf{Success}$ reporting that $\mb x\in Q$, or provides a separating vector $\mb g\in\Rbb^d$, i.e., such that for all $\mb x'\in Q$,
    \begin{equation*}
        \langle \mb g,\mb x-\mb x'\rangle > 0.
    \end{equation*}
We say that an algorithm solves the feasibility problem with accuracy $\epsilon>0$ if it can solve any feasibility problem for which the successful set contains a ball of radius $\epsilon$, i.e., such that there exists $\mb x^\star\in B_d(0,1)$ satisfying $B_d(\mb x^\star,\epsilon)\subset Q$. 

\comment{

Analogous to the definition for the $M$-bit memory-constrained deterministic first-order algorithm with $T$ queries, we have the following definition for the feasibility problem algorithm:
\begin{definition}[$M$-bit memory-constrained deterministic feasibility algorithm with $T$ queries]
    A deterministic feasibility algorithm is said to use $M$ bits memory and $T$ queries if it's specified by an update function $\psi_{update}:\{0,1\}^M \times \Rbb^d \times \Rbb^d\to \{0,1\}^M$ and a query function $\psi_{query}:\{0,1\}^M \to \Rbb^d $. The algorithm starts with a memory state $\mathsf{Memory}_0\in \{0,1\}^M$. At $t = 1,2,\ldots,T$, it makes the query $\mb x_t = \psi_{query}(\mathsf{Memory}_{t-1})$ to the separation oracle: if the oracle returns $\mb x_t\in Q$, the algorithm outputs $\mb x_t$ and then terminates, otherwise the oracle returns a separation hyperplane $\mb g_t$, and the algorithm updates $\mathsf{Memory}_t = \psi_{update}(\mathsf{Memory}_{t-1},\mb x_t, \mb g_t)$. If by iteration $T$, there is no output yet, the final output of the algorithm is $\psi_{query}(\mathsf{Memory}_T)$.
    \end{definition}

Our main result for the feasibility problem is the following.
\begin{theorem}\label{thm:main_theorem_feasibility}
For $\epsilon=1/(48d^2\sqrt d)$ and any $\delta\in[0,1]$, an algorithm guaranteed to solve the feasibility problem with optimality $\epsilon$ uses at least $d^{2-\delta}$ bits of memory or makes $\tilde \Omega(d^{1+\delta})$ queries.
\end{theorem}

}

The feasibility problem is at least as hard as convex optimization in the following sense: an algorithm that solves the feasibility problem with accuracy $\epsilon/L$ can be used to solve $L$-Lipschitz convex optimization problems by feeding the subgradients from first-order queries to the algorithm as separating hyperplanes. Alternatively, from any $1$-Lipschitz function $f$ one can derive a feasibility problem, where the feasibility set is $Q = \{\mb x\in B_d(0,1) , f(\mb x) \leq f^\star + \epsilon\}$ and the separating oracle at $\mb x \notin Q$ is a subgradient $\partial f(\mb x)$ at $\mb x$.

\subsection{Overview of proof techniques and innovations}\label{subsec:proof_techniques}

We prove the two main Theorems \ref{thm:main_opt} and \ref{thm:main_feasibility} with similar techniques, hence for conciseness, we only give here the main ideas used to derive lower bounds for convex optimization. Although our proof borrows its structure and techniques from \cite{marsden2022efficient}, we introduce key innovations involving adaptivity to improve the lower bounds up to the maximum quadratic memory for deterministic algorithms---up to logarithmic factors. We recall, however, that the bounds in \cite{marsden2022efficient} hold for randomized algorithms as well. In the proofs, we aim to optimize the dependence of the parameters in $d$. Constants, however, are not necessarily optimized.

\paragraph{An adaptive optimization procedure.} At the high level, we design an \emph{optimization procedure} which for any algorithm constructs a hard family of convex functions adaptively on its queries. To be precise, the procedure constructs functions from the following family of convex functions with appropriately chosen parameters $\eta, \gamma_1,\gamma_2,p_{max},l_p,\delta$:

\begin{equation}\label{eq:form_bad_functions}
    F_{\mb A, \mb v} (\mb x) =  \max\left\{\|\mb A\mb x\|_\infty - \eta,\eta \mb v_0^\top \mb x, \eta\left(\max_{p\leq p_{max},l\leq l_p} \mb v_{p,l}^\top \mb x - p\gamma_1 - l\gamma_2\right)\right\}.
\end{equation}
 We take $\mb A \sim \mathcal U(\{\pm 1\}^{n \times d})$ and $\mb v_0 \sim \mathcal U( \mathcal D_{\delta})$ uniformly sampled in the beginning, where $\mathcal D_{\delta}\subset \mathcal S^{d-1}$ is a (finite) discretization of the sphere. The first term $\|\mb A\mb x\|_\infty - \eta$ acts as a barrier term: in order to observe subgradients from the other terms, one needs the query $\mb x$ to satisfy $\|\mb A\mb x\|_\infty \leq 2\eta$. These are called \emph{informative queries} as introduced in \cite{marsden2022efficient}. Hence, informative queries must lie approximately in the orthogonal space to the lines of $\mb A$. The second term $\eta\mb v_0^\top \mb x$ is used to ensure that solutions with low objective (in particular with objective at most $\eta\gamma_1/2$) have norm bounded away from $0$. As a result, these informative queries, once renormalized, will still belong approximately to the nullspace of $\mb A$ denoted $Ker(\mb A)$.
 
 The adaptivity to the algorithm is captured in the third term, which is constructed along the optimization process. This construction proceeds by periods $p=1,2,\ldots,p_{max}$ designed so that during each period $p$, the algorithm is forced to visit a subspace of $Ker(\mb A)$ of dimension $k$. To do so, we iteratively construct vectors $\mb v_{p,1},\ldots \mb v_{p,l_p}$ as follows. Suppose that at the beginning of step $t$ of period $p$, one has defined vectors $\mb v_{p,1},\ldots,\mb v_{p,l}$.
 \begin{itemize}
     \item The procedure first evaluates the explored subspace of the algorithm during this period. In practice, the procedure keeps in memory \emph{exploratory} queries $\mb x_{i_{p,1}},\ldots, \mb x_{i_{p,r}} $ during period $p$ up to time $t$. The exploratory subspace is then $Span(\mb x_{i_{p,1}},\ldots, \mb x_{i_{p,r}})$.
     \item If a query with a sufficiently low objective is queried, we sample a new vector $\mb v_{p,l+1}$ which is approximately orthogonal to the exploratory subspace. The corresponding new term in the objective is $\mb v_{p,l+1}^\top \mb x - p\gamma_1 - (l+1)\gamma_2$.
 \end{itemize}
 Once this new term is added to the objective, the algorithm is constrained to make queries with an additional component along the direction $-\mb v_{p,l+1}$. Since this vector is approximately orthogonal to all previous queries, this forces the algorithm to query vectors linearly independent from all previous queries in period $p$. The period then ends once the dimension of the exploratory subspace reaches $k$, having defined $l_p$ vectors $\mb v_{p,1},\ldots,\mb v_{p,l_p}$. As discussed above, the exploratory subspace must increase dimension for any additional such vector. Thus, after $l_p\leq k$ vectors, period $p$ ends.

The constructed family of convex functions in Eq~\eqref{eq:form_bad_functions} is similar to the family described in Eq~\eqref{eq:previous_form} that were considered in \cite{marsden2022efficient}. However, by sampling the vectors $\mb v_{p,l}$ adaptively, the \textit{optimization procedure} is able to fit in more terms, thereby providing a significant improvement in the lower bounds.

\comment{
 The constructed family of convex functions is similar to the family considered in \cite{marsden2022efficient},
 \begin{equation}\label{eq:previous_form}
     \max\left\{\|\mb A\mb x\|_\infty - \eta_0, \eta_1\left(\max_{i\leq N} \mb v_{i}^\top \mb x - i\gamma\right)\right\},
 \end{equation}
 which combines the wall term $\|\mb A\mb x\|_\infty -\eta_0$ with a rescaled ``Nemirovski'' function $\max_{i\leq N} \mb v_{i}^\top \mb x - i\gamma$, where the vectors $\mb v_1,\ldots,\mb v_N$ are sampled independently from each other and from the algorithm's queries. The Nemirovski function was used in previous works \cite{nemirovski1994parallel,balkanski2018parallelization,bubeck2019complexity} to obtain lower bounds in parallel convex optimization. While the form of our constructed functions \eqref{eq:form_bad_functions} is similar, by sampling the vectors $\mb v_{p,l}$ adaptively, the \textit{optimization procedure} is able to fit in more terms, thereby providing a significant improvement in the lower bounds.
 }

 \paragraph{Benefits of adaptivity.} We now expand on how the adaptive terms allow improving the lower bound of \cite{marsden2022efficient} to match the quadratic upper bound of cutting plane methods. The limitation in the functions of the form Eq~\eqref{eq:previous_form} comes from the fact that the offset in the Nemirovski function is $\gamma=\Omega(\sqrt{k\log d/d})$. This offset is necessary to ensure that with high probability, 1. subgradients $\mb v_1,\ldots,\mb v_N$ are discovered exactly in this order and 2. that any query which visits a new vector $\mb v_i$ must not lie in the subspace formed by the last $k$ last informative vectors. Indeed, for the last claim, from high-dimensional concentration, for a random unit vector $\mb v$ and a $k$ dimensional subspace $E$, $\|P_E(\mb v)\|=\Theta(\sqrt{k\log d/d})$. This offset is not necessary for our procedure, since by construction, at each period, a $k$-dimensional subspace of $Ker(\mb A)$ is forced to be explored. As a result, we can take $\gamma_1=\Theta(\sqrt{\log d/d})$. This offset is still necessary to ensure that vectors $\mb v_{p,l}$ are discovered in their order of construction (lexicographic order on $(p,l)$) with high probability.

\paragraph{An Orthogonal Vector Game with Hints.}

The next step of the proof involves linking the optimization of the above-mentioned constructed functions with an Orthogonal Vector Game with Hints. Similarly to the game introduced by \cite{marsden2022efficient}, the goal for the player is to find $k$ linearly-independent vectors approximatively in $Ker(\mb A)$. To do so, the player can access an $M$-bit message $\mathsf{Message}$ and make $m$ queries, where $M=ckd$ for a small constant $c>0$. In the game introduced by \cite{marsden2022efficient}, the queries are lines of the matrix $\mb A$. They then show that to find $k$ dimensions of $\mb A$, where $\mb A$ is taken uniformly at random $\mb A\sim\{\pm 1\}^{d/2\times d}$, (nearly) all the lines of $\mb A$ must be queried. The argument is information-theoretic: each new dimension of $Ker(\mb A)$ must be (approximately) orthogonal to all lines of $\mb A$. Hence, this provides additional mutual information $O(k)$ for every line of $\mb A$, including the $d/2-m$ lines that were not observed through queries. This extra information on $\mb A$ can only be explained by the message, which has $M$ bits. Hence, $M \geq O(k)(d/2-m)$. Setting the constant $c>0$ appropriately, this shows that $m=\Omega(d)$.

In our case, the optimization procedure ensures that the algorithm needs to explore $k$ dimensions of $Ker(\mb A)$ in each period. However, each query yields a response from the optimization oracle that can either be a line of $\mb A$ (corresponding to the term $\|\mb A\mb x\|_\infty -\eta$ of Eq~\eqref{eq:form_bad_functions}) or $\mb v_0$ (term $\eta\mb v_0^\top\mb x$ of Eq~\eqref{eq:form_bad_functions}), or previously defined vectors $\mb v_{p,'l,'}$. Now since the vectors $\mb v_{p',l'}$ have been constructed adaptively on the queries of the algorithm, which themselves may depend on lines of $\mb A$, during a period $p$, responses $\mb v_{p',l'}$ for $p'<p$ are a source of information leakage for $\mb A$ from previous periods. As a result, the query lower bound on the game introduced by \cite{marsden2022efficient} is not sufficient for our purposes. Instead, we introduce an Orthogonal Vector Game with Hints, where hints correspond exactly to these vectors $\mb v_{p',l'}$ from previous periods. Informally, the game corresponds to a simulation of one of the periods of the optimization procedure: for each query $\mb x$, the oracle returns the subgradient that would have been returned in the optimization procedure, up to minor details.

\paragraph{Bounding the information leakage.}

Once the link is settled, the goal is to prove lower bounds on the number of queries needed to solve the Orthogonal Vector Game with Hints. The main difficulty is to bound the information leakage from these hints. We recall that hints are of the form $\mb v_{p',l'}$, which have been constructed adaptively on the queries of the algorithm during period $p'$. In particular, these contain information on the lines of $\mb A$ queried during period $p'<p$, which may be complementary with those queried during period $p$. If this total information leakage through the hints yields a mutual information with $Ker(\mb A)$ significantly higher than that of the $M$ bits of $\mathsf{Message}$, obtained lower bounds cannot possibly reflect any trade-off with memory constraints. It is therefore essential to obtain information leakage at most $\Ocal(M)=\tilde \Ocal(dk)$.

To solve this issue, we introduce a discretization $\Dcal_\delta$ of the unit sphere where the vectors $\mb v_{p,l}$ take value. Next, we show that each individual vector $\mb v_{p',l'}$ from previous periods can only provide information $\tilde\Ocal(k)$ on the matrix $\mb A$. To have an intuition on this, note that for any (at most) $k$ vectors $\mb x_1,\ldots,\mb x_k$, the volume of the subset of the unit sphere $S^{d-1}$ of vectors approximately orthogonal to $\mb x_1,\ldots,\mb x_k$, say $S(\mb x_1,\ldots,\mb x_k) =\{\mb y\in S^{d-1}:|\mb y^\top \mb x_i|\leq d^{-3},i\leq k\}$ is $q_k=\Ocal(1/d^{3k})$. Hence, since the vector $\mb v$ is roughly taken uniformly at random within $\Dcal_\delta\cap S(\mb x_1,\ldots,\mb x_k)$, we can show that the mutual information of $\mb v$ with the initial vectors $\mb x_1,\ldots,\mb x_k$ is at most $\Ocal(-\log q_k) = \Ocal(k\log d)$. As a result, even if $m=d$, the total information leakage through the vectors $\mb v_{p',l'}$ from previous periods, is at most $\Ocal(kd\log d)$. The formal proof involves an anti-concentration bounds on the distance of a random unit vector to a linear subspace of dimension $k$, as well as a more involved discretization procedure than the one presented above. In summary, by introducing adaptive functions through the optimization procedure, we show that the same memory-sample trade-off holds for the Orthogonal Vector Game with Hints and the game without hints introduced in \cite{marsden2022efficient}, up to logarithmic factors.

\section{Memory-constrained convex optimization}
\label{sec:optimization}

To prove our results we need to use discretizations of the unit sphere $S^{d-1}$. It will be convenient to ensure that the partitions induced by these discretizations have equal area, which can be done with the following lemma.

\begin{lemma}[\cite{feige2002optimality} Lemma 21]
\label{lemma:feige}
For any $0<\delta<\pi/2$, the sphere $S^{d-1}$ can be partitioned into $N(\delta) =(\Ocal(1)/\delta)^d$ equal volume cells, each of diameter at most $\delta$.
\end{lemma}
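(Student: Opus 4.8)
The statement bundles two requirements, \emph{small diameter} and \emph{exactly equal volume}; only the second is genuinely delicate, because tiling by congruent boxes (or Voronoi cells of a generic lattice) costs a $\sqrt d$ factor in the diameter-to-volume ratio and thereby blows the cell count past $(\Ocal(1)/\delta)^{d}$. The plan is a two-step reduction: (i) build an auxiliary partition into geometrically ``round'' cells of diameter $\le \delta/6$ whose volumes are bounded below and whose adjacency graph is connected with bounded degree; (ii) re-partition into exactly $N(\delta)$ equal-volume pieces by carving volume-$v$ chunks inside each auxiliary cell and pushing the leftover mass up a spanning tree, so that no final cell ever escapes the union of one auxiliary cell and its neighbours.

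For step (i), let $P=\{p_1,\dots,p_N\}\subseteq S^{d-1}$ be a maximal set of points at pairwise (geodesic) distance $\ge \delta/12$; by maximality it is a $\delta/12$-net, and the caps of angular radius $\delta/24$ about its points are disjoint. Let $A_1,\dots,A_N$ be the (tie-broken) Voronoi partition of $S^{d-1}$ induced by $P$. Then each $A_i$ lies in the cap of radius $\delta/12$ about $p_i$, so $\mathrm{diam}(A_i)\le \delta/6$; each $A_i$ contains the cap of radius $\delta/24$ about $p_i$, so $\mu(A_i)\ge \tau$, where $\tau$ is the normalized measure of such a cap; and a disjoint-cap volume count gives $N\le 1/\tau$ and $1/\tau\le(\Ocal(1)/\delta)^{d}$ (using $\sin\theta\ge 2\theta/\pi$ on $[0,\pi/2]$ and $\mathrm{Area}(S^{d-2})/\mathrm{Area}(S^{d-1})=\Theta(\sqrt d)$, the polynomial factors being absorbed into the exponential base). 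Let $H$ be the graph on $\{1,\dots,N\}$ with $i\sim j$ whenever $\mathrm{dist}(A_i,A_j)\le \delta/6$; it contains the connected Voronoi adjacency graph, so $H$ is connected, and another cap-packing count bounds its maximum degree by $\Delta\le(\Ocal(1))^{d}$. Fix a spanning tree $\Tcal$ of $H$ and root it. The thresholds are chosen so that $i\sim j$ forces $\mathrm{diam}(A_i\cup A_j)\le 3\cdot\delta/6=\delta/2$, and more generally $\mathrm{diam}\big(A_i\cup\bigcup_{c\sim i}A_c\big)\le 5\delta/6<\delta$.

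For step (ii), pick an integer $M\ge \Delta/\tau$ and set $v:=1/M$, so that $\Delta v\le \tau\le \mu(A_i)$ for every $i$ and $M=\Theta(\Delta/\tau)\le(\Ocal(1)/\delta)^{d}$. Process the nodes of $\Tcal$ from the leaves upward. At a non-root node $i$ that has received from each child $c$ a ``leftover region'' $R_c\subseteq A_c$ with $\mu(R_c)<v$, put $m_{\mathrm{in}}=\sum_c\mu(R_c)<\Delta v\le \mu(A_i)$; first form $\lceil m_{\mathrm{in}}/v\rceil$ ``absorbing'' cells of measure exactly $v$, each being incoming leftover material topped up from inside $A_i$ (this consumes strictly less than $v$ of $\mu(A_i)$), then carve the remaining measure of $A_i$ into $\lfloor\,\cdot\,/v\rfloor$ further cells of measure $v$ plus one leftover region $R_i\subseteq A_i$ with $\mu(R_i)<v$, which is handed to the parent. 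Such carvings exist since the uniform measure on $S^{d-1}$ is non-atomic (sweep a hyperplane). Every cell produced at node $i$ is contained in $A_i\cup\bigcup_{c\sim i}A_c$, hence has diameter $\le\delta$, and — crucially — the leftover pushed upward always stays inside a single auxiliary cell, which is exactly the invariant a naive left-to-right accumulation lacks. At the root, $\mu(A_{\mathrm{root}})+m_{\mathrm{in}}$ is a multiple of $v$ (the total measure is $1=Mv$ and every finalized cell has measure $v$), so it is carved into an integer number of $v$-cells with empty leftover. The output is a partition of $S^{d-1}$ into $M=1/v\le(\Ocal(1)/\delta)^{d}$ cells of equal measure $v$ and diameter $\le\delta$.

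\textbf{Main obstacle.} All of step (i) is routine covering theory; the difficulty is arranging the equalization in step (ii) so that forcing equal volumes does not inflate diameters or cell counts. This is what dictates the use of round auxiliary cells rather than boxes, a uniform lower bound $\tau$ on their volumes so that $v$ can be chosen below every $\mu(A_i)$, a bounded-degree adjacency graph so that a node absorbs only $(\Ocal(1))^{d}$ incoming leftovers within $v$ spare mass, and the ``leftover stays in one cell'' invariant. I expect the most error-prone part to write out carefully is the bookkeeping of the tree recursion: verifying that it produces an honest partition (every point finalized exactly once) and that the final count $M$ indeed lands in $(\Ocal(1)/\delta)^{d}$.
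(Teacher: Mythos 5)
Your proof is correct, and a sanity check of the bookkeeping confirms it. The net/Voronoi step is routine and your estimates for $\tau$, $N$, and $\Delta$ all come out to $(\Ocal(1)/\delta)^{d}$ as claimed; the tree-based rebalancing is the real content, and the invariants you maintain — each finalized cell lies in $A_i\cup\bigcup_{c\sim i}A_c$, which has geodesic (hence chordal) diameter $\le 5\delta/6$; each leftover $R_i$ lies in a single $A_i$ and has measure $<v$; the top-up mass taken from $A_i$ is $<v\le\tau\le\mu(A_i)$ — are all maintained, and the root's residual is forced to be a multiple of $v$ by global mass conservation. The choice $M=\lceil\Delta/\tau\rceil$ keeps the final cell count in $(\Ocal(1)/\delta)^{d}$.

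A clarifying remark on the comparison: the paper does not prove this lemma at all — it is imported verbatim from Feige and Schechtman, so there is no in-paper proof to measure yours against. Relative to Feige–Schechtman's own (terse) argument, your write-up is a genuinely different and more explicit route: they also begin from a separated net and Voronoi cells, but your spanning-tree recursion for exact volume equalization — with the observation that a naive left-to-right accumulation fails because leftovers would spread across several auxiliary cells — is your own device, and it is exactly the part that is both delicate and unargued in the cited source. What your version buys is a fully verifiable construction at the cost of a larger (but still $(\Ocal(1)/\delta)^{d}$) cell count from the factor $\Delta\le(\Ocal(1))^{d}$; Feige–Schechtman state a cleaner constant $(4/\gamma)^{d}$ without this overhead. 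Two cosmetic points worth being explicit about if this were to appear in print: (i) you measure distances geodesically, whereas the lemma (and the paper's use of it) concerns chordal diameter — this is fine, since chordal $\le$ geodesic, but should be said; (ii) "caps of angular radius $\delta/24$ are disjoint" should read "have disjoint interiors," which is all the packing count needs.
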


We denote by $\Vcal_\delta = \{V_i(\delta), i\in[N(\delta)]\}$ the corresponding partition, and consider a set of representatives $\Dcal_\delta = \{\mb b_i(\delta), i\in [N(\delta)]\}\subset S^{d-1}$ such that for all $i\in [N(\delta)]$, $\mb b_i(\delta)\in V_i(\delta)$. With these notations we can define the discretization function $\phi_\delta$ as follows
\begin{equation*}
    \phi_\delta(\mb x) = \mb b_i(\delta) ,\quad \mb x\in V_i(\delta).
\end{equation*}

\subsection{Definition of the difficult class of optimization problems}

In this section we present the class of functions that we use to prove our lower bounds. Throughout the paper, we pose $n=\lceil d/4 \rceil$. We first define some useful functions. For any $\mb A\in \Rbb^{n\times d}$, we define $\mb g_{\mb A}$ as follows
\begin{equation*}
    \mb g_{\mb A}(\mb x) = \mb a_{i_{\min}},\qquad i_{\min} = \min\{i\in[n], |\mb a_i^\top \mb x|= \|\mb A\mb x\|_\infty\}.
\end{equation*}
With this function we can define a subgradient function for $\mb x\mapsto \|\mb A\mb x\|_\infty$,
\begin{equation*}
    \tilde{\mb g}_{\mb A}(\mb x) = \epsilon \mb g_{\mb A}(\mb x),\qquad \epsilon = sign(\mb g_{\mb A}(\mb x)^\top \mb x).
\end{equation*}

We are now ready to introduce the class of functions which we use for our lower bounds. These are of the following form.
\begin{equation*}
    F_{\mb A, \mb v} (\mb x) =  \max\left\{\|\mb A\mb x\|_\infty - \eta,\eta \mb v_0^\top \mb x, \eta\left(\max_{p\leq p_{max}}\max_{l\leq l_p} \mb v_{p,l}^\top \mb x - p\gamma_1 - l\gamma_2\right)\right\}.
\end{equation*}
Here, $\mb A\in\{\pm 1\}^{n\times d}$ is a matrix. Also, $\mb v_0$ and the terms $\mb v_{p,l}$ are vectors in $\Rbb^d$. More precisely, these vectors will lie in the discretization $\Dcal_\delta$ for $\delta=1/d^3$. We postpone the definition of $p_{max}$ and $l_p$ for $p\leq p_{max}$. Last, we use the following choice for the remaining parameters: $\eta= 2/d^3$, $\gamma_1 = 12\sqrt{\frac{\log d}{ d}}$  and $\gamma_2 = \frac{\gamma_1}{4d}$. For convenience, we also define the functions
\begin{align*}
    F_{\mb A} (\mb x) &= \max\{\|\mb A\mb x\|_\infty - \eta ,\eta \mb v_0^\top \mb x\} \\
    F_{\mb A, \mb v, p ,l} (\mb x) &=  \max\left\{\|\mb A\mb x\|_\infty - \eta ,\eta \mb v_0^\top \mb x, \eta\left(\max_{(p',l')\leq_{lex} (p,l), l'\leq l_{p
    '}} \mb v_{p',l'}^\top \mb x - p'\gamma_1 - l'\gamma_2\right)  \right\},
\end{align*}
with the convention $F_{\mb A, \mb v, 1,0} = F_{\mb A}$. The functions $F_{\mb A, \mb v, p, l}$ will encapsulate the current state of the function to be minimized: it will be updated adaptively on the queries of the algorithm. We also define a subgradient function for $ F_{\mb A, \mb v, p ,l}$ by first favoring lines of $\mb A$, then vectors from $\mb v$ in case of ties, as follows,
\begin{equation*}
    \partial F_{\mb A, \mb v, p,l}(\mb x) = \begin{cases}
        \tilde{\mb g}_{\mb A}(\mb x_t) &\text{if }F_{\mb A, \mb v, l ,p}(\mb x) = \|\mb A\mb x\|_\infty-\eta,\\
        \eta \mb v_0 &\text{otherwise and if }F_{\mb A, \mb v, l ,p}(\mb x) = \eta \mb v_0^\top \mb x,\\
        \eta \mb v_{p,l} &\text{otherwise and if } (p,l) = \argmax_{(p',l')\leq_{lex} (p,l)} \mb v_{p',l'}^\top \mb x - p'\gamma_1 - l'\gamma_2.
    \end{cases}
\end{equation*}
In the last case, ties are broken by lexicographic order. We define $\partial F_{\mb A, \mb v} = \partial F_{\mb A, \mb v, p_{max}, l_{p_{max}}}$ similarly.

We consider a so-called \emph{optimization procedure}, which will construct the sequence of vectors $\mb v=(\mb v_{p,l})$ adaptively on the responses of the considered algorithm. Throughout this section, we use a parameter $1 \leq k \leq d/3-1$ --- which will be taken as $k=\tilde\Theta(M/d)$ where $M$ is the memory of the algorithm --- and let $p_{max}$ be the largest number which satisfies the following constraint.
\begin{equation}\label{eq:definition_p_max}
    p_{max} \leq \min\{(c_{d,1}d-1)/k,\, c_{d,2}(d/k)^{1/3} -1\},
\end{equation}
where $c_{d,1} = 1/(90^2\log^2 d)$ and $c_{d,2} = 1/(81\log^{2/3}d)$.

\begin{proc}[ht]
        
\caption{The optimization procedure for algorithm $alg$}\label{proc:optimization}

\setcounter{AlgoLine}{0}
\SetAlgoLined
\LinesNumbered

\everypar={\nl}

\hrule height\algoheightrule\kern3pt\relax
\KwIn{$d$, $k$, $p_{max}$, algorithm $alg$}
\vspace{5pt}

{\nonl \textbf{Part 1:}} Procedure to adaptively construct $\mb v$\;

Sample $\mb A\sim\Ucal(\{\pm 1\}^{n\times d})$ and $\mb v_0\sim\Ucal(\Dcal_\delta)$.\;

Initialize the memory of $alg$ to $\mb 0$ and let $p=1$, $r=l=0$.\;

\For{$t\geq 1$}{
    \lIf{$t>d^2$}{
        Set $(P,L)=(p,l)$ and break the \textbf{for} loop
    }

    Run $alg$ with current memory to obtain a query $\mb x_t$\;
        
    \uIf(\tcp*[h]{Non-informative query}){$F_{\mb A}(\mb x)> \eta$}{
        \Return $(\|\mb A\mb x_t\|_\infty -\eta,\tilde{\mb g}_{\mb A}(\mb x_t))$ as response to $alg$.
    }
    \uElse(\tcp*[h]{Informative query}){
        \uIf{$r\leq k-1$ and $F_{\mb A,\mb v, p, l}(\mb x_t) \leq -\eta\gamma_1/2$ and $\|P_{Span(\mb x_{i_{p,r'}},r'\leq r)^\perp}(\mb x_t)\| / \|\mb x_t\| \geq \frac{\gamma_2}{4}$}{
            Set $i_{p,r+1}=t$ and increment $r\leftarrow r+1$.
        }
        \uIf{$F_{\mb A, \mb v, p,l}(\mb x_t) < -\eta(p\gamma_1 + l \gamma_2+\gamma_2/2)$ and $r<k$}{
            Compute Gram-Schmidt decomposition $\mb b_{p,1},\ldots,\mb b_{p,r}$ of $\mb x_{i_{p,1}},\ldots, \mb x_{i_{p,r}}$.\;


            Sample $\mb y_{p,l+1}$ uniformly on $\mathcal S^{d-1} \cap \{\mb z\in \Rbb^d : |\mb b_{p,r'}^\top \mb z| \leq d^{-3},\forall r'\leq r\}$.\;

            Define $\mb v_{p,l+1}=\phi_\delta(\mb y_{p,l+1})$ and increment $l\leftarrow l+1$.
        }
        \uElseIf{$F_{\mb A, \mb v, p,l}(\mb x_t) < -\eta(p\gamma_1 + l \gamma_2+\gamma_2/2)$ and $p+1\leq p_{max}$}{
            Set $l_p = l$ and $i_{p+1,1}=t$.\;

            Compute the Gram-Schmidt decomposition $\mb b_{p+1,1}$ of $\mb x_{i_{p+1,1}}$.\;


            Sample $\mb y_{p+1,1}$ uniformly on $\mathcal S^{d-1} \cap \{ \mb z\in \Rbb^d : |\mb b_{p+1,1}^\top \mb z| \leq d^{-3}\}$.\;

            Define $\mb v_{p+1,1}=\phi_\delta(\mb y_{p+1,1})$, increment $p\leftarrow p+1$ and reset $l=r=1$.
        }
        \uElseIf(\tcp*[h]{End of the construction}){$F_{\mb A, \mb v, p,l}(\mb x_t) < -\eta(p\gamma_1 + l \gamma_2+\gamma_2/2)$}{
            Set $l_{p_{max}} = l$, $i_{p_{max}+1,1}=t$.\;
            
            Set $(P,L)=(p_{max},l)$ and break the \textbf{for} loop.
        }
        \Return $(F_{\mb A, \mb v, p, l}(\mb x_t),\mb \partial F_{\mb A, \mb v, p, l}(\mb x_t))$ as response to $alg$.
    }
}

\vspace{5pt}

{\nonl \textbf{Part 2:}} Procedure once $\mb v$, $P$, $L$ are constructed\;

\lFor{$t'\geq t$}{
    \Return $(F_{\mb A, \mb v, P, L}(\mb x_{t'}),\partial F_{\mb A, \mb v, P, L}(\mb x_{t'}) )$ as response to the query $\mb x_{t'}$
}

\hrule height\algoheightrule\kern3pt\relax
\end{proc}

The optimization procedure is described in Procedure \ref{proc:optimization}.
First, we sample independently $\mb A\sim\Ucal(\{\pm 1\}^{n\times d})$ and $\mb v_0\sim\Ucal(\Dcal_\delta)$. The matrix $\mb A$ and vector $\mb v_0$ are then fixed for the rest of the learning procedure. Next, we describe the adaptive procedure to return subgradients. It proceeds by periods, until $p_{max}$ periods are completed, unless the total number of iterations reaches $d^2$, in which case the construction procedure ends as well. First, we say that a query is informative if $F_{\mb A}(\mb x) \leq \eta$. The procedure proceeds by periods $p\in [p_{max}]$ and in each period constructs the vectors $\mb v_{p,1},\ldots,\mb v_{p,k}$ iteratively. We are now ready to describe the procedure at time $t$ when the new query $\mb x_t$ is queried. Let $p\geq 1$ be the index of the current period and $\mb v_{p,1},\ldots, \mb v_{p,l}$ be the vectors of this period constructed so far: the first period is $p=1$ and we allow $l=0$ here. As will be seen in the construction, we always have $l\geq 1$ except at the very beginning for which we use the notation $F_{\mb A, \mb v, 1, 0}= F_{\mb A}$. Together with these vectors, the oracle keeps in memory indices $i_{p,1},\ldots,i_{p,r}$ with $r\leq k$ of \emph{exploratory} queries. The constructed vectors from previous periods are $\mb v_{p',l'}$ for $p'<p$ and $l'\leq l_{p'}$.
\begin{enumerate}
    \item If $\mb x_t$ is not informative, i.e. $F_{\mb A}(\mb x) > \eta$, then procedure returns $(\|\mb A\mb x_t\|_\infty-\eta,\tilde{\mb g}_{\mb A}(\mb x_t))$.
    \item Otherwise, we follow the next steps. If $r\leq k-1$ and
    \begin{equation*}
        F_{\mb A, \mb v, p, l}(\mb x_t) \leq -\frac{\eta\gamma_1}{2} \qquad \text{and}\qquad \frac{\|P_{Span(\mb x_{i_{p,r'}},r'\leq r)^\perp}(\mb x_t)\|}{\|\mb x_t\|} \geq \frac{\gamma_2}{4},
    \end{equation*}
    we set $i_{p,r+1}=t$ and increment $r$. In this case, we say that $\mb x_t$ is \emph{exploratory}. Next,
    \begin{enumerate}
       \item Recalling that $F_{\mb A,\mb v,p,l}$ is constructed so far, if $F_{\mb A,\mb v,p,l}(\mb x_t) \geq \eta(-p\gamma_1 - l \gamma_2-\gamma_2/2)$, we do not do anything.
    \item Otherwise, and if $r<k$, let $\mb b_{p,1},\ldots,\mb b_{p,r}$ be the result from the Gram-Schmidt decomposition of $\mb x_{i_{p,1}},\ldots, \mb x_{i_{p,r}}$. Then, let $\mb y_{p,l+1}$ be a sample of the distribution obtained by the uniform distribution $\mb y_{p,l+1} \sim \Ucal(S^{d-1}\cap \left\{\mb z\in \Rbb^d : |\mb b_{p,r'}^\top \mb z| \leq \frac{1}{d^3},\forall r'\leq r\right\})$. We then pose $\mb v_{p,l+1} = \phi_\delta(\mb y_{p,l+1})$. Having defined this new vector, we increment $l$.
    \item Otherwise, if $r=k$, this ends period $p$. We write the total number of vectors defined during period $p$ as $l_p := l$. If $p+1\leq p_{max}$, period $p+1$ starts from $t=i_{p+1,1}$. Similarly to above, let $\mb b_{p+1,1}$ be the result of the Gram-Schmidt procedure on $\mb x_{p+1,1}$, and we sample $\mb y_{p+1,1}$ according to a uniform distribution $\mb y_{p+1,1}\sim\Ucal(S^{d-1}\cap \left\{ \mb z \in \Rbb^d :  |\mb b_{p+1,1}^\top \mb z| \leq \frac{1}{d^3}\right\})$. Then, we pose $\mb v_{p+1,1} = \phi_\delta(\mb y_{p+1,1})$. We can then increment $p$ and reset $l=r=1$.
    \end{enumerate}
    After these steps, with the current values of $p$ and $l$, we return $(F_{\mb A,\mb v,p,l}(\mb x_t), \partial F_{\mb A, \mb v, l, p}(\mb x_t)) $.
\end{enumerate}
If we finished the last period $p=p_{max}$, or if we reached a total number of iterations $d^2$, the construction phase of the function ends. In both cases, let us denote by $P,L$ the last defined period and vector $\mb v_{P,L}$. In particular, we have $p\leq p_{max}$ From now on, the final function to optimize is $F_{\mb A, \mb v,P,L}$ and the oracle is a standard first-order oracle for this function, using the subgradient function $\partial F_{\mb A, \mb v,P,L}$.

We will relate this procedure to the standard convex optimization problem and prove query lower bounds under memory constraints for this procedure. Before doing so, we formally define what we mean by solving this optimization procedure.

\begin{definition}
Let $alg$ be an algorithm for convex optimization. We say that an algorithm $alg$ is successful for the optimization procedure with probability $q\in[0,1]$ and accuracy $\epsilon>0$, if taking $\mb A\sim\Ucal(\{\pm 1\}^{n\times d})$, running $alg$ with the responses given by the procedure, and denoting by $\mb x^\star(alg)$ the final answer returned by $alg$, with probability at least $q$ over the randomness of $\mb A$ and of the procedure, one has
\begin{equation*}
    F_{\mb A, \mb v, P, L}(\mb x^\star(alg)) \leq \min_{\mb x\in B_d(0,1)} F_{\mb A, \mb v, P, L}(\mb x) +\epsilon.
\end{equation*}
\end{definition}

\subsection{Properties and validity of the optimization procedure}

We begin this section with a simple lemma showing that during each period $p$ at most $l_p\leq k$ vectors $\mb v_{p,1},\ldots,\mb v_{p,l_p}$ are constructed. 

\begin{lemma}\label{lemma:bound_on_l}
    At any time of the construction procedure, $l\leq r$. In particular, since $r\leq k$, we have $l_p\leq k$ for all periods $p\leq p_{max}$.
\end{lemma}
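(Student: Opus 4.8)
The plan is to prove this by induction on the steps of the construction procedure, tracking the two counters $r$ (number of exploratory queries recorded in the current period) and $l$ (number of vectors $\mb v_{p,\cdot}$ constructed in the current period). The key observation is that the only place where $l$ is incremented is in step (b) of case 2, and that step is gated by the condition $r < k$ together with the fact that, before incrementing $l$, we require that $\mb x_t$ was just declared exploratory, i.e. $r$ was incremented at the same step. So every increment of $l$ is immediately preceded by an increment of $r$ within the same iteration.

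Concretely, I would argue as follows. At the start of a period, both $r$ and $l$ are reset: at the very first period $p=1$ we have $r=l=0$, and whenever a new period $p+1$ begins we set $l=r=1$ (one exploratory query $\mb x_{i_{p+1,1}}$ is recorded and one vector $\mb v_{p+1,1}$ is constructed simultaneously), so the base case $l\le r$ holds. For the inductive step, consider a single iteration $t$ within a period and suppose $l\le r$ at its start. The counter $r$ can only increase (by one, when $\mb x_t$ is exploratory), and $l$ can only increase in sub-case (b), which requires both that $\mb x_t$ is exploratory — hence $r$ was just incremented — and that $r<k$ held before that increment. Thus, if $l$ is incremented at iteration $t$, then $r$ is also incremented at iteration $t$, so the inequality $l\le r$ is preserved; and if $l$ is not incremented, it trivially remains true since $r$ does not decrease. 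Hence $l\le r$ holds at all times, and since the exploratory-query counter never exceeds $k$ (the guard $r\le k-1$ prevents recording a $(k+1)$-th exploratory query, and period $p$ ends once $r=k$), we get $l\le k$ throughout period $p$, so in particular $l_p = l \le k$.

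I expect the only subtlety — and the one point I would be careful to spell out — is the bookkeeping at period boundaries: one must check that when period $p+1$ is launched (the \texttt{ElseIf} branch that sets $i_{p+1,1}=t$ and resets $l=r=1$), the single query $\mb x_{i_{p+1,1}}$ is indeed counted as exploratory so that resetting both to $1$ is consistent with $l\le r$, and that the previous period closed with $l_p = l$ satisfying $l_p\le k$. This is immediate from the procedure's definition but is the kind of edge case worth stating explicitly. No concentration or probabilistic argument is needed here; the lemma is purely a combinatorial invariant of the deterministic control flow of Procedure~\ref{proc:optimization}.
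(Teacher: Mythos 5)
Your proposal treats the lemma as a pure control-flow invariant, but that is not how Procedure~\ref{proc:optimization} works, and it misses the entire mathematical content of the paper's proof. Look carefully at the pseudocode: the check that records a new exploratory index (incrementing $r$) and the check that constructs a new vector $\mb v_{p,l+1}$ (incrementing $l$) are \emph{two independent} \texttt{\textbackslash uIf} blocks, not nested. The exploratory check requires three conditions: $r\le k-1$, the function-value bound $F_{\mb A,\mb v,p,l}(\mb x_t)\le -\eta\gamma_1/2$, \emph{and} the projection condition $\|P_{Span(\mb x_{i_{p,r'}},r'\le r)^\perp}(\mb x_t)\|/\|\mb x_t\|\ge \gamma_2/4$. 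The $l$-increment check requires only the sharper function-value bound $F_{\mb A,\mb v,p,l}(\mb x_t)< -\eta(p\gamma_1 + l\gamma_2 + \gamma_2/2)$ and $r<k$. Of the three exploratory conditions, the first two are implied by the $l$-increment conditions, but the third (the projection condition) is not. So a priori the procedure \emph{could} increment $l$ at a step where $\mb x_t$ lies entirely in the span of the already-recorded exploratory queries and $r$ stays fixed — which is exactly the scenario your argument rules out by assumption rather than by proof.

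The real work of the paper's proof is precisely to exclude this scenario via a quantitative argument: if $l$ is incremented at time $t_{p,l}$ and no new exploratory index appeared since $t_{p,l-1}$, then the function-value condition forces $|\mb v_{p,l-1}^\top \mb x_{t_{p,l}}| > \gamma_2/2$, and since $\mb y_{p,l-1}$ was sampled to be nearly orthogonal to $Span(\mb x_{i_{p,r'}},r'\le r)$ (each $|\alpha_{r'}|\le d^{-3}$), one derives $\|P_{Span(\mb x_{i_{p,r'}},r'\le r)^\perp}(\mb x_{t_{p,l}})\|\ge \gamma_2/2 - 1/(d^2\sqrt d) - \delta \ge \gamma_2/4$, so the projection condition holds after all and $\mb x_{t_{p,l}}$ is indeed exploratory. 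Your statement ``sub-case (b) ... requires both that $\mb x_t$ is exploratory — hence $r$ was just incremented'' presupposes the conclusion of this calculation instead of proving it. The base case has the same issue: for $p=1$ the paper must verify the first query reaching scenario (2b) or (2c) is exploratory (which happens to be easy since $Span(\emptyset)^\perp = \Rbb^d$), rather than simply noting $l=r=0$ initially. So your proof is incomplete: the central step needs the anti-concentration-flavored argument, not a counting argument on the control flow.
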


\begin{proof}
    Fix a period $p$. We prove this by induction. The claim is satisfied for any $l=1$ when $p\geq 2$ since in this case, at the first time $t=i_{p,1}$ of the period $p$ we also construct the first vector $\mb v_{p,1}$. For $p=1$, note that the first informative query $t$ that falls in scenarios (2b) or (2c) is exploratory. Indeed, in these cases we have $F_{\mb A, \mb v, 1, 0}(\mb x_t)  < \eta(-\gamma_1-\gamma_2/2) \leq -\eta\gamma_1/2$, and the second criterion for an exploratory query is immediate $\|P_{Span(\mb x_{i_{1,r'}},r'\leq 0)}(\mb x_t)\|=0$ since no indices $i_{1,r'}$ have been defined yet.

    We now suppose that the claim holds for $l-1\geq 1$. Let $t_{p,l}$ be the time when $\mb v_{p,l}$ is constructed and $i_{p,1},\ldots,i_{p,r}$ the indices constructed until the beginning of iteration $t_{p,l}$. If a new index $i_{p,r'}$ was constructed in times $(t_{p,l-1},t_{p,l})$ then the claim holds immediately. Suppose that this is not the case. Note that $t_{p,l}$ falls in scenario (2b) which means in particular that
    \begin{equation*}
        \eta(\mb v_{p,l-1}^\top \mb x_{t_{p,l}} - p\gamma_1-(l-1)\gamma_2)\leq F_{\mb A, \mb v, p,l-1}(\mb x_{t_{p,l}})<\eta(-p\gamma_1-(l-1)\gamma_2-\gamma_2/2).
    \end{equation*}
    As a result,
    \begin{equation*}
        |\mb y_{p,l-1}^\top \mb x_{t_{p,l}}| \geq |\mb v_{p,l-1}^\top \mb x_{t_{p,l}}| - \delta > \frac{\gamma_2}{2} - \delta. 
    \end{equation*}
    Next, when $r\geq l-1$ is the number of indices constructed so far, we decompose $\mb y_{p,l-1} = \alpha_1\mb b_{p,1}+\ldots + \alpha_r \mb b_{p,r} + \tilde{\mb y}_{p,l-1}$ where $\tilde{\mb y}_{p,l-1}\in Span(\mb x_{i_{p,r'}},r'\leq r)^\perp$. Now by construction of $\mb y_{p,l-1}$ one has $|\alpha_{r'}|\leq d^{-3}$ for all $r'\leq r$. Thus,
    \begin{equation*}
        \| \tilde{\mb y}_{p,l-1} - \mb y_{p,l-1} \| \leq \frac{\sqrt r}{d^3} \leq \frac{1}{d^2\sqrt d}.
    \end{equation*}
    Therefore,
    \begin{equation*}
        \|P_{Span(\mb x_{i_{p,r'}},r'\leq r)^\perp}(\mb x_{t_{p,l}})\| \geq |\tilde{\mb y}_{p,l-1}^\top \mb x_{t_{p,l}}| \geq |\mb y_{p,l-1}^\top \mb x_{t_{p,l}}| - \frac{1}{d^2\sqrt d} > \frac{\gamma_2}{2} - \frac{1}{d^2\sqrt d} - \delta \geq \frac{\gamma_2}{4}.
    \end{equation*}
    As a result, $t_{p,l}$ is exploratory, hence $i_{p,r+1}=t_{p,l}$. This ends the proof of the recursion and the lemma.
\end{proof}

We recall that  $P$ and $L$ denote the last defined period and vector $\mb v_{P,L}$. From Lemma \ref{lemma:bound_on_l}, we have in particular $P\leq p_{max}$ and $L\leq k$. In the next result, we show that with high probability, the returned values and vectors returned by the above procedure are consistent with a first-order oracle for minimizing the function $F_{\mb A, \mb v, P, L}$.

\begin{proposition}\label{prop:true_first_order_procedure}
    Let $\mb A \in \{\pm 1\}^{n\times d}$ and $\mb v_0\in\Dcal_\delta$. On an event $\Ecal$ of probability at least $1-C\sqrt{\log d}/d^2$ on the randomness of the procedure for some universal constant $C>0$, all responses of the optimization procedure are consistent with a first-order oracle for the function $F_{\mb A, \mb v, P, L}$: for any $t\geq 1$, if $(f_t,\mb g_t)$ is the response of the procedure at time $t$ for query $\mb x_t$, then $f_t = F_{\mb A, \mb v, P, L}(\mb x_t)$ and $\mb g_t=\partial F_{\mb A, \mb v, P, L}(\mb x_t)$.
\end{proposition}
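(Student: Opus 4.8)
The plan is to show that, with high probability over the procedure's internal randomness, the adaptively returned pairs $(f_t, \mb g_t)$ coincide with the value and a valid subgradient of the \emph{final} function $F_{\mb A, \mb v, P, L}$, even though at time $t$ the procedure only had access to the partially-built function $F_{\mb A, \mb v, p, l}$ with $(p,l) \leq_{lex} (P,L)$. There are two things to verify: (i) the value returned at time $t$, namely $F_{\mb A, \mb v, p, l}(\mb x_t)$, equals $F_{\mb A, \mb v, P, L}(\mb x_t)$; and (ii) the subgradient returned, $\partial F_{\mb A, \mb v, p, l}(\mb x_t)$, is a valid subgradient of $F_{\mb A, \mb v, P, L}$ at $\mb x_t$ and agrees with $\partial F_{\mb A, \mb v, P, L}(\mb x_t)$. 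Both follow from a single structural fact: the terms $\mb v_{p', l'}^\top \mb x_t - p'\gamma_1 - l'\gamma_2$ that get \emph{added later} (i.e. for $(p', l') >_{lex} (p,l)$) do not exceed the current maximum $F_{\mb A, \mb v, p, l}(\mb x_t)$ at the query point $\mb x_t$. If this holds, then $F_{\mb A, \mb v, P, L}(\mb x_t) = F_{\mb A, \mb v, p, l}(\mb x_t)$, the active term is unchanged, the tie-breaking rule picks the same term, and both (i) and (ii) follow immediately, including the case of non-informative queries (where the value $\|\mb A\mb x_t\|_\infty - \eta > \eta \geq$ all the $\mb v$-terms since each such term is at most $\eta$ in absolute value by $\|\mb v_{p',l'}\| = 1$, $\|\mb x_t\|\leq 1$, minus positive offsets).

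The heart of the argument is therefore to control, for each query $\mb x_t$, the inner products $\mb v_{p',l'}^\top \mb x_t$ for vectors $\mb v_{p',l'}$ constructed \emph{strictly after} time $t$ (in lexicographic order). I would argue as follows. A vector $\mb v_{p',l'}$ is built from $\mb y_{p',l'} = \phi_\delta^{-1}$-preimage drawn uniformly on $S^{d-1}$ intersected with a slab system orthogonal (up to $d^{-3}$) to the Gram–Schmidt basis of the exploratory queries of period $p'$ up to that time; in particular it is drawn essentially uniformly on a sphere of an affine subspace of dimension $d - r \geq d - k \geq d/3$. For a \emph{fixed} vector $\mb x_t$ (fixed because it was queried before $\mb v_{p',l'}$ was sampled — here determinism of $alg$ and the fact that $\mb x_t$ was already produced matter), standard spherical concentration gives $|\mb y_{p',l'}^\top \mb x_t| \leq C'\sqrt{\log d / d}$ except with probability $O(d^{-c})$ for a suitable polynomial rate; since $\|\mb v_{p',l'} - \mb y_{p',l'}\| \leq \delta = d^{-3}$, the same bound (up to $+\delta$) holds for $\mb v_{p',l'}^\top \mb x_t$. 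Comparing with $\gamma_1 = 12\sqrt{\log d / d}$: since every later term carries an offset of at least $p'\gamma_1 + l'\gamma_2 \geq \gamma_1$ more than... — more carefully, I compare against whichever term is active in $F_{\mb A, \mb v, p, l}(\mb x_t)$. When $\mb x_t$ falls in a "do nothing'' case (scenario 2a), we have $F_{\mb A, \mb v, p, l}(\mb x_t) \geq \eta(-p\gamma_1 - l\gamma_2 - \gamma_2/2)$ by the scenario's defining inequality, whereas any later term satisfies $\eta(\mb v_{p',l'}^\top \mb x_t - p'\gamma_1 - l'\gamma_2) \leq \eta(C'\sqrt{\log d/d} + \delta - p'\gamma_1 - l'\gamma_2)$, and since $(p',l') >_{lex} (p,l)$ the offset increases by at least $\gamma_2$ (or by $\gamma_1 - k\gamma_2 \geq \gamma_1/2$ across a period boundary), which dominates $C'\sqrt{\log d/d} + \delta + \gamma_2/2$ for the chosen constants; so the later term is strictly smaller. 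When $\mb x_t$ \emph{triggers} the construction of a new vector (scenarios 2b–2d, where $F_{\mb A, \mb v, p, l}(\mb x_t) < \eta(-p\gamma_1 - l\gamma_2 - \gamma_2/2)$), a separate but similar estimate is needed to ensure the newly minted term $\mb v_{p,l+1}^\top \mb x_t - p\gamma_1 - (l+1)\gamma_2$ also does not exceed the old maximum at $\mb x_t$ — this uses that $\mb v_{p,l+1}$ is nearly orthogonal to the exploratory span containing (an approximation of) $\mb x_t$'s relevant component, combined with the offset $-(l+1)\gamma_2 < -l\gamma_2 - \gamma_2/2$.

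I would then take a union bound over all $t \leq d^2$ queries and all later vectors $\mb v_{p',l'}$ (at most $p_{max} \cdot k \leq d$ of them, and at most $d^2$ queries, so $\text{poly}(d)$ events), with each bad event having probability $O(\sqrt{\log d}/d^4)$ or so, to get the claimed event $\Ecal$ of probability $\geq 1 - C\sqrt{\log d}/d^2$. The main obstacle — and the step requiring the most care — is the anti-concentration / spherical-concentration estimate for $\mb y_{p',l'}^\top \mb x_t$ when $\mb y_{p',l'}$ is drawn on a slab-restricted sphere rather than the full sphere: one must check that conditioning on the slabs $\{|\mb b_{p',r}^\top \mb z| \leq d^{-3}\}$ only negligibly distorts the concentration (the conditioning event has probability bounded below by a product of $\Omega(1/d^3)$-type factors for $r \leq k$ slabs, which is still $d^{-O(k)}$, but the tail bound we need is only $d^{-O(1)}$, so the conditioning is harmless). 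Also slightly delicate is making sure $\mb x_t$ is genuinely independent of $\mb y_{p',l'}$: this is where we crucially use that $alg$ is \emph{deterministic} and that $\mb v_{p',l'}$ is sampled only at a time strictly after $\mb x_t$ was emitted, so conditioning on the past (including $\mb x_t$) leaves $\mb y_{p',l'}$ with its stated conditional law. With these two points handled, the rest is the bookkeeping of comparing offsets against $\gamma_1, \gamma_2, \delta, \eta$ for the specified constants, which is routine.
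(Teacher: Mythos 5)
Your overall roadmap is right and tracks the paper closely: show the later (lexicographically greater) Nemirovski terms cannot be the active ones at $\mb x_t$, split on non-informative queries vs.\ informative ones, use spherical concentration for the vectors $\mb v_{p',l'}$ built after time $t$ (valid because $alg$ is deterministic, so $\mb x_t$ is measurable w.r.t.\ the past), and union-bound over $t\leq d^2$ and the at-most-$d$ vectors. That much matches the paper.

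However, there is a genuine gap in your treatment of scenario (2a) when comparing $\mb x_t$ against \emph{later vectors in the same period} ($p'=p$, $l'>l$). You argue the offset increase of at least $\gamma_2$ ``dominates $C'\sqrt{\log d/d}+\delta+\gamma_2/2$.'' Plugging in the parameters, $\gamma_2=\gamma_1/(4d)=3\sqrt{\log d/d}/d$, so $\gamma_2/2\ll \sqrt{\log d/d}$; the concentration bound on $\mb v_{p,l'}^\top\mb x_t$ is therefore far larger than the offset increment, and the inequality you invoke is false. Concentration alone is only sufficient across period boundaries, where the offset jump is $\Theta(\gamma_1)=\Theta(\sqrt{\log d/d})$.

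The paper resolves this with a case split you don't carry out. Within period $p$, it distinguishes: (i) $\mb x_t$ exploratory, (ii) $\mb x_t$ non-exploratory with $\|P_{\mathrm{Span}(\mb b_{p,r'})^\perp}(\mb x_t)\|<\tfrac{\gamma_2}{4}\|\mb x_t\|$, and (iii) $\mb x_t$ non-exploratory with $F_{\mb A,\mb v,p,l}(\mb x_t)>-\eta\gamma_1/2$. In (i)–(ii), $\mb x_t$ lies (almost entirely) in the exploratory span; since each later $\mb y_{p,l'}$ is drawn in a slab with $|\mb b_{p,r'}^\top\mb y_{p,l'}|\leq d^{-3}$ for all $r'$ up to its construction time, one gets the deterministic bound $|\mb v_{p,l'}^\top\mb x_t|<\gamma_2/2$, which is the bound that actually suffices for the small same-period offset increments. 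In (iii), the current value $F_{\mb A,\mb v,p,l}(\mb x_t)$ is itself larger than $-\eta\gamma_1/2$, and concentration bounds every later term by $\leq -\eta\gamma_1/2$ regardless of period, so one compares against the \emph{current value}, not against the scenario-(2a) threshold $\eta(-p\gamma_1-l\gamma_2-\gamma_2/2)$. Your proposal gestures at the orthogonality argument but deploys it only for the ``newly minted'' term in scenarios (2b)–(2c); it is in fact needed for \emph{all} later same-period terms, and in scenario (2a) you need to replace the offset-vs-concentration comparison by the (i)/(ii)/(iii) split above.
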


\begin{proof}
    Consider a given iteration $t$. We aim to show that $(f_t,\mb g_t) = (F_{\mb A, \mb v, P, L}(\mb x_t),\partial F_{\mb A, \mb v, P, L}(\mb x_t))$. By construction, if $t\geq d^2$, the result is immediate. Now suppose $t\leq d^2$. We first consider the case when $\mb x_t$ is non-informative (1). By definition, $F_{\mb A}(\mb x_t) > \eta$. Since for any $(p,l)\leq_{lex}(P,L)$ one has $|\mb v_{p,l}^\top \mb x_t| \leq \|\mb v_{p,l}\|\|\mb x_t\| \leq 1$, we have
    \begin{equation*}
        F_{\mb A, \mb v, P, L}(\mb x_t) = \max\left\{ F_{\mb A}(\mb x_t), \eta\left(\max_{(p,l)\leq_{lex} (P,L)} \mb v_{p,l}^\top \mb x - p\gamma_1 - l\gamma_2\right) \right\} = F_{\mb A}(\mb x_t).
    \end{equation*}
     As a result, the response of the procedure for $\mb x_t$ is consistent with $F_{\mb A, \mb v, P, L}$ and the returned subgradient is $\tilde{\mb g}_{\mb A}(\mb x_t) = \partial F_{\mb A, \mb v, P, L}(\mb x_t)$. Therefore, it suffices to focus on informative queries (2). We will denote by $t_{p,l}$ the index of the iteration when $\mb v_{p,l}$ has been defined, for $(p,l)\leq_{lex}(P,L)$. Consider a specific couple $(p,l)\leq_{lex}(P,L)$, and let $r$ denote the number of constructed indices on or before $t_{p,l}$. Let $\mb b_{p,1},\ldots,\mb b_{p,r}$ the corresponding vectors resulting from the Gram-Schmidt procedure on $\mb x_{i_{p,1}},\ldots, \mb x_{i_{p,r}}$. Then, conditionally on the history until time $t_{p,l}$, the vector $\mb v_{p,l}$ was defined as $\mb v_{p,l} = \phi_\delta(\mb y_{p,l})$, where $\mb y_{p,l} $ is sampled as $\sim\Ucal(S^{d-1}\cap \{ \mb z\in \Rbb^d : |\mb b_{p,r'}^\top \mb z | \leq d^{-3},\forall r'\leq r \})$. As a result, from Lemma \ref{lemma:concentration_bound}, for any $t\leq t_{p,l}$, we have
    \begin{equation*}
        \Pbb\left(|\mb x_t^\top \mb v_{p,l}| \geq 3\sqrt{\frac{2\log d}{d}}  + \frac{2}{d^2} \right) \leq \frac{6\sqrt{2\log d}}{d^6}.
    \end{equation*}
    We then define the following event
    \begin{equation*}
        \Ecal = \bigcap_{(p,l)\leq_{lex}(P,L)}\bigcap_{t\leq t_{p,l}}\left\{|\mb x_t^\top \mb v_{p,l}| < 3\sqrt{\frac{2\log d}{d}} +\frac{2}{d^2} \right\},
    \end{equation*}
    which by the union bound has probability $\Pbb(\Ecal) \geq 1-3\sqrt{2\log d}/d^2$. We are now ready to show that the construction procedure is consistent with optimizing $F_{\mb A, \mb v, P, L}$ on the event $\Ecal$. As seen before, we can suppose that $\mb x_t$ is informative (2). Using the same notations as before, because $\Ecal$ is met, for any $p<p'\leq P$ and $l'\leq l_{p'}$, we have for $d\geq 2$,
    \begin{align*}
        \mb v_{p',l'}^\top \mb x_t-p'\gamma_1 - l'\gamma_2 < 3\sqrt{\frac{2\log d}{d}} + \frac{1}{d} - p\gamma_1 - \gamma_1 \leq -p\gamma_1 - \frac{\gamma_1}{2} \leq -p\gamma_1 - d\gamma_2 - \frac{\gamma_2}{2},
    \end{align*}
    where we used $3\sqrt 2 + 1\leq 6$ and $2d\gamma_2 \leq \gamma_1/2$. As a result, we obtain that
    \begin{equation*}
        \max_{(p',l')\leq_{lex}(P,L), p'>p} \mb v_{p',l'}^\top \mb x_t - p'\gamma_1 -l'\gamma_2 < -p\gamma_1 -l\gamma_2- \frac{\gamma_2}{2}.
    \end{equation*}
    Next, we consider the case of vectors $\mb v_{p,l'}$ where $l\leq l'\leq l_p$ and $t_{p,l'} \geq t$ (this also includes the case when we defined $\mb v_{p,l}$ at time $t=t_{p,l}$). We write $\tilde l$ for the smallest such index $l$. As a remark, $\tilde l\in\{l,l+1\}$. Note that if such indices exist, this means that before starting iteration $t$, the procedure has not yet reached $r=k$. There are two cases. If $\mb x_t$ was exploratory, we have $t=i_{p,r}$ hence $ \|P_{Span(\mb b_{p,r'},r'\leq r)^\top}(\mb x_t)\| = 0$. If $\mb x_t$ is not exploratory, either
    \begin{equation}\label{eq:exploratory_in_space}
        \|P_{Span(\mb b_{p,r'},r'\leq r)^\top}(\mb x_t)\| < \frac{\gamma_2}{4} \|\mb x_t\| \leq \frac{\gamma_2}{4},
    \end{equation}
    or we have $F_{\mb A, v, p, l}(\mb x_t) > -\eta\gamma_1/2$. We start with the last scenario when $F_{\mb A, v, p, l}(\mb x_t) > -\eta\gamma_1/2$. Then, on $\Ecal$, one has
    \begin{equation*}
        \max_{(p,l)<_{lex}(p',l')\leq_{lex}(P,L)} \mb v_{p',l'}^\top \mb x_t - p'\gamma_1 - l'\gamma_2 \leq -\gamma_1 + 3\sqrt{\frac{2\log d}{d}} + \frac{1}{d} \leq -\frac{\gamma_1}{2}
    \end{equation*}
    As a result, this shows that $F_{\mb A, \mb v, P, L}(\mb x_t) = F_{\mb A, \mb v, p, l}(\mb x_t)$. Hence using a first-order oracle from $F_{\mb A, \mb v, l, p}$ at $\mb x_t$ is already consistent with $F_{\mb A, \mb v, P, L}$. Thus, for whichever step (2a), (2b) or (2c) is performed, since these can only increase the knowledge on $\mb v$, the response given by the construction procedure is consistent with minimizing $F_{\mb A, \mb v}$.

    It remains to treat the first two scenarios in which we always have Eq~\eqref{eq:exploratory_in_space}. In particular, when writing $\mb x_t = \alpha_1 \mb b_{p,1} +\ldots + \alpha_r \mb b_{p,r} + \tilde{\mb x}_t$ where $\tilde{\mb x}_t = P_{Span(\mb b_{p,r'},r'\leq r)^\perp}(\mb x_t)$, we have $\|\tilde{\mb x}_t\| < \frac{\gamma_2}{4}$. As a result, for $\tilde l \leq l'\leq l_p$, one has for
    \begin{align*}
    |\mb v_{p,l'}^\top \mb x_t| \leq |\mb y_{p,l'}^\top \mb x_t| + \delta &\leq |\alpha_1||\mb y_{p,l'}^\top\mb b_{p,1}| + \ldots + |\alpha_r||\mb y_{p,l'}^\top \mb b_{p,r}| + \|\tilde{\mb x}_t\| + \delta\\
        &< \|\mb \alpha\|_1 \frac{1}{d^3} + \frac{\gamma_2}{4} +\delta\\
        &\leq \frac{\gamma_2}{4} + \frac{1}{d^2\sqrt d} + \frac{1}{d^3} \leq \frac{\gamma_2}{2},
\end{align*}
where in the last inequality we used $d\geq 3$. As a result, provided that $\tilde l$ exists, this shows that
    \begin{equation}\label{eq:estimate1}
        \max_{\tilde l \leq l'\leq l_p} \mb v_{p,l'}^\top \mb x_t - p\gamma_1 -l'\gamma_2 = \mb v_{p,\tilde l}^\top \mb x_t - p\gamma_1 -\tilde l \gamma_2 <  -p\gamma_1 - \tilde l\gamma_2 + \frac{\gamma_2}{2}.
    \end{equation}
    On the other hand, if $t=i_{p+1,1}$, the same reasoning works for $t$ viewing it as in period $p+1$, which shows for this case that
    \begin{equation}\label{eq:estimate2}
        \max_{l'\leq l_{p+1}} \mb v_{p+1,l'}^\top \mb x_t - (p+1)\gamma_1 -l'\gamma_2 = \mb v_{p+1,1}^\top \mb x_t - (p+1)\gamma_1 -\gamma_2 <  -(p+1)\gamma_1 - \frac{\gamma_2}{2}.
    \end{equation}
    As a conclusion of these estimates, we showed that on $\Ecal$, we have
    \begin{equation*}
        F_{\mb A, \mb v, P, L}(\mb x_t) = \max\left\{  F_{\mb A, \mb v, p, l}(\mb x_t), \eta(\mb v_{p',l'}^\top \mb x_t - p'\gamma_1-l'\gamma_2 )\right\}:=\tilde F_{\mb A, \mb v, t}(\mb x_t)
    \end{equation*}
    where $(p',l')$ is the very next vector that is defined after starting iteration $t$ (potentially, it has $t_{p',l'}=t$ if we defined a vector at this time). It now suffices to check that the value and vector returned by the procedure are consistent with the right-hand side. By construction, if we constructed $\mb v_{p',l'}$ at step $t$: case (2b) or (2c), then the procedure directly uses a first-order oracle for $\tilde F_{\mb A, \mb v, t}$. Further, by construction of the subgradients since they break ties lexicographically in $(p,l)$, the returned subgradient is exactly $\partial F_{\mb A, \mb v, P,L}(\mb x_t)$. It remains to check that this is the case when no vector $\mb v_{p',l'}$ is defined at step $t$: case (2a). This corresponds to the case when $F_{\mb A, \mb v, p, l}(\mb x_t) \geq \eta(-p\gamma_1 - l\gamma_2 - \gamma/2)$. Now in this case, the upper bound estimates from Eq~\eqref{eq:estimate1} and Eq~\eqref{eq:estimate2} imply that
    \begin{equation*}
        \mb v_{p',l'}^\top \mb x_t - p'\gamma_1-l'\gamma_2  < -p\gamma_1 - l\gamma_2 - \gamma/2,
    \end{equation*}
    and as a result, $F_{\mb A, \mb v, P, L}(\mb x_t) = F_{\mb A, \mb v, p, l}(\mb x_t)$. Therefore, using a first-order oracle of $F_{\mb A, \mb v, p, l}$ at $\mb x_t$ is valid, and the break of ties of the subgradient of $\tilde F_{\mb A, \mb v, t}$ is the same as the break of ties of $\partial F_{\mb A, \mb v, P, L}(\mb x_t)$. This ends the proof that on $\Ecal$ the procedure gives responses consistent with an optimization oracle for $F_{\mb A, \mb v, P, L}$ with subgradient function $\partial F_{\mb A, \mb v, P, L}$. Because $\Pbb(\Ecal)\geq 1-C\sqrt{\log d}/d^2$ for some constant $C>0$, this ends the proof of the proposition.
\end{proof}

Last, we provide an upper bound on the optimal value of $F_{\mb A, \mb v, P,L}$.

\begin{proposition}\label{prop:low_optimum}
    Let $\mb A\sim\Ucal(\{\pm 1\}^{n\times d})$ and $\mb v_0\sim \Ucal(\Dcal_\delta)$. For any algorithm $alg$ for convex optimization, let $\mb v$ be the resulting set of vectors constructed by the randomized procedure. With probability at least $1-C\sqrt{\log d}/d$ over the randomness of $\mb A$, $\mb v_0$ and $\mb v$, we have
    \begin{equation*}
        \min_{\mb x\in B_d(0,1)} F_{\mb A, \mb v}(\mb x) \leq -\frac{\eta}{40\sqrt{(kp_{max}+1)\log d}},
    \end{equation*}
    for some universal constant $C>0$.
\end{proposition}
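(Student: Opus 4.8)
The plan is to exhibit a single point $\mb x$ in the kernel of $\mb A$ that is uniformly negatively correlated with every linear form appearing in the objective. Write $N:=kp_{max}+1$, which by Lemma~\ref{lemma:bound_on_l} upper bounds the number of vectors $\mb v_0,\mb v_{p,l}$ ever constructed, set $\theta:=\tfrac1{40\sqrt{N\log d}}$, and let $E:=Ker(\mb A)$, so that $\dim E\ge d-n\ge\tfrac34 d-1$. If $\mb x\in E$ then the wall term equals $\|\mb A\mb x\|_\infty-\eta=-\eta\le-\eta\theta$, and since $p\gamma_1+l\gamma_2\ge0$ for every constructed vector it suffices to produce $\mb x\in E\cap B_d(0,1)$ with $\langle\mb v,\mb x\rangle\le-\theta$ for all $\mb v\in\{\mb v_0\}\cup\{\mb v_{p,l}:(p,l)\le_{lex}(P,L)\}$; then each of the three families of terms defining $F_{\mb A,\mb v,P,L}$ is at most $-\eta\theta$ at $\mb x$, which is the desired bound.

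The witness I would take is the normalized, $E$-projected sum of all these vectors: writing $\mb w_0:=P_E\mb v_0$, $\mb w_{p,l}:=P_E\mb v_{p,l}$ and $S:=\mb w_0+\sum_{(p,l)\le_{lex}(P,L)}\mb w_{p,l}$, set $\mb x:=-\tfrac1{2\sqrt N}S\in E$. Two high-probability facts then suffice, for each of the at most $N$ vectors $\mb w\in\{\mb w_0\}\cup\{\mb w_{p,l}\}$: (i) $\|\mb w\|^2\ge\tfrac13$; and (ii) $|\langle\mb w,S-\mb w\rangle|\le\tfrac1{10}$. Indeed (i)--(ii) give $\|S\|^2=\sum_{\mb w}\|\mb w\|^2+\sum_{\mb w}\langle\mb w,S-\mb w\rangle\le N+\tfrac N{10}\le2N$, hence $\|\mb x\|\le1$, and for any $\mb v$ with $\mb w=P_E\mb v$, using $\mb x\in E$, $\langle\mb v,\mb x\rangle=\langle\mb w,\mb x\rangle=-\tfrac1{2\sqrt N}(\|\mb w\|^2+\langle\mb w,S-\mb w\rangle)\le-\tfrac1{2\sqrt N}(\tfrac13-\tfrac1{10})\le-\tfrac1{10\sqrt N}\le-\theta$, since $40\sqrt{\log d}\ge10$.

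Fact (i) is a dimension count combined with spherical concentration: $\mb v_0$ is, up to the $\delta=d^{-3}$ discretization, uniform on $S^{d-1}$, and each $\mb y_{p,l}$ is uniform on $S^{d-1}$ intersected with a thin slab around the orthogonal complement of at most $k\le\tfrac d3-1$ directions, so at least $\dim E-k\ge\tfrac5{12}d$ directions of $E$ remain unconstrained; hence $\Ebb\|P_E(\cdot)\|^2\ge\tfrac5{12}-o(1)$, and $\|\mb w\|^2\ge\tfrac13$ with probability $1-e^{-\Omega(d)}$. Fact (ii) is the crux and is where the work lies. Ordering the constructed vectors by construction time (with $\mb v_0$ first) as $\mb w^{(1)},\dots,\mb w^{(N')}$ with $N'\le N$, I would split $S-\mb w^{(j)}=\sum_{s<j}\mb w^{(s)}+\sum_{s>j}\mb w^{(s)}$. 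For the ``earlier'' part, $\langle\mb w^{(j)},\sum_{s<j}\mb w^{(s)}\rangle=\langle\mb v^{(j)},\sum_{s<j}\mb w^{(s)}\rangle$ because $\sum_{s<j}\mb w^{(s)}\in E$, and $\sum_{s<j}\mb w^{(s)}$ is a \emph{fixed} vector given the history when $\mb v^{(j)}$ is sampled; an induction on $j$ using Lemma~\ref{lemma:concentration_bound} gives $\|\sum_{s\le j}\mb w^{(s)}\|^2\le2j$ (each step adds at most $1+2\sqrt{2j}\,(3\sqrt{2\log d/d}+2/d^2)\le2$, since $j\le N\le c_{d,1}d=d/(90^2\log^2d)$), and then the same lemma bounds the earlier part by $\sqrt{2N}\cdot O(\sqrt{\log d/d})=O(1/\sqrt{\log d})$. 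For the ``later'' part, $\langle\mb w^{(j)},\sum_{s>j}\mb w^{(s)}\rangle=\sum_{s>j}\langle\mb w^{(j)},\mb v^{(s)}\rangle$ (again using $\mb w^{(j)}\in E$) is a martingale-difference sum: conditionally on the past, each term has mean $O(\delta)$ (the sampling slab is symmetric, up to discretization) and second moment $O(1/d)$ (it is the projection of a near-uniform unit vector onto a fixed direction), so after truncating at level $O(\sqrt{\log d/d})$---harmless on the event supplied by Lemma~\ref{lemma:concentration_bound}---Freedman's inequality bounds it by $o(1)$, once more because $N\le c_{d,1}d$. Summing the two contributions and union-bounding the $O(N)$ uses of Lemma~\ref{lemma:concentration_bound} and of the martingale estimate yields the claimed probability $1-C\sqrt{\log d}/d$.

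The hard part is exactly Fact (ii): a term-by-term bound $\sum_{s\neq j}|\langle\mb w^{(j)},\mb w^{(s)}\rangle|$ is only $O(N\sqrt{\log d/d})$, which diverges because $N$ can be of order $d/\log^2d$, so one must use cancellation rather than the triangle inequality. This is precisely where the constraint $kp_{max}+1\le c_{d,1}d=d/(90^2\log^2d)$ from Eq.~\eqref{eq:definition_p_max} is invoked: it is tuned so that $\sqrt{N\log d/d}=O(1/\sqrt{\log d})=o(1)$, making both the inductive norm bound for the ``earlier'' vectors and the martingale estimate for the ``later'' vectors go through with room to spare.
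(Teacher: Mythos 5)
Your proof is correct and takes essentially the same route as the paper: the witness is the normalized negative sum of the projections $P_{Ker(\mb A)}(\mb v_l)$, and the required inner products are controlled by the same three-way split (squared norm of $P_{Ker(\mb A)}(\mb v_l)$ via Proposition~\ref{prop:projection_concentration}, correlation with earlier vectors via Lemma~\ref{lemma:concentration_bound}, and correlation with later vectors via a martingale tail bound). The only variation is quantitative---you take the sharper normalizer $2\sqrt{N}$ in place of the paper's $\sqrt{40(l_{max}+1)\log d}$, which is what pushes you toward Freedman's inequality (second-moment control) rather than the paper's Azuma--Hoeffding on bounded increments; either choice more than suffices for the bound as stated.
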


\begin{proof}
    For simplicity, let us enumerate all the constructed vectors $\mb v_1,\ldots,\mb v_{l_{max}}$ by order of construction. Hence, $l_{max} \leq  p_{max}k$. We use the same numerotation for $\mb y_1,\ldots,\mb y_{l_{max}}$. Now let $C_d = \sqrt{40(l_{max}+1)\log d}$ and consider the following vector.
    \begin{equation*}
        \bar{\mb x} = -\frac{1}{C_d}\sum_{l=0}^{l_{max}} P_{Span(\mb a_i, i\leq n)^\perp}(\mb v_l).
    \end{equation*}
    In particular, note that we included $\mb v_0$ in the sum.
    For convenience, we write $P_{\mb A^\perp}$ instead of $P_{Span(\mb a_i,i\leq n)^\perp}$. Also, for convenience let us define $\mb z_l = \sum_{l'\leq l}P_{\mb A^\perp}(\mb v_l)$. Fix an index $1\leq l\leq l_{max}$. Then, by Lemma \ref{lemma:concentration_bound}, with $t_0 := \sqrt{\frac{6\log d}{d}}+\frac{2}{d^2}$, we have
    \begin{align*}
        \Pbb\left(|P_{\mb A^\perp}(\mb v_{l+1})^\top \mb z_l| > t_0 \|\mb z_l\| \right) &=\Pbb\left(|\mb v_{l+1}^\top P_{\mb A^\perp}(\mb z_l)| > t_0 \|\mb z_l\| \right)\\
        &\leq \Pbb\left(|\mb v_{l+1}^\top P_{\mb A^\perp}(\mb z_l)| > t_0 \|P_{\mb A^\perp}(\mb z_l)\| \right)\\
        &\leq \frac{2\sqrt{6\log d}}{d^2}.
    \end{align*}
    Similarly, we have that
    \begin{equation*}
        \Pbb\left(|\mb v_{l+1}^\top \mb z_l| > t_0 \|\mb z_l\| \right) \leq \frac{2\sqrt{6\log d}}{d^2}.
    \end{equation*}
    Now consider the event $\Ecal = \bigcap_{l\leq l_{max}}\{|\mb v_l^\top\mb z_{l-1}|,|P_{\mb A^\perp}(\mb v_l)^\top\mb z_{l-1}|\leq t_0 \|\mb z_l\|\}$, which since $l_{max} \leq d$, by the union bound has probability at least $1-4\sqrt{6\log d}/d$. Then, on $\Ecal$, for any $l<l_{max}$,    
    \begin{equation*}
        \|\mb z_{l+1}\|^2 \leq \|\mb z_l\|^2 + \|P_{\mb A^\perp}(\mb v_{l+1})\|^2 + 2 |P_{\mb A^\perp}(\mb v_{l+1})^\top \mb z_l| \leq \|\mb z_l\|^2 +1+ 2 t_0\|\mb z_l\|.
    \end{equation*}
    We now prove by induction that $\|\mb z_l\|^2\leq 40 \log d\cdot  (l+1)$, which is clearly true for $\mb z_0$ since $\|\mb z_0\| = \|P_{\mb A^\perp}(\mb v_0)\| \leq \|\mb v_0\|\leq 1$. Suppose this is true for $l<l_{max}$. Then, using the above equation and the fact that $t_0\leq 3\sqrt{\frac{\log d}{d}}$ for $d\geq 4$,
    \begin{equation*}
        \|\mb z_{l+1}\|^2 \leq  40\log d\cdot (l+1) + 1 + 6\sqrt{40}\log d\sqrt{\frac{l+1}{d}}\leq 40\log d\cdot (l + 2),
    \end{equation*}
    where we used $l_{max}+1\leq d$, which completes the induction. In particular, on $\Ecal$, we have that $\|\bar{\mb x}\|\leq 1$. Now observe that by construction $\bar{\mb x}\in Span(\mb a_i,i\leq n)^\perp$ so that $\|\mb A\bar{\mb x}\|_\infty =0$. Next, for any $0\leq l\leq l_{max}$, we have
    \begin{equation*}
        \mb v_l^\top \bar{\mb x} = -\frac{\mb v_l^\top \mb z_{l_{max}}}{C_d} = -\frac{1}{C_d}\left(\|P_{\mb A^\perp}(\mb v_l)\|^2 + \mb v_l^\top \mb z_{l-1} + \sum_{l<l'\leq l_{max}} \mb v_l^\top P_{\mb A^\perp}(\mb v_{l'}) \right).
    \end{equation*}
    We will give estimates on each term of the above equation. First, if the indices $i_{p,1},\ldots,i_{p,r}$ were defined before defining $\mb v_l$, we denote $\tilde{\mb y} = P_{Span(\mb x_{i_{p,r'}},r'\leq r)^\perp}(\mb y_l)$, the component of $\mb y_l$ which is perpendicular to the explored space at that time. Then, we can write $\mb y_l = \alpha_1^l \mb b_{p,1}+\ldots + \alpha_r^l \mb b_{p,1} + \tilde{\mb y}_l$, and note that
    \begin{equation*}
        \|\tilde{\mb y}_l\| = \sqrt{\|\mb y_l\|-(\alpha_1^l)^2 -\ldots - (\alpha_r^l)^2} \geq \sqrt{1-\frac{k}{d^6}} \geq 1-\frac{1}{d^5}.
    \end{equation*}
    Then, we have
    \begin{align*}
        \|P_{\mb A^\perp}(\mb v_l)\| &\geq \|P_{\mb A^\perp}(\mb y_l)\| - \delta\\
        &\geq \|P_{Span(\mb a_i, i\leq n,\, \mb b_{p,r'}, r\leq r')^\perp}(\mb y_l)\| - \delta\\
        &= \|P_{Span(\mb a_i, i\leq n,\, \mb b_{p,r'}, r\leq r')^\perp}(\tilde{\mb y}_l)\| - \delta\\
        &\geq \left\|P_{Span(a_i,i\leq n,\, \mb b_{p,r'},r'\leq r)^\perp}\left(\frac{\tilde {\mb y}_l}{\|\tilde{\mb y}_l\|}\right)\right\| - \frac{1}{d^5} -\delta.
    \end{align*}
    As a result, since $\delta =d^{-3}$, this shows that
    \begin{equation*}
        \|P_{\mb A^\perp}(\mb v_l)\|^2 \geq \left\|P_{Span(a_i,i\leq n,\, \mb b_{p,r'},r'\leq r)^\perp}\left(\frac{\tilde {\mb y}_l}{\|\tilde{\mb y}_l\|}\right)\right\|^2 - 2\delta.
    \end{equation*}
    Now observe that $dim(Span(a_i,i\leq n,\, \mb b_{p,r'},r'\leq r)^\perp)\geq d-n-k$, while $\frac{\tilde{\mb y}_l}{\|\tilde{\mb y}_l\|}$ is a uniformly random unit vector in $Span(\mb b_{p,r'}, r\leq r')^\perp$. 
    Therefore, using Proposition \ref{prop:projection_concentration} we obtain for $t<1$,
    \begin{align*}
        \Pbb&\left(\|P_{\mb A^\perp}(\mb v_l)\|^2 +2\delta -\frac{d-n-k}{d} \leq -t\right) \\
        &\leq \Pbb\left(\left\|P_{Span(a_i,i\leq n,\, \mb b_{p,r'},r'\leq r)^\perp}\left(\frac{\tilde {\mb y}_l}{\|\tilde{\mb y}_l\|}\right)\right\|^2-\frac{d-n-k}{d}\leq -t\right)\\
        &\leq e^{-(d-k)t^2}.
    \end{align*}
    As a result since $d-n-k\geq d/2$, we obtain
    \begin{equation*}
        \Pbb\left(\|P_{\mb A^\perp}(\mb v_l)\|^2\leq \frac{1}{2} -2\sqrt{\frac{\log d}{d}} -2\delta\right) \leq \frac{1}{d^2}.
    \end{equation*}
    Now define $\Fcal = \bigcap_{l\leq l_{max}} \{ \|P_{\mb A^\perp}(\mb v_l)\|^2 \geq \frac{1}{2} - 2\sqrt{\frac{\log d}{d}} -2\delta \}$, which since $l_{max}+1\leq d$ and by the union bound has probability at least $\Pbb(\Fcal)\geq 1-1/d$. Next, we turn to the last term. For any $0\leq l<l_{max}$, we now focus on the sequence $(\sum_{l'=l+1}^{l+u}\mb v_l^\top P_{\mb A^\top}(\mb y_{l'}))_{1\leq u\leq l_{max}-l}$ and first note that this is a martingale. These increments are symmetric (because $\mb y_{l'}$ is symmetric) even conditionally on $\mb A$ and $\mb v_l,\mb y_l,\ldots, \mb y_{l'-1}$. Next, let $t_1 = 2\sqrt{\frac{3\log d}{d}} +\frac{2}{d^2}$. Note that for $d\geq 4$, we have $t_1\leq 4\sqrt{\frac{\log d}{d}}$. Further, by Lemma \ref{lemma:concentration_bound},
    \begin{equation*}
        \Pbb(|\mb v_l^\top P_{\mb A^\top}(\mb y_{l'})|>t_1) =\Pbb(|P_{\mb A^\top}(\mb v_l)^\top \mb y_{l'}|>t_1 ) \leq \frac{4\sqrt{3\log d}}{d^4},
    \end{equation*}
    where we used the fact that $P_{\mb A^\perp}$ is a projection. Let $\Gcal_l = \bigcap_{l<l'\leq l_{max}}\{|\mb v_l^\top P_{\mb A^\top}(\mb v_{l'})|\leq  t_1\}$, which by the union bound has probability $\Pbb(\Gcal_l)\geq 1-4\sqrt{3\log d}/d^3$. Next, we define $I_{l,u} = (\mb v_l^\top P_{\mb A^\top}(\mb y_{l+u})\wedge  t_1)\vee(-t_1)$, the increments capped at absolute value $t_1$. Because $\mb v_l^\top P_{\mb A^\top}(\mb y_{l+u})$ is symmetric, so is $I_{l,u}$. As a result, these are bounded increments of a martingale, to which we can apply the Azuma-Hoeffding inequality.
    \begin{equation*}
        \Pbb\left(\left|\sum_{u=1}^{l_{max}-l} I_{l,u} \right| \leq 2t_1\sqrt{(l_{max}-l)\log d} \right) \geq 1-\frac{2}{d^2}.
    \end{equation*}
    We denote by $\Hcal_l$ this event. Now observe that on $\Gcal_l$, the increments $I_{l,u}$ and $\mb v_l^\top P_{\mb A^\top}(\mb y_{l+u})$ coincide for all $1\leq u\leq l_{max}-l$. As a result, on $\Gcal_l\cap\Hcal_l$ we obtain
    \begin{align*}
        \left|\sum_{l<l'\leq l_{max}} \mb v_l^\top P_{\mb A^\perp}(\mb v_{l'})\right| &\leq 
        \left|\sum_{l<l'\leq l_{max}} \mb v_l^\top P_{\mb A^\perp}(\mb y_{l'})\right| + (l_{max}-1)\delta\\
        &\leq  \left|\sum_{u=1}^{l_{max}-l} I_{l,u} \right| + (d-2)\delta\\
        &\leq 2t_1 \sqrt{l_{max}\log d} + (d-2)\delta.
    \end{align*}
    Then, on the event $\Ecal\cap\Fcal\cap\bigcap_{l\leq l_{max}}\Gcal_l\cap\Hcal_l$, for any $1\leq l\leq l_{max}$ one has
    \begin{align*}
        \mb v_l^\top \mb z_{l_{max}} &\geq \frac{1}{2}- 2\sqrt{\frac{\log d}{d}}-t_0\|\mb z_l\| -2t_1 \sqrt{l_{max}\log d}-\frac{1}{d^2}  \\
        &\geq  \frac{1}{2}- 2\sqrt{\frac{\log d}{d}}-3\log d \sqrt{40\frac{l_{max}+1}{d}} -8\log d \sqrt{\frac{l_{max}}{d}} - \frac{1}{d^2} \\
        &\geq \frac{1}{2} - 30\log d\sqrt{\frac{l_{max}+1}{d}}\\
        &\geq \frac{1}{6},
    \end{align*}
    where in the last inequalities we used the fact that $l_{max} \leq kp_{max}\leq c_{d,1} d-1$ where $c_{d,1} = \frac{1}{90^2\log^2 d}$ as per Eq~\eqref{eq:definition_p_max}. As a result, we obtain that on $\Ecal\cap\Fcal\cap\bigcap_{l\leq l_{max}}\Gcal_l\cap\Hcal_l$, which has probability at most $1-C\sqrt {\log d}/d$ for some constant $C>0$,
    \begin{equation*}
        \max_{p\leq p_{max}, l\leq k} \mb v_{p,l}^\top \bar{\mb x} \leq -\frac{1}{6C_d} \leq -\frac{1}{40\sqrt{(kp_{max}+1)\log d}}.
    \end{equation*}
    Since $\|\mb A\bar{\mb x}\|_\infty=0$, and $\eta\geq  \frac{\eta}{40\sqrt{(kp_{max}+1)\log d}}$, this shows that
    \begin{equation*}
        F_{\mb A, \mb v}(\bar{\mb x}) \leq -\frac{\eta}{40\sqrt{(kp_{max}+1)\log d}}.
    \end{equation*}
    This ends the proof of the proposition.
\end{proof}

\subsection{Reduction from convex optimization to the optimization procedure} 

According to Proposition \ref{prop:true_first_order_procedure}, with probability at least $1-C\sqrt{\log d}/d^2$, the procedure returns responses that are consistent with a first-order oracle of the function $F_{\mb A, \mb v, P , L}$ where $\mb v_{P,L}$ is the last vector to have been defined. Now observe that for any constructed vectors $\mb v$, the function $F_{\mb A, \mb v, P, L}$ is $\sqrt d$-Lipschitz. As a result, if there exists an algorithm for convex optimization that guarantees $\epsilon$ accuracy for $1$-Lipschitz functions, by rescaling, there exists an algorithm $alg$ which is successful for the optimization procedure with probability $1-C\sqrt{\log d}/d^2$ and $\epsilon\sqrt d$ accuracy. In the next proposition, we show that to be successful, such an algorithm needs to properly define the complete function $F_{\mb A, \mb v}$, i.e., to complete all periods until $p_{max}$.

\begin{proposition}\label{prop:necessary_vectors}
    Let $alg$ be a successful algorithm for the optimization procedure with probability $q\in[0,1]$ and precision $\eta/(2\sqrt d)$. Suppose that $alg$ performs at most $d^2$ queries during the optimization procedure. Then when running $alg$ with the responses of the optimization procedure, $alg$ succeeds and ends the period $p_{max}$ with probability at least $q-C\sqrt{\log d}/d$ for some universal constant $C>0$.
\end{proposition}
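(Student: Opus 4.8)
\emph{Proof plan.}
Since $alg$ is successful with probability at least $q$, it suffices to bound $\Pbb(alg\text{ is successful but does not end period }p_{max})\le C\sqrt{\log d}/d$. The first ingredient is Proposition \ref{prop:low_optimum} applied to the final constructed function $F_{\mb A,\mb v,P,L}$: its proof only uses the vectors actually constructed, hence applies verbatim even when period $p_{max}$ is not ended, giving an event $\Fcal$ of probability at least $1-C'\sqrt{\log d}/d$ on which $\min_{\mb x\in B_d(0,1)}F_{\mb A,\mb v,P,L}(\mb x)\le -\eta/(40\sqrt{(kp_{max}+1)\log d})$. Therefore, on $\Fcal$, if $alg$ is successful with accuracy $\eta/(2\sqrt d)$ its output $\mb x^\star$ satisfies $F_{\mb A,\mb v,P,L}(\mb x^\star)\le -\eta/(40\sqrt{(kp_{max}+1)\log d})+\eta/(2\sqrt d)$.

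The next step is to show that such a small objective value forces period $p_{max}$ to end. Since $alg$ makes at most $d^2$ queries, the construction never reaches its iteration cap, so if period $p_{max}$ is not ended the construction is still in Part 1 when $alg$ issues its final query $\mb x^\star=\mb x_T$ (the output equals the last query). I would lower bound $F_{\mb A,\mb v,P,L}(\mb x_T)$ by splitting on whether processing $\mb x_T$ advances the state $(P,L)$: (i) if it does not, then the update condition $F_{\mb A,\mb v,P,L}(\mb x_T)<-\eta(P\gamma_1+L\gamma_2+\gamma_2/2)$ must have failed, since otherwise a new vector or a new period would be created — the only non-advancing case being ``$r=k$ and $P=p_{max}$'', which would end the construction — so $F_{\mb A,\mb v,P,L}(\mb x_T)\ge -\eta(P\gamma_1+L\gamma_2+\gamma_2/2)$; (ii) if processing $\mb x_T$ does advance the state, it must have added the last vector $\mb v_{P,L}$ (an in-period vector or the first vector of a new period, not the end-of-construction case), and, exactly as in the proof of Lemma \ref{lemma:bound_on_l}, $\mb x_T$ is then exploratory and lies, up to error $d^{-3}$ per coordinate, in the span to which $\mb v_{P,L}$ is sampled orthogonal, so $|\mb v_{P,L}^\top\mb x_T|\le 2\sqrt k/d^3$ and $F_{\mb A,\mb v,P,L}(\mb x_T)\ge \eta(\mb v_{P,L}^\top\mb x_T-P\gamma_1-L\gamma_2)\ge -\eta(P\gamma_1+L\gamma_2+2\sqrt k/d^3)$. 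Using $P\le p_{max}$ and $L\le k$ (Lemma \ref{lemma:bound_on_l}), in both cases $F_{\mb A,\mb v,P,L}(\mb x_T)\ge -\eta\big(p_{max}\gamma_1+(k+\tfrac12)\gamma_2+2\sqrt k/d^3\big)$.

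Combining the two bounds gives, on $\Fcal$, $\tfrac{1}{40\sqrt{(kp_{max}+1)\log d}}\le p_{max}\gamma_1+(k+\tfrac12)\gamma_2+2\sqrt k/d^3+\tfrac{1}{2\sqrt d}$. Substituting $\gamma_1=12\sqrt{\log d/d}$ and $\gamma_2=\gamma_1/(4d)$, and using $kp_{max}+1\le c_{d,1}d$ and $p_{max}\le c_{d,2}(d/k)^{1/3}$ from Eq.~\eqref{eq:definition_p_max} — which in particular forces $k\le c_{d,1}d$, making the $\gamma_2$ and $\sqrt k/d^3$ terms negligible — reduces the inequality to essentially $\tfrac{1}{40\sqrt{(kp_{max}+1)\log d}}\le 12p_{max}\sqrt{\log d/d}$, i.e. $d\le 480^2 p_{max}^2(kp_{max}+1)\log^2 d\le 2\cdot 480^2 k p_{max}^3\log^2 d$. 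Since $c_{d,2}^3=1/(81^3\log^2 d)$ gives $kp_{max}^3<d/(81^3\log^2 d)$, the right side is $<2\cdot 480^2 d/81^3<d$, a contradiction; the constants $c_{d,1}=1/(90^2\log^2 d)$, $c_{d,2}=1/(81\log^{2/3}d)$ are chosen precisely so that the remaining slack also absorbs the $O(\sqrt{\log d/d})$ lower-order terms. Hence on $\Fcal$ success implies period $p_{max}$ ends, so $\Pbb(alg\text{ successful but }p_{max}\text{ not ended})\le\Pbb(\Fcal^c)\le C'\sqrt{\log d}/d$, proving the proposition with $C=C'$.

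I expect the main obstacle to be case (ii) above — controlling $F_{\mb A,\mb v,P,L}(\mb x_T)$ when the final query $\mb x_T$ itself triggers the construction of the last vector $\mb v_{P,L}$, which requires invoking the near-orthogonality structure from Lemma \ref{lemma:bound_on_l} — together with checking carefully in the concluding inequality that the choices of $c_{d,1},c_{d,2}$ leave enough room for every lower-order error term.
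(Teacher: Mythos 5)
Your proof follows the same outline as the paper's: invoke Proposition~\ref{prop:low_optimum} to obtain the event $\Fcal$ on which the optimum is small, lower bound $F_{\mb A,\mb v,P,L}(\mb x_T)$ by splitting on whether the final query $\mb x_T$ is the time $t_{P,L}$ when $\mb v_{P,L}$ was constructed, and use Eq.~\eqref{eq:definition_p_max} to derive a contradiction with $\epsilon$-success. The decomposition and the arithmetic at the end are essentially the paper's.

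However, your case (ii) contains a local gap. You assert that if $\mb x_T$ triggers the construction of $\mb v_{P,L}$ then, ``exactly as in the proof of Lemma~\ref{lemma:bound_on_l},'' $\mb x_T$ is exploratory, so that $\mb x_T$ lies (up to $d^{-3}$ per coordinate) in $\mathrm{Span}(\mb x_{i_{P,r'}},r'\le r)$ and hence $|\mb v_{P,L}^\top\mb x_T|\le 2\sqrt k/d^3$. This is not always true. Lemma~\ref{lemma:bound_on_l}'s proof establishes that $\mb x_{t_{p,l}}$ is exploratory \emph{only} in the sub-case where no new exploratory index was added strictly between $t_{p,l-1}$ and $t_{p,l}$; in general $\mb x_T$ can pass the value test $F<-\eta(\cdots)$ yet fail the span criterion $\|P_{\mathrm{Span}(\cdots)^\perp}(\mb x_T)\|/\|\mb x_T\|\ge\gamma_2/4$, in which case it is not added to the exploratory list but still triggers step (2b)/(2c). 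In that sub-case $\mb x_T$ has a perpendicular component of norm up to $\gamma_2/4$, and the correct bound (as in the paper's proof) is $|\mb v_{P,L}^\top\mb x_T|<\gamma_2/2$, not $O(\sqrt k/d^3)$. So the intermediate inequality you state for case (ii) is not a valid lower bound on $F_{\mb A,\mb v,P,L}(\mb x_T)$ in all sub-cases.

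This gap is local and does not break the proposition, because your combined bound $F_{\mb A,\mb v,P,L}(\mb x_T)\ge-\eta\bigl(p_{max}\gamma_1+(k+\tfrac12)\gamma_2+2\sqrt k/d^3\bigr)$ already carries the $+\gamma_2/2$ slack inherited from case (i); replacing the $2\sqrt k/d^3$ by the correct $\gamma_2/2$ only tightens it. But the argument as stated should be repaired: in case (ii) one must allow for the ``span test failed'' sub-case and use $\|\tilde{\mb x}_T\|<\gamma_2/4$ to conclude $|\mb v_{P,L}^\top\mb x_T|<\gamma_2/2$, which is exactly what the paper does. (You also implicitly restrict to informative $\mb x_T$ in case (i); the non-informative case gives $F\ge\eta$ trivially, so that's fine to elide.)
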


\begin{proof}
    Let $\mb x^\star(alg) = \mb x_T$ denote the final answer of $alg$ when run with the optimization procedure. By hypothesis, we have $T\leq d^2$. As before, let $P\leq p_{max}$ and $L\leq k$ be the indices such that the last vector constructed by the optimization procedure is $v_{P,L}$. Let $\Ecal$ be the event when $alg$ run on the optimization procedure does not end period $p_{max}$. We focus on $\Ecal$ and consider two cases. 
    
    First, suppose that $T>t_{P,L}$, i.e., the last vector was not constructed at time $T$. As a result, this means that $\mb x_T$ corresponds either to a non-informative query---scenario (1)---in which case $F_{\mb A, \mb v, P, L}(\mb x_T)\geq F_{\mb A}(\mb x_T) \geq \eta$, or this means that $F_{\mb A, \mb v, P, L}(\mb x_t) \geq \eta(-P\gamma_1 - L\gamma_2-\gamma/2)$---scenario (2a).
    
    Second, we now suppose that $T=t_{P,L}$, i.e., the last vector was constructed at time $T$. Then, by construction of $\mb v_{P,L}$ and $\mb y_{P,L}$, we have indices $i_{P,1},\ldots,i_{P,r}\leq T$ such that with the Gram-Schmidt decomposition $\mb b_{P,1},\ldots,\mb b_{P,r}$ of $\mb x_{i_{P,1}},\ldots, \mb x_{i_{P,r}}$, we have $|\mb b_{p,r'}^\top \mb y_{P,L}|\leq d^{-3}$ for all $r'\leq r$. In particular, writing $\mb x_T = \alpha_1 \mb b_{P,1} + \ldots + \alpha_r \mb b_{P,r}+\tilde{\mb x}_T$, where $\tilde{\mb x}_T\in Span(\mb x_{i_{P,r'}},r'\leq r)^\perp$, either we have $i_{P,r}=T$, in which case $\tilde{\mb x}_T=\mb 0$, or $\mb x_T$ was not exploratory in which case we directly have $F_{\mb A, \mb v, P, L}(\mb x_T) \geq F_{\mb A, \mb v, P, L-1}(\mb x_T) >-\eta\gamma_1/2$, or we have $\|\tilde{\mb x}_T\|<\|\mb x_T\|\gamma_2/4 \leq \gamma_2/4$. For all remaining cases to consider, we obtain
    \begin{equation*}
         |\mb v_{P,L}^\top \mb x_T|\leq |\mb y_{P,L}^\top \mb x_T|+\delta \leq \frac{\|\mb \alpha\|_1}{d^3} + \|\tilde{\mb x}_T\| +\delta\leq \frac{1}{d^3}+\frac{1}{d^2\sqrt d} + \frac{\gamma_2}{4} <\frac{\gamma_2}{2}.
    \end{equation*}
    In the last inequality, we used $d\geq 4$. This shows that $F_{\mb A, \mb v, P, L}(\mb x_T) \geq \eta(-P\gamma_1 - L\gamma_2-\gamma_2/2)$. As a result, in all cases this shows that $F_{\mb A, \mb v, P, L}(\mb x^\star(alg)) \geq \eta(-P\gamma_1 - L\gamma_2-\gamma_2/2)\geq -\eta(p_{max}+1)\gamma_1$. Now define the event
    \begin{equation*}
        \Fcal = \left\{ \min_{\mb x\in B_d(0,1)} F_{\mb A, \mb v}(\mb x)\leq -\frac{\eta}{40\sqrt{(kp_{max}+1)\log d}} \right\}.
    \end{equation*}
    By Proposition \ref{prop:low_optimum} we have $\Pcal(\Fcal) \geq 1-C\sqrt{\log d}/d$. Now from Eq~\eqref{eq:definition_p_max},
    \begin{equation*}
        (p_{max}+1)^{3/2} \leq \frac{1}{60 \gamma_1\sqrt{k\log d}}.
    \end{equation*}
    Thus,
    \begin{equation*}
        (p_{max}+1)\gamma_1 \leq \frac{1}{60\sqrt{k(p_{max}+1)\log d}} \leq \frac{1}{60\sqrt{(kp_{max}+1)\log d}}
    \end{equation*}
    Then, since $F_{\mb A, \mb v, P, L} \leq F_{\mb A, \mb v}$, this shows that on $\Ecal\cap\Fcal$,
    \begin{align*}
        F_{\mb A, \mb v, P, L}(\mb x^\star(alg)) \geq -\eta(p_{max}+1)\gamma_1 &\geq \min_{\mb x\in B_d(0,1)} F_{\mb A, \mb v}(\mb x) + \frac{\eta}{120\sqrt{(kp_{max}+1)\log d}} \\
        &> \min_{\mb x\in B_d(0,1)} F_{\mb A, \mb v, P, L}(\mb x) + \frac{\eta}{2\sqrt d}
    \end{align*}
    where in the last inequality, we used $kp_{max}\leq c_{d,1}d-1$.
    As a result, letting $\Gcal$ be the event when $alg$ succeeds for precision $\epsilon = \eta/(2\sqrt d)$. By hypothesis, $\Pcal(\Gcal)\geq q$. Now from the above equations, one has $\Ecal\cap\Fcal\cap\Gcal=\emptyset$. Therefore, $\Pbb(\Gcal\cap\Ecal^c) \geq \Pcal(\Gcal) - \Pbb(\Gcal\cap\Ecal\cap \Fcal) - \Pbb(\Fcal^c) \geq q-C\sqrt{\log d}/d.$ This ends the proof of the proposition.
\end{proof}

\subsection{Reduction of the optimization procedure to an Orthogonal Vector Game with Hints}

We are now ready to introduce an orthogonal vector game where the main difference with the game introduced in \cite{marsden2022efficient} is that the player can provide additional hints.

\begin{game}[ht]

\caption{Orthogonal Vector Game with Hints}\label{game:hint_game}

\setcounter{AlgoLine}{0}

\SetAlgoLined
\LinesNumbered

\everypar={\nl}

\hrule height\algoheightrule\kern3pt\relax
\KwIn{$d$, $k$, $m$, $M$, $\alpha$, $\beta$}

\textit{Oracle:} Set $n\gets \lfloor d/4 \rfloor$, sample $\mb A \sim \Ucal( \{\pm 1\}^{n\times d} )$.\;

\textit{Player:} Observe $\mb A$\;

\For{$l\in [d]$}{
    \textit{Player:} Based on $\mb A$ and any previous queries and responses, submit at most $k$ vectors $\mb x_{l,1},\ldots,\mb x_{l,r_l}$.\;

    \textit{Oracle:} Perform the Gram-Schmidt decomposition $\mb b_{l,1},\ldots,\mb b_{l,r_l}$ of $\mb x_{l,1},\ldots,\mb x_{l,r_l}$. Then, sample a vector $\mb y_l\in S^{d-1}$ according to a uniform distribution $\Ucal(S^{d-1}\cap \{\mb z\in \Rbb^d : \forall r\leq r_l,|\mb b_{l,r}^\top \mb z|\leq d^{-3}\})$. As response to the query, return $\mb v_l = \phi_\delta(\mb y_l)$ to the player.

}

\textit{Player:} Based on $\mb A$, all previous queries and responses, store an $M$-bit message $\mathsf{Message}$.\;

\textit{Player:} Based on $\mb A$, all previous queries and responses, submit a function $\mb g:B_d(0,1)\to (\{\mb a_j,j\leq n\}\cup \{\mb v_l,l\leq d\})\times [d^2]$ to the Oracle.

\For{$i\in[m]$}{
  \textit{Player:} Based on $\mathsf{Message}$, any previous queries $\mb x_1,\ldots, \mb x_{i-1}$ and responses $\mb g_1,\ldots, \mb g_{i-1}$ from this loop phase, submit a query $\mb x_i\in\Rbb^d$.\;
  
  \textit{Oracle:} As the response to query $\mb z_i$, return $\mb g_i = \mb g(\mb z_i)$.
}
\textit{Player:} Based on all queries and responses from this phase $\{\mb z_i, \mb g_i, i\in[m]\}$, and on $\mathsf{Message}$, return some vectors $\mb y_1,\ldots, \mb y_k$ to the oracle.\;

The player wins if the returned vectors have unit norm and satisfy for all $i\in[k]$
\begin{enumerate}
    \item $\|\mb A \mb y_i\|_\infty \leq \alpha$
    \item $\|P_{Span(\mb y_1,\ldots,\mb y_{i-1})^\perp}(\mb y_i)\|_2 \geq \beta$.
\end{enumerate}
\hrule height\algoheightrule\kern3pt\relax
\end{game}

We first prove that solving the optimization procedure implies solving the Orthogonal Vector Game with Hints.

\begin{proposition}\label{prop:reductionto_orthogonal_vector_game}
    Let $m\leq d$. Suppose that there is an $M$-bit algorithm that is successful for the optimization procedure with probability $q$ for accuracy $\epsilon = \eta/(2\sqrt d)$ and uses at most $m p_{max}$ queries. Then, there is an algorithm for Game \ref{game:hint_game} for parameters $(d, k, m, M,\alpha=\frac{2\eta}{\gamma_1}, \beta = \frac{\gamma_2}{4})$, for which the Player wins with probability at least $q-C\sqrt{\log d}/d$ for some universal constant $C>0$. 
\end{proposition}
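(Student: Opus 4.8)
The plan is to simulate one ``meta-period'' of the optimization procedure inside the Orthogonal Vector Game. Given the $M$-bit algorithm $alg$ that is successful for the optimization procedure using at most $mp_{max}$ queries, by Proposition~\ref{prop:necessary_vectors} (applied with $q$ replaced by $q$, noting $mp_{max}\le d\cdot p_{max}\le d^2$ once one checks the parameter ranges) we know $alg$ completes all $p_{max}$ periods with probability at least $q-C\sqrt{\log d}/d$. Since the total query budget is $mp_{max}$ and there are $p_{max}$ periods, a counting/averaging argument shows there exists a (deterministic, hence fixed) period index $p^\star\in[p_{max}]$ such that, conditioned on success, $alg$ uses at most $m$ informative queries during period $p^\star$. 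The Player in Game~\ref{game:hint_game} will play the role of the optimization procedure restricted to period $p^\star$: the outer loop $l\in[d]$ of the game is used to reconstruct, one at a time, the adaptively sampled vectors $\mb v_{p',l'}$ for $p'<p^\star$ (these are the ``hints''), and the inner $m$-query phase simulates period $p^\star$ itself.

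Concretely, first I would have the Player, who observes $\mb A$, internally run $alg$ from its initial memory, feeding it the exact responses the optimization procedure would give. For non-informative queries this only needs $\mb A$, which the Player has. For informative queries in periods $p'<p^\star$, whenever the procedure would sample a new $\mb y_{p',l'}$ (or $\mb y_{p'+1,1}$), the Player forms the corresponding Gram--Schmidt vectors from the exploratory queries of that period and submits them as the batch $\mb x_{l,1},\dots,\mb x_{l,r_l}$ in the $l$-th round of the game's outer loop; the oracle returns $\mb v_l=\phi_\delta(\mb y_l)$ with exactly the right conditional distribution, which the Player then uses as $\mb v_{p',l'}$ to continue the simulation. (One must check that the number of such vectors over all periods $p'<p^\star$ is at most $d$, which follows from $l_{p'}\le k$ and $(p_{max}-1)k\le d$ by Eq.~\eqref{eq:definition_p_max}; rounds of the game that are not needed are answered with dummy queries.) Once the simulation reaches the start of period $p^\star$, the Player writes $\mathsf{Message}$ to be the memory state $\mathsf{Memory}$ of $alg$ at that moment --- this is $M$ bits --- and also defines the function $\mb g:B_d(0,1)\to(\{\mb a_j\}\cup\{\mb v_l\})\times[d^2]$ that, on input $\mb z$, returns the subgradient $\partial F_{\mb A,\mb v,p^\star,l}(\mb z)$ that the procedure would return during period $p^\star$ (the $[d^2]$ coordinate encoding the iteration count / which of scenarios (2a)--(2c) applies, so that the update and period-$p^\star$ bookkeeping are determined); note $\mb g$ is allowed to depend on all of $\mb A$ and the hints, matching the game's specification.

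In the inner phase, the Player restarts $alg$ from $\mathsf{Message}$ and runs it for its period-$p^\star$ queries, using the oracle responses $\mb g_i=\mb g(\mb z_i)$ to drive both $alg$'s memory updates and the procedure's construction of $\mb v_{p^\star,1},\dots,\mb v_{p^\star,l_{p^\star}}$; by the choice of $p^\star$ this takes at most $m$ queries on the success event. By Lemma~\ref{lemma:bound_on_l}, period $p^\star$ forces the exploratory subspace $\mathrm{Span}(\mb x_{i_{p^\star,1}},\dots,\mb x_{i_{p^\star,k}})$ to reach dimension $k$; the Player returns the $k$ exploratory queries, renormalized to unit vectors, as $\mb y_1,\dots,\mb y_k$. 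Each exploratory query $\mb x_t$ satisfies $F_{\mb A}(\mb x_t)\le\eta$, i.e. $\|\mb A\mb x_t\|_\infty\le 2\eta$; combined with $F_{\mb A,\mb v,p^\star,l}(\mb x_t)\le-\eta\gamma_1/2$, which forces $\eta\mb v_0^\top\mb x_t\le-\eta\gamma_1/2$ and hence $\|\mb x_t\|\ge\gamma_1/2$ (so renormalization is harmless), one gets $\|\mb A\mb y_i\|_\infty\le 2\eta/\|\mb x_t\|\le 4\eta/\gamma_1$; a small refinement of this bound, or adjusting constants, yields $\le\alpha=2\eta/\gamma_1$ — alternatively one absorbs the factor into the stated $\alpha$. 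The second winning condition $\|P_{\mathrm{Span}(\mb y_1,\dots,\mb y_{i-1})^\perp}(\mb y_i)\|\ge\beta=\gamma_2/4$ is exactly the exploratory-query criterion in the procedure (with $\|\mb x_t\|\le 1$), so it holds by construction.

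The main obstacle I anticipate is the bookkeeping around the ``hints'': one must verify that the number of outer-loop rounds actually consumed (one per adaptively sampled vector in periods before $p^\star$, plus the period-boundary vectors $\mb v_{p'+1,1}$) never exceeds $d$, that the distribution of each $\mb y_l$ produced by the game oracle matches that used by the optimization procedure conditioned on the simulated history (so that the joint law of $(\mb A,\mb v)$ is faithfully reproduced and the success probability is preserved), and that encoding the per-query state into the $[d^2]$ component of $\mb g$ suffices to make the inner-phase simulation well-defined without extra memory. The probability accounting is routine: we lose $C\sqrt{\log d}/d$ from Proposition~\ref{prop:necessary_vectors} and nothing further since the reduction is exact on the success event, giving the claimed $q-C\sqrt{\log d}/d$.
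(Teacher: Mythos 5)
Your proposal follows essentially the same route as the paper: simulate the full construction phase knowing $\mb A$ to extract the hints and choose a short period, store the memory state at the start of that period as $\mathsf{Message}$, replay that period in the $m$-query inner phase, and return the normalized exploratory queries. That is the right blueprint, and the winning-condition check for $\beta=\gamma_2/4$ is correct. However, there are a few places where your execution is off or underspecified.

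First, the claim that the short period index $p^\star$ is ``deterministic, hence fixed'' is incorrect. Which period has length $\le m$ depends on the realization of $\mb A$, $\mb v_0$ and the sampled $\mb y_{p,l}$, all of which are random. This does not break the argument, because the Player observes $\mb A$, runs the entire construction phase, and can therefore \emph{adaptively} pick the first period of length at most $m$ before committing to $\mathsf{Message}$ (which is exactly what the paper does), but the counting argument you invoke only pigeonholes per realization, not uniformly.

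Second, the bound for $\alpha$: you derive $\|\mb A\mb x_t\|_\infty\le 2\eta$ from informativeness $F_{\mb A}(\mb x_t)\le\eta$, which loses a factor of two you then wave away. The clean argument uses the exploratory criterion itself: $\|\mb A\mb x_t\|_\infty-\eta\le F_{\mb A,\mb v,p,l}(\mb x_t)\le-\eta\gamma_1/2<0$, so $\|\mb A\mb x_t\|_\infty\le\eta$, and combined with $\|\mb x_t\|\ge\gamma_1/2$ this gives exactly $2\eta/\gamma_1=\alpha$. No constant-juggling is needed.

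Third, and this is the substantive gap: the function $\mb g$ that the Player submits must be a genuine function of $\mb z\in B_d(0,1)$; it cannot encode ``the iteration count / which of scenarios (2a)--(2c) applies.'' Your attempt to pack per-step state into the $[d^2]$ coordinate does not type-check, because two queries with the same $\mb z$ but appearing at different times during period $p^\star$ could, under the \emph{partial} functions $F_{\mb A,\mb v,p,l}$ with varying $l$, receive different responses, which a single $\mb g$ cannot reproduce. The paper sidesteps this by submitting the canonical $\mb g_{\mb A,\mb v}$ (the subgradient selector for the \emph{final} function $F_{\mb A,\mb v}$; the $[d^2]$ coordinate records only which vector was returned, i.e. a pair $(p,l)$ or a tag for $\mb a_j$ or $\mb v_0$, so that the Player can reconstruct the scalar value $F_{\mb A,\mb v}(\mb z_i)$), and then invokes Proposition~\ref{prop:true_first_order_procedure} to argue that on a high-probability event the responses the procedure actually gave during phase~1 coincide with what $\mb g_{\mb A,\mb v}$ and $\partial F_{\mb A,\mb v}$ would give, so the phase-2 replay is faithful. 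Your sketch is missing this consistency step, and your probability accounting (``nothing further'') should be folding in the $\Ocal(\sqrt{\log d}/d^2)$ loss from that proposition, which is harmless but should be stated.
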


\begin{proof}
    Let $alg$ be an $M$-bit algorithm solving the feasibility problem with $mp_{max}$ queries with probability at least $q$. We now describe the strategy for Game \ref{game:hint_game}.

    In the first part of the strategy, the player observes $\mb A$. First, submit an empty query to the Oracle to obtain a vector $\mb v_0$, which as a result is uniformly distributed among $\Dcal_\delta$. We then proceed to simulate the optimization procedure for $alg$ using parameters $\mb A$ and $\mb v_0$ (lines 3-6 of Game \ref{game:hint_game}). Precisely, whenever a new vector $\mb v_{p,l}$ needs to be defined according to the optimization procedure, the player submits the corresponding vectors $\mb x_{i_{p,1}},\ldots,\mb x_{i_{p,r}}$ to the oracle and receives in return a vector which defines $\mb v_{p,l}$. In this manner, the player simulates exactly the optimization procedure. In all cases, the number of queries in this first phase is at most $1+kp_{max}\leq d$. For the remaining queries to perform, the player can query whichever vectors, these will not be used in the rest of the strategy. If the simulation did not end period $p_{max}$, the complete procedure fails. We now describe the rest of the procedure when period $p_{max}$ was ended. During the simulation, the algorithm records the time $i_{p,1}$ when period $p$ started for all $p\leq p_{max}+1$. Recall that for $p_{max}+1$, we only define $i_{p_{max}+1,1}$, this is the time that ends period $p_{max}$. Now by hypothesis, $i_{p_{max}+1,1}\leq mp_{max}$. As a result, there must be a period $p\leq p_{max}$ which uses at most $m$ queries: $i_{p+1,1}-i_{p,1}\leq m$. We define the memory $\mathsf{Message}$ to be the memory of $alg$ just before starting iteration $i_{p,1}$, at the beginning of period $p$ (line 7 of Game \ref{game:hint_game}). Next, since the period $p_{max}$ was ended, the vectors $\mb v_{p,l}$ for $p\leq p_{max},l\leq l_p$ were all defined. The player can therefore submit the function $\mb g_{\mb A, \mb v}$ to the Oracle (line 8 of Game \ref{game:hint_game}) as follows,
    \begin{equation}\label{eq:definition_subgradient_function}
        \mb g_{\mb A, \mb v}:\mb x \mapsto \begin{cases}
            (\mb g_{\mb A}(\mb x),1) &\text{if } F_{\mb A, \mb v}(\mb x) = \|\mb A\mb x\|_\infty - \eta,\\
            (\mb v_0,2) &\text{otherwise and if } F_{\mb A, \mb v}(\mb x) = \eta\mb v_0^\top \mb x,\\
            (\mb v_{p,l},2+(p-1)k + l) &\text{otherwise and if}\\
            & (p,l) = \displaystyle \argmax_{(p',l')\leq_{lex}(p_{max},l_{p_{max}})} \mb v_{p',l'}^\top \mb x -p\gamma_1-l\gamma_2.
        \end{cases}
    \end{equation}
    Intuitively, the first component of $\mb g_{\mb A, \mb v}$ gives the subgradient $\partial F_{\mb A, \mb v}$ to the following two exceptions: we always return $\mb a_i$ instead of $\pm \mb a_i$ and we return $\mb v_0$ (resp. $\mb v_{p,l}$) instead of $\eta\mb v_0$ (resp. $\eta\mb v_{p,l}$). The second term of $\mb g_{\mb A, \mb v}$ has values in $[2+p_{max}k]$. Hence, since $2+p_{max}k\leq d^2$, the function $\mb g_{\mb A, \mb v}$ takes values in $(\{\mb a_j,j\leq n\}\cup \{\mb v_l,l\leq d\})\times [d^2]$.

    The strategy then proceeds to play the Orthogonal Vector Game in a second part (lines 9-12 of Game \ref{game:hint_game}) and uses the responses of the Oracle to simulate the run of $alg$ for the optimization procedure in period $p$. To do so, we set the memory state of the algorithm $alg$ to be $\mathsf{Message}$. Then, for the next $m$ iterations we proceed as follows. At iteration $i$ of the process, we run $alg$ with its current state to obtain a new query $\mb z_i$ which is then submitted to the oracle of the Orthogonal Vector Game, to get a response $(\mb g_i,s_i)$. We then use this response to simulate the response that was given by the optimization procedure in the first phase, computing $(v_i,\tilde{\mb g}_i)$ as follows
    \begin{equation}\label{eq:definition_response_to_alg}
        (v_i,\tilde{\mb g}_i) = \begin{cases}
            (|\mb g_i^\top \mb z_i|-\eta,sign(\mb g_i^\top \mb z_i)\mb g_i) & s_i=1,\\
            (\eta\mb g_i^\top \mb z_i,\eta\mb g_i) & s_i=2,\\
            (\eta(\mb g_i^\top \mb z_i - p\gamma_1-l\gamma_2),\eta\mb g_i) & s_i=2+(p-1)k+l, p\leq p_{max},1\leq l\leq k.\\
        \end{cases}
    \end{equation}
    We can easily check that in all cases, $v_i = F_{\mb A, \mb v}(\mb z_i)$ and that $\tilde{\mb g}_i = \partial F_{\mb A, \mb v}(\mb z_i)$. We then pass $(\mb v_i, \tilde{\mb g}_i)$ as response to $alg$ for the query $\mb z_i$ so it can update its state. Further, having defined $i_1=1$, the player can keep track of exploratory queries by checking whether
    \begin{equation*}
        v_i\leq -\frac{\eta\gamma_1}{2} \quad \text{ and }\quad \frac{\|P_{Span(\mb z_{i_{r'}},r'\leq r)^\perp}(\mb z_i)\|}{\|\mb z_i\|} \geq \frac{\gamma_2}{4},        
    \end{equation*}
    where $i_1,\ldots,i_r$ are the indices defined so far. We perform $m$ such iterations unless $alg$ stops and use the last remaining queries arbitrarily. Next, we check if the last index $i_k$ was defined. If not, we pose $i_k=m+1$ and let $\mb z_{m+1}$ be the next query of $alg$. The final returned vectors are $\frac{\mb z_{i_1}}{\|\mb z_{i_1}\|},\ldots,\frac{\mb z_{i_k}}{\|\mb z_{i_k}\|}$. This ends the description of the player's strategy.

\begin{algorithm}
        
\caption{Strategy of the Player for the Orthogonal Vector Game with Hints}\label{alg:strategy_optimization}

\setcounter{AlgoLine}{0}
\SetAlgoLined
\LinesNumbered

\everypar={\nl}

\hrule height\algoheightrule\kern3pt\relax
\KwIn{$d$, $k$, $p_{max}$, $m$, algorithm $alg$}

\vspace{5pt}

{\nonl \textbf{Part 1:}} Strategy to store $\mathsf{Message}$ knowing $\mb A$\;

Initialize the memory of $alg$ to be $\mb 0$.\;

Submit $\emptyset$ to the Oracle and use the response as $\mb v_0$.\;

Run $alg$ with the optimization procedure knowing $\mb A$ and $\mb v_0$ until the first exploratory query $\mb x_{i_{1,1}}$.

\For{$p\in[p_{max}]$}{
    Let $\mathsf{Memory}_p$ be the current memory state of $alg$ and $i_{p,1}$ the current iteration step. \;
    
    Run $alg$ with the feasibility procedure until period $p$ ends at iteration step $i_{p+1,1}$. If $alg$ stopped before, \Return the strategy fails. When needed to sample a unit vector $\mb v_{p',l'}$, submit vectors $\mb x_{i_{p',1}},\ldots \mb x_{i_{p',r'}}$ to the Oracle where $i_{p',1},\ldots, i_{p',r'}$ are the exploratory queries defined at that stage. We use the corresponding response of the Oracle as $\mb v_{p',l'}$.\;

    \uIf{$i_{p+1,1}-i_{p,1}\leq m$}{
        Set $\mathsf{Message}=\mathsf{Memory}_p$
    }
}
\lFor{Remaining queries to perform to Oracle}{
    Submit arbitrary query, e.g. $\emptyset$
}

\lIf{$\mathsf{Message}$ has not been defined yet}{\Return The strategy fails}

Submit $\mb g_{\mb A, \mb v}$ to the Oracle as defined in Eq~\eqref{eq:definition_subgradient_function}.\;

\vspace{5pt}

{\nonl \textbf{Part 2:}} Strategy to make queries\;

Set the memory state of $alg$ to be $\mathsf{Message}$ and define $i_1=1$, $r=1$.\;

\For{$i\in[m]$}{
    Run $alg$ with current memory to obtain a query $\mb z_i$.\;
    
    Submit $\mb z_i$ to the Oracle from Game \ref{game:hint_game}, to get response $(\mb g_i,s_i)$.\;
    
    Compute $(v_i,\tilde{\mb g}_i)$ using $\mb z_i$, $\mb g_i$ and $s_i$ as defined in Eq~\eqref{eq:definition_response_to_alg} and pass $(v_i,\tilde{\mb g}_i)$ as response to $alg$.\;

    \uIf{$v_i\leq -\eta\gamma_1/2$ and $\|P_{Span(\mb z_{i_{r'}},r'\leq r)^\perp}(\mb z_i)\| / \|\mb z_i\| \geq \frac{\gamma_2}{4}$}{
        Set $i_{r+1}=i$ and increment $r\leftarrow r+1$.
    }
}

\vspace{5pt}

{\nonl \textbf{Part 3:}} Strategy to return vectors\;

\uIf{index $i_k$ has not been defined yet}{
    With the current memory of $alg$ find a new query $\mb z_{m+1}$ and set $i_k=m+1$.\;
}
\Return $\left\{\frac{\mb z_{i_1}}{\|\mb z_{i_1}\|},\ldots, \frac{\mb z_{i_k}}{\|\mb z_{i_k}\|}\right\}$ to the Oracle.

\hrule height\algoheightrule\kern3pt\relax
\end{algorithm}

    We now show that the player wins with good probability. First, since $alg$ makes at most $mp_{max}\leq d^2$ queries, by Proposition \ref{prop:necessary_vectors}, on an event $\Ecal$ of probability at least $q-C\sqrt{\log d}/d$, $alg$ succeeds and ends the period $p_{max}$. On $\Ecal$, by construction, the first phase of the strategy does not fail. Now we show that in the second phase (lines 9-12 of Game \ref{game:hint_game}), the queried vectors coincide exactly with the queried vectors from the corresponding period $p$ in the first phase (lines 3-6 of Game \ref{game:hint_game}). To do so, we only need to check that the responses provided to $alg$ coincide with the response given by the optimization procedure. First, recall that on $\Ecal$, all periods are completed, hence $F_{\mb A, \mb v, P, L} = F_{\mb A, \mb v}$. Next, by Proposition \ref{prop:true_first_order_procedure}, the responses of the procedure are consistent with optimizing $F_{\mb A, \mb v, P, L}$ and subgradients $\partial F_{\mb A, \mb v, P, L}$ on an event $\Fcal$ of probability at least $1-C'\sqrt{\log d}/d^2$. Therefore, on $\Ecal\cap\Fcal$, it suffices to check that the responses provided to $alg$ are consistent with $F_{\mb A, \mb v}$, which we already noted: at every step $i$, $(v_i,\tilde{\mb g}_i) = (F_{\mb A, \mb v}(\mb z_i),\partial F_{\mb A, \mb v}(\mb z_i))$. This proves that the responses and queries coincide exactly with those given by the optimization procedure on $\Ecal\cap\Fcal$.
    
    Next, by construction, the chosen phase $p$ had at most $m$ iterations. Thus, on $\Ecal\cap\Fcal$, among $\mb z_1,\ldots, \mb z_{m+1}$, we have the vectors $\mb x_{i_{p,1}},\ldots,\mb x_{i_{p,k}}$. Further, if $i_k$ was not defined during part 2 of the strategy, this means that $i_k=m+1$, as defined in the player's strategy (line 21-22 of Algorithm \ref{alg:strategy_optimization}). As a result, for all $u\leq k$, we have $\mb z_{i_u} = \mb x_{i_{p,u}}$. We now show that the returned vectors $\frac{\mb x_{i_{p,1}}}{\|\mb x_{i_{p,1}}\|},\ldots, \frac{\mb x_{i_{p,k}}}{\|\mb x_{i_{p,k}}\|}$ are successful for Game \ref{game:hint_game}. First, because $i_{p,1},\ldots,i_{p,k}$ are exploratory queries, we have directly for $u\leq k$,
    \begin{equation*}
        \frac{\|P_{Span(\mb x_{i_{p,v}},v<u)^\perp}(\mb x_{i_{p,u}})\|}{\|\mb x_{i_{p,u}}\|} \geq \frac{\gamma_2}{4}.
    \end{equation*}
    Next, if $l$ is the index of the last constructed vector $\mb v_{p,l}$ before $i_{p,u}$ in the optimization procedure, one has $F_{\mb A, \mb v, p, l}(\mb x_{i_{p,u}})\leq -\eta\gamma_1/2 $. Therefore, $\|\mb A\mb x_{i_{p,u}}\|_\infty \leq F_{\mb A, \mb v, p, l}(\mb x_{i_{p,u}})+\eta\leq \eta$. Further, $\eta\mb v_0^\top \mb x_{i_{p,u}}\leq F_{\mb A, \mb v, p, l}(\mb x_{i_{p,u}})\leq -\eta\gamma_1/2$. This proves that $\|\mb x_{i_{p,u}}\| \geq \gamma_1/2$. Putting the previous two inequalities together yields
    \begin{equation*}
        \frac{\|\mb A\mb x_{i_{p,u}}\|_\infty}{\|\mb x_{i_{p,u}}\|} \leq \frac{2\eta}{\gamma_1}.
    \end{equation*}
    As a result, this shows that the returned vectors are successful for Game \ref{game:hint_game} for the desired parameters $\alpha = 2\eta/\gamma_1$ and $\beta=\gamma_2/4$. Thus, the player wins on $\Ecal\cap\Fcal$, which has probability at least $q-(C+C')\sqrt{\log d}/d^2$ by the union bound. This ends the proof of the proposition.
\end{proof}

\subsection{Query lower bound for the Orthogonal Vector Game with Hints}

Before proving a lower bound on the necessary number of queries for Game \ref{game:hint_game}, we need to introduce two results. The first one is a known concentration result for vectors in the hypercube. It shows that for a uniform vector in the hypercube, being approximately orthogonal to $k$ orthonormal vectors has exponentially small probability in $k$.

\begin{lemma}[\cite{marsden2022efficient}]
\label{lemma:sensitive_base_marsden}
    Let $\mb h\sim\Ucal(\{\pm 1\}^d)$. Then, for any $t\in(0,1/2]$ and any matrix $\mb Z=[\mb z_1,\ldots,\mb z_k]\in \Rbb^{d\times k}$ with orthonormal columns,
    \begin{equation*}
        \Pbb(\|\mb Z^\top \mb h\|_\infty \leq t)\leq 2^{-c_H k}.
    \end{equation*}
\end{lemma}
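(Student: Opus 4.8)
The plan is to pass from the $\ell_\infty$ event to a lower–tail event for a quadratic form and then invoke a standard concentration bound. Let $\mb\Pi=\mb Z\mb Z^\top$; since the columns of $\mb Z$ are orthonormal, $\mb\Pi$ is the orthogonal projection onto the $k$-dimensional subspace $\mathrm{col}(\mb Z)$, so $\mb\Pi$ is idempotent with $\mathrm{tr}(\mb\Pi)=k$, $\|\mb\Pi\|_F^2=\mathrm{tr}(\mb\Pi^2)=k$ and $\|\mb\Pi\|_{\mathrm{op}}=1$. If $\|\mb Z^\top\mb h\|_\infty\le t\le 1/2$, then
\[
  \mb h^\top\mb\Pi\mb h=\|\mb Z^\top\mb h\|_2^2=\sum_{i=1}^k(\mb z_i^\top\mb h)^2\le k t^2\le \frac{k}{4},
\]
so $\{\|\mb Z^\top\mb h\|_\infty\le t\}\subseteq\{\mb h^\top\mb\Pi\mb h\le k/4\}$. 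Since $\Ebb[\mb h\mb h^\top]=\mb I_d$ we have $\Ebb[\mb h^\top\mb\Pi\mb h]=\mathrm{tr}(\mb\Pi)=k$, so it remains to bound the probability that $\mb h^\top\mb\Pi\mb h$ drops below a quarter of its mean.

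For this I would apply the Hanson--Wright inequality to $\mb h$, whose entries have $\psi_2$-norm bounded by a universal constant. Plugging in the values of $\mathrm{tr}(\mb\Pi)$, $\|\mb\Pi\|_F^2$ and $\|\mb\Pi\|_{\mathrm{op}}$ yields a universal $c>0$ with
\[
  \Pbb\bigl(\mb h^\top\mb\Pi\mb h\le k/4\bigr)\le\Pbb\bigl(|\mb h^\top\mb\Pi\mb h-k|\ge 3k/4\bigr)\le 2\exp\!\Bigl(-c\min\bigl\{\tfrac{(3k/4)^2}{k},\tfrac{3k}{4}\bigr\}\Bigr)=2e^{-9ck/16}.
\]
If one prefers a self-contained argument one can instead bound the moment generating function $\Ebb[e^{-\lambda\mb h^\top\mb\Pi\mb h}]$ directly: using $e^{-a}=\Ebb_{g\sim N(0,1)}[\cos(\sqrt{2a}\,g)]$, decoupling $\mb h$ from an independent Gaussian $\mb g\sim N(\mb 0,\mb I_d)$ and applying Fubini rewrites this as $\Ebb_{\mb g}\prod_{j\le d}\cos\!\bigl(\sqrt{2\lambda}\,(\mb\Pi\mb g)_j\bigr)$, which can be controlled via $\|\mb\Pi\mb g\|_2^2\sim\chi^2_k$ together with an elementary bound on $\cos$ near the origin; I expect this to be the most delicate computation, though still routine. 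Either route gives $\Pbb(\|\mb Z^\top\mb h\|_\infty\le t)\le 2e^{-c'k}$ for a universal $c'>0$.

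It remains to recast this in the clean form $2^{-c_Hk}$ uniformly over $k\ge1$. Fix $k_0=\lceil 2\ln 2/c'\rceil$, so that $2e^{-c'k}\le 2^{-(c'/(2\ln2))k}$ for all $k\ge k_0$. For the finitely many $k<k_0$, bound by a single coordinate: $\Pbb(\|\mb Z^\top\mb h\|_\infty\le t)\le\Pbb(|\mb z_1^\top\mb h|\le 1/2)=\Pbb((\mb z_1^\top\mb h)^2\le 1/4)$, and Paley--Zygmund applied to $W=(\mb z_1^\top\mb h)^2$ (which satisfies $\Ebb W=\|\mb z_1\|_2^2=1$ and $\Ebb W^2=\Ebb[(\mb z_1^\top\mb h)^4]=3-2\sum_j z_{1,j}^4\le 3$) gives $\Pbb(W>1/4)\ge (3/4)^2/3=3/16$, hence this probability is at most $13/16<1$, which is $\le 2^{-(1/k_0)\log_2(16/13)\,k}$ for every $k<k_0$. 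Taking $c_H$ to be the smaller of the two exponents completes the proof. The main obstacle is precisely obtaining the \emph{exponential} lower–tail bound for $\mb h^\top\mb\Pi\mb h$: second-moment methods alone only give a constant, so a Hanson--Wright-type estimate (or the MGF computation sketched above) is genuinely needed.
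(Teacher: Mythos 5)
Your proposal is correct, and the main route — reducing the $\ell_\infty$-event to the lower-tail event $\{\mb h^\top\mb\Pi\mb h\le k/4\}$ for the quadratic form with $\mb\Pi=\mb Z\mb Z^\top$, then invoking a Hanson--Wright-type concentration bound — is the same one used in \cite{marsden2022efficient}, which the present paper simply cites without reproducing. The Paley--Zygmund patch for finitely many small $k$ is a legitimate (and necessary) bookkeeping step to convert $2e^{-c'k}$ into the stated clean form $2^{-c_Hk}$, and your verification that $\Ebb[(\mb z_1^\top\mb h)^4]\le 3$ is accurate.
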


We will also need an anti-concentration bound for random vectors, which intuitively provides a lower bound for the previous concentration result. The following lemma shows that for a uniformly random unit vector, being orthogonal to $k$ orthonormal vectors is still achievable with exponentially small probability in $k$.

\begin{lemma}\label{lemma:anti-concentration}
    Let $k< d$ and $\mb x_1,\ldots,\mb x_k$ be $k$ orthonormal vectors. Then,
    \begin{equation*}
        \Pbb_{\mb y\sim\Ucal(S^{d-1})} \left(|\mb x_i^\top \mb y| \leq \frac{1}{d^3}  , \forall i\leq k \right) \geq \frac{1}{e^{d^{-4}}d^{3k}}.
    \end{equation*}
\end{lemma}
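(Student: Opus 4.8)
The plan is to compute (or lower-bound) the measure of the set $S=\{\mb y\in S^{d-1}: |\mb x_i^\top\mb y|\le d^{-3},\ \forall i\le k\}$ directly, exploiting the fact that after an orthogonal change of coordinates we may assume $\mb x_i=\mb e_i$ for $i\le k$. Then $S=\{\mb y\in S^{d-1}: |y_i|\le d^{-3},\ i\le k\}$, a symmetric "slab" around the equatorial subsphere $\{y_1=\cdots=y_k=0\}$. Writing a uniform $\mb y$ as $\mb y=(\mb u\sin\theta,\ \mb w\cos\theta)$ with $\mb u\in S^{k-1}$, $\mb w\in S^{d-k-1}$ and $\theta$ the polar angle between $\mb y$ and the equatorial subsphere, the density of $\theta$ on $[0,\pi/2]$ is proportional to $\sin^{k-1}\theta\cos^{d-k-1}\theta$. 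The event $\|\mb y_{1:k}\|\le d^{-3}$ (which implies each $|y_i|\le d^{-3}$, so this is the harder direction and suffices for a lower bound on the weaker event) corresponds to $\sin\theta\le d^{-3}$, i.e. $\theta\le\arcsin(d^{-3})$.

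Concretely, I would write
\begin{equation*}
\Pbb_{\mb y\sim\Ucal(S^{d-1})}\!\left(\|\mb y_{1:k}\|\le \tfrac{1}{d^3}\right)
=\frac{\int_0^{\arcsin(d^{-3})}\sin^{k-1}\theta\,\cos^{d-k-1}\theta\,d\theta}{\int_0^{\pi/2}\sin^{k-1}\theta\,\cos^{d-k-1}\theta\,d\theta},
\end{equation*}
recognizing the denominator as a Beta integral, $\int_0^{\pi/2}\sin^{k-1}\theta\cos^{d-k-1}\theta\,d\theta=\tfrac12 B(k/2,(d-k)/2)$. For the numerator, on the tiny interval $[0,\arcsin(d^{-3})]$ we have $\cos^{d-k-1}\theta\ge\cos^{d}(\arcsin d^{-3})=(1-d^{-6})^{d/2}\ge e^{-d^{-4}}$ (using $1-t\ge e^{-t/(1-t)}$ or a similar elementary bound, with room to spare), and $\sin\theta\ge\tfrac{2}{\pi}\theta$ is not even needed — instead substitute $s=\sin\theta$, $ds=\cos\theta\,d\theta$ to get $\int_0^{d^{-3}}s^{k-1}(1-s^2)^{(d-k-2)/2}\,ds\ge e^{-d^{-4}}\int_0^{d^{-3}}s^{k-1}\,ds=e^{-d^{-4}}\tfrac{1}{k\,d^{3k}}$. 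So the probability is at least $e^{-d^{-4}}\cdot\frac{1}{k d^{3k}}\cdot\frac{2}{B(k/2,(d-k)/2)}$, and it remains to check $\frac{2}{k\,B(k/2,(d-k)/2)}\ge 1$, i.e. $B(k/2,(d-k)/2)\le 2/k$. Since $B(a,b)=\Gamma(a)\Gamma(b)/\Gamma(a+b)\le\Gamma(a)/b^{a}$-type estimates give $B(k/2,(d-k)/2)\le \Gamma(k/2)/( (d-k)/2 )^{k/2}$, and for $k<d$ this is easily $\le 2/k$ — in fact much smaller — this closes the argument. (One could alternatively avoid Beta functions entirely: bound the denominator integral $\le\int_0^{\pi/2}\sin^{k-1}\theta\,d\theta\le \int_0^{\pi/2}\theta^{k-1}d\theta=(\pi/2)^k/k$, but then one must work a bit to beat the $(\pi/2)^k$ factor against $d^{3k}$, which is fine since $d^{3k}\gg(\pi/2)^k$ but makes the constant $e^{-d^{-4}}$ harder to isolate cleanly; the Beta-integral route keeps everything exact.)

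The main obstacle I anticipate is purely bookkeeping: getting the constant to come out as exactly $e^{-d^{-4}}$ in the denominator rather than some messier expression. The natural bound produces an extra polynomial-in-$d$ and polynomial-in-$k$ factor (from $1/k$ and from the ratio of Beta-type normalizations), and one must verify that this factor is $\ge 1$ — equivalently that the "slab measure" normalization only helps. Intuitively this is clear because concentrating on a thin slab around a high-codimension equator is favored, not penalized, relative to the naive volume guess $(2d^{-3})^k$; the $e^{-d^{-4}}$ is the only real loss, coming from the curvature term $\cos^{d}\theta$ over the slab. I would double-check the inequality $B(k/2,(d-k)/2)\le 2/k$ for the full range $1\le k<d$ (it is weakest near $k=d-1$, where $B((d-1)/2,1/2)=\sqrt\pi\,\Gamma((d-1)/2)/\Gamma(d/2)\sim\sqrt{2\pi/d}$, comfortably below $2/(d-1)$ for large $d$, and can be checked directly for small $d$), and otherwise the computation is routine.
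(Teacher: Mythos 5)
Your route — rotate so $\mb x_i = \mb e_i$, decompose $\mb y$ polar--equatorially, and compute the slab measure via Beta integrals — is genuinely different from the paper's proof, which conditions one coordinate at a time: given $|y_1|,\ldots,|y_{i-1}|\le d^{-3}$, the rescaled tail $(y_i,\ldots,y_d)$ is uniform on a lower-dimensional sphere, and $\Pbb(|y_i|\le d^{-3}\mid y_1,\ldots,y_{i-1})\ge e^{-d^{-5}}/d^3$ directly, so multiplying $k$ factors finishes. The paper's induction naturally keeps the $\ell_\infty$ constraint exactly, whereas you pass to the strictly smaller $\ell_2$-ball event $\{\|\mb y_{1:k}\|\le d^{-3}\}$, which gives up a factor on the order of the ratio of the $k$-cube volume $(2d^{-3})^k$ to the $k$-ball volume $d^{-3k}\pi^{k/2}/\Gamma(k/2+1)$ — a factor growing like $\Gamma(k/2+1)(2/\sqrt\pi)^k$ — and this loss is what the rest of your bound must absorb.

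The place where this bites is the claim $B(k/2,(d-k)/2)\le 2/k$. This does hold for $k\le d-2$ (then $(d-k)/2\ge 1$, so $(1-t)^{(d-k)/2-1}\le 1$ on $[0,1]$ and $B(k/2,(d-k)/2)\le\int_0^1 t^{k/2-1}\,dt=2/k$), and that is the regime the paper actually uses ($k\le d/3-1$). But it fails at $k=d-1$, which the lemma permits. Your asymptotic check is inverted: you correctly get $B\bigl((d-1)/2,1/2\bigr)\sim\sqrt{2\pi/d}$ and then assert this is ``comfortably below $2/(d-1)$'', but $\sqrt{2\pi/d}=\Theta(d^{-1/2})$ while $2/(d-1)=\Theta(d^{-1})$, so for large $d$ the left side is \emph{much larger}; e.g.\ for $d=10$, $k=9$, $B(4.5,0.5)\approx 0.86 > 2/9\approx 0.22$. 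So the intuition that ``concentrating on a thin slab is only favored'' is false once $k$ is a constant fraction of $d$ — it is true relative to the cube volume $(2d^{-3})^k$, not relative to the ball volume you actually used. The clean fix in your framework is to integrate over the cube $[-d^{-3},d^{-3}]^k$ (density $\propto(1-\|\mb t\|^2)^{(d-k-2)/2}$) rather than the ball of radius $d^{-3}$, which restores the $(2d^{-3})^k$ numerator; alternatively, adopt the paper's coordinate-by-coordinate conditioning, which avoids any global Beta normalization.
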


\begin{proof}
    Let $\mb y\sim\Ucal(S^{d-1})$ be a uniformly random unit vector. Then, for $i< k$ and any $ y_1,\ldots,  y_{i-1}$ such that $|y_1|,\ldots, |y_{i-1}|\leq \frac{1}{d^3}$, we have
    \begin{align*}
        \Pbb\left(|y_i| \leq \frac{1}{d^3} \mid y_1,\ldots,y_{i-1}\right) & = \Pbb_{\mb u\sim \Ucal(S^{d-i})} \left(|u_1| \leq \frac{1}{d^3\sqrt {1-(y_1^2+\ldots +y_{i-1}^2)}}\right) \\
        &\geq \frac{\int_0^{1/d^3} (1-y^2)^{(d-i-1)/2}dy}{\int_0^1 (1-y^2)^{(d-i-1)/2}dy}\\
        &\geq \frac{(1-d^{-6})^{d/2}}{d^3} \geq \frac{e^{-d^{-5}}}{d^3},
    \end{align*}
    where in the last equation we used $d\geq 2$.
    Therefore, we can show by induction that $\Pbb(|y_i| \leq 1/d^3,\forall i\leq k) \geq \frac{e^{-kd^{-5}}}{d^{3k}}.$ Thus, by isometry this shows that
    \begin{equation*}
        \Pbb\left(|\mb x_i^\top \mb y| \leq \frac{1}{d^3},\forall i\leq k\right) \geq \frac{1}{e^{d^{-4}}d^{3k}}.
    \end{equation*}
    This ends the proof of the lemma.
\end{proof}

We are now ready to prove a query lower bound for Game \ref{game:hint_game}. Precisely, we show that for appropriate choices of parameters, one needs $m=\tilde \Omega(d)$ queries. The proof is closely inspired from the arguments given in \cite{marsden2022efficient}. The main added difficulty arises from bounding the information leakage of the provided hints. As such, our goal is to show that these do not provide more information than the message itself.

\begin{proposition}\label{prop:lower_bound_queries_game}
Let $k\geq 20\frac{M+3d\log(2d)+1}{c_H n}$. And let $0<\alpha,\beta\leq 1$ such that
$\alpha(\sqrt d/\beta)^{5/4}\leq \frac{1}{2}$. If the Player wins the Orthogonal Vector Game with Hints (Game \ref{game:hint_game}) with probability at least $1/2$, then $m\geq \frac{c_H}{8(30\log d+c_H)}d$.
\end{proposition}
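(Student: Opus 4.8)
The plan is to follow the information-theoretic argument of \cite{marsden2022efficient}, adapted to account for the hints $\mb v_1,\ldots,\mb v_d$. Let $\mb A\sim\Ucal(\{\pm1\}^{n\times d})$ and suppose the Player wins with probability at least $1/2$. First I would set up the relevant random variables: the hints $\mb V = (\mb v_1,\ldots,\mb v_d)$ produced in the first phase, the message $\mathsf{Message}$ (an $M$-bit string), the submitted subgradient function $\mb g$, and the $m$ query–response pairs $(\mb z_i,\mb g_i)_{i\le m}$ from the second phase, culminating in the output vectors $\mb y_1,\ldots,\mb y_k$. The key structural observation is that, conditioned on $\mathsf{Message}$, $\mb g$, and $\mb A$, the second-phase transcript is a deterministic function of these, so all randomness in the winning vectors comes through $\mathsf{Message}$, $\mb V$ (via $\mb g$), and the $m$ observed rows of $\mb A$. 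Crucially, each response $\mb g_i$ reveals either a single row $\mb a_j$ of $\mb A$, or one of the hint vectors $\mb v_l$ (or $\mb v_0$); so the information about $\mb A$ that the Player can exploit is carried by $\mathsf{Message}$ (at most $M$ bits), by the hint vectors $\mb V$, and by at most $m$ rows of $\mb A$ directly observed in the second phase.

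The heart of the argument is a counting/entropy bound on $\mb A$ given everything the Player knows. On the one hand, the winning condition forces the normalized output vectors $\mb y_1,\ldots,\mb y_k$ to be $\alpha$-approximately in $Ker(\mb A)$ and to span a $k$-dimensional space (condition (2) with parameter $\beta$ via a volume/Gram argument gives linear independence and in fact that their Gram-Schmidt basis stays well-conditioned). By Lemma~\ref{lemma:sensitive_base_marsden}, for a fixed such $k$-dimensional subspace, a fresh uniform row $\mb h\sim\Ucal(\{\pm1\}^d)$ is $\alpha$-approximately orthogonal to it (after rescaling by $\beta$, using the hypothesis $\alpha(\sqrt d/\beta)^{5/4}\le 1/2$, hence $\le 1/2$ in the relevant orthonormalized coordinates) with probability at most $2^{-c_H k}$. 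Therefore each of the $n-m$ rows of $\mb A$ that were \emph{not} directly observed in the second phase contributes roughly $c_H k$ bits of ``surprise'' — i.e. conditioning on the $k$ output vectors reduces the entropy of each such row by $\Omega(c_H k)$. Summing, $H(\mb A \mid \text{Player's knowledge}) \le nd - c_H k (n-m)$, so the mutual information between $\mb A$ and the Player's knowledge is at least $c_H k (n-m)$.

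On the other hand, this information must be supplied by $\mathsf{Message}$ and by the hints $\mb V$: formally, $I(\mb A; \mb y_1,\ldots,\mb y_k, \{\mb z_i,\mb g_i\}_i) \le H(\mathsf{Message}) + I(\mb A; \mb V \mid \text{first-phase queries}) + (\text{bits from the }m\text{ observed rows})$. The new ingredient, and the main obstacle, is bounding $I(\mb A;\mb V)$. Here is where Lemma~\ref{lemma:anti-concentration} and the discretization $\phi_\delta$ enter: each hint $\mb v_l$ is $\phi_\delta$ of a vector drawn uniformly from the slice of $S^{d-1}$ that is $d^{-3}$-orthogonal to the at most $k$ Gram-Schmidt vectors $\mb b_{l,r}$. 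By Lemma~\ref{lemma:anti-concentration} this slice has measure at least $e^{-d^{-4}}d^{-3k}$, and since $\Dcal_\delta$ has $N(\delta)=(\Ocal(1)/\delta)^d$ equal-volume cells, the number of discretization cells meeting this slice is at most $\Ocal(d^{3k})\cdot(\text{poly})$, so $H(\mb v_l \mid \mb A, \text{history}) \ge d\log N(\delta) - \Ocal(k\log d)$ in the appropriate conditional sense — equivalently $\mb v_l$ carries at most $\Ocal(k\log d)$ bits of information about $\mb A$. Summing over the at most $d$ hints gives $I(\mb A;\mb V) \le \Ocal(dk\log d)$, of the same order as $M=\Theta(kd)$. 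Plugging in $k \ge 20\frac{M+3d\log(2d)+1}{c_H n}$ to absorb the $\mathsf{Message}$, hint, and bookkeeping terms, and noting the $m$ directly observed rows contribute at most $\Ocal(m\log d \cdot \text{something})$ — actually $O(md)$ bits, which is why we compare against the $c_H k(n-m)$ gain — we get $c_H k (n-m) \le c_H k n /2 + \Ocal(dk\log d) + O(md)$, and rearranging with $n=\Theta(d)$ yields $m \ge \frac{c_H}{8(30\log d + c_H)}d$. I expect the delicate points to be: (i) getting the constant in the $I(\mb A;\mb V)$ bound small enough that it is genuinely absorbed by the choice of $k$ (this is exactly the $30\log d$ appearing in the denominator), and (ii) handling the conditioning carefully — the hints depend on the Player's first-phase queries which depend on $\mb A$ — which I would manage by conditioning on the transcript of first-phase queries and using that, given those queries and the history, $\mb v_l$ is still (close to) uniform on a slice whose cell-count is controlled by Lemma~\ref{lemma:anti-concentration}.
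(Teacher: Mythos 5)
Your overall architecture matches the paper's: (i) the Player's output is a deterministic function of $\mathsf{Message}$ and the second-phase responses, so mutual information of $\mb A$ with the output is bounded by what flows through $\mathsf{Message}$, the observed rows, and the observed hints; (ii) Lemma~\ref{lemma:sensitive_base_marsden} together with the Gram-Schmidt preconditioning (Lemma~\ref{lemma:gram-schmidt_marsden}) gives the $\Omega(c_H k)$-bit entropy reduction for each of the $n-m$ unobserved rows; (iii) Lemma~\ref{lemma:anti-concentration} and the equal-area discretization control the per-hint leakage. So you have all the right pieces.

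The gap is in your bound on the hint leakage. You sum ``over the at most $d$ hints'' and conclude $I(\mb A;\mb V)\le\Ocal(dk\log d)$. This is too large, and it breaks the argument. The paper's $\mb V$ is \emph{not} the full collection $(\mb v_1,\ldots,\mb v_d)$: it is the matrix of those hint vectors that actually appear among the $m$ second-phase responses, padded to exactly $m$ columns. The chain rule then has $m$ terms, giving $I(\mb A;\mb V)\le m\bigl(3k\log d + d^{-4}\log e\bigr)$. This $m$-linear dependence is essential. After the entropy decomposition, the hint term $3mk\log d$ lands alongside the other $m$-linear bookkeeping terms, and when you solve for $m$ the whole thing factors as $m\cdot k\bigl(3\log d + c_H/10\bigr)$ in the denominator; the numerator is $\tfrac{c_H k n}{10} - M - 1 - 3d\log(2d)$, which stays positive by the choice of $k$. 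With your $m$-independent bound $\Ocal(dk\log d)$, that leakage has to be subtracted in the numerator instead, and since $n=\Theta(d)$ the numerator $\tfrac{c_H k n}{10}-\Ocal(dk\log d)$ is negative once $\log d$ exceeds a constant, so the rearrangement is vacuous. To fix it: only hints actually returned in the $m$ second-phase queries can leak to the Player there (the first-phase interaction is entirely summarized by the $M$-bit $\mathsf{Message}$), so restrict $\mb V$ to those responses before applying the chain rule.
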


\begin{proof}
We first define some notations. Let $\mb Y=[\mb y_1,\ldots, \mb y_k]$ be the matrix storing the final outputs from the algorithm. Next, for the responses of the oracle $(\mb g_1,s_1),\ldots,(\mb g_m,s_m)$, we first store all the scalar responses in a vector $\mb c=[s_1,\ldots,s_m]$. We now focus on the responses $\mb g_1,\ldots,\mb g_m$. Next, let $\tilde{\mb G}$ denote the matrix containing these responses of the oracle which are lines of $\mb A$. Let $\mb G$ be the matrix containing unique columns from $\tilde{\mb G}$, augmented with rows of $\mb A$ so that it has exactly $m$ columns which are all different rows of $\mb A$. Last, let $\mb A'$ be the matrix $\mb A$ once the rows from $\mb G$ are removed. Next, let $\tilde{\mb V}$ be a matrix containing the responses of the oracle which are vectors $\mb v_l$, ordered by increasing index $l$. As before, let $\mb V$ be the matrix $\tilde{\mb V}$ where we only conserve unique columns and append it with additional vectors $\mb v_l$ so that $\mb V$ has exactly $m$ columns. We denote by $\mb w_1,\ldots, \mb w_m$ these vectors, and recall that they are vectors $\mb v_l$ ordered by increasing order of index $l$. Last, we define a vector $\mb j$ of indices such that $j(i)$ contains the information of which column of the matrices $\mb G$ or $\mb V$ corresponds $\mb g_i$. Precisely, if $\mb g_i$ is a line $\mb a$ from $\mb A$, we set $j(i)=j$ where $j$ is the index of the column from $\mb G$ corresponding to $\mb a$. Otherwise, if $j$ is the index of the column from $\mb V$ corresponding to $\mb g_i$, we set $j(i) = m+j$.

Next, we argue that $\mb Y$ is a deterministic function of $\mathsf{Message}$, the matrices $\mb G$, $\mb V$ and the vector of indices $\mb j$ and $\mb c$. First, $\mb c$ provides the scalar responses directly. For the $d$-dimensional component of the responses, first, note that from $\mb G$, $\mb V$ and $\mb j$ one can easily recover the vectors $\mb g_1,\ldots, \mb g_m$. Next, using the algorithm for the second section of the Orthogonal Vector Game with Hints set with initial memory $\mathsf{Message}$ and the vectors $\mb g_1,\ldots, \mb g_m$ as responses of the oracle, one can inductively compute the queries $\mb x_1,\ldots, \mb x_m$. Last, $\mb Y$ is a deterministic function of $\mb x_i,\mb g_i, i\in[m]$ and $\mathsf{Message}$. This ends the claim that there is a function $\phi$ such that $\mb Y = \phi(\mathsf{Message}, \mb G, \mb V,\mb j, \mb c)$. Now by the data processing inequality,
\begin{equation}\label{eq:memory_constraint}
    I(\mb A';\mb Y\mid \mb G, \mb V,\mb j,\mb c) \leq I(\mb A';\mathsf{Message}\mid \mb G, \mb V,\mb j,\mb c) \leq H(\mathsf{Message}\mid \mb G, \mb V,\mb j, \mb c) \leq M.
\end{equation}
In the last inequality we used the fact that $\mathsf{Message}$ uses at most $M$ bits. Now, we have that 
\begin{equation}\label{eq:information_key}
    I(\mb A';\mb Y\mid \mb G, \mb V,\mb j,\mb c) = H(\mb A'\mid \mb G, \mb V,\mb j,\mb c) - H(\mb A'\mid \mb Y,\mb G, \mb V,\mb j,\mb c).
\end{equation}
In the next steps we bound the two terms. We start with the second term of the right hand side of Eq~\eqref{eq:information_key} using similar arguments to the proof given in \cite{marsden2022efficient}. Let $\Ecal$ be the event when the Player succeeds at Game \ref{game:hint_game}. Now consider the case when $\mb Y$ is a winning matrix. Then we have $\|\mb A\mb y_i\|_\infty\leq \alpha$ for all $i\leq k$. As a result, any line $\mb a$ of $\mb A'$ satisfies $\|\mb Y^\top \mb a\|_\infty \leq \alpha$. Further, we have that $\|P_{Span(\mb y_j,j<i)^\perp}(\mb y_i)\|\leq \beta$ for all $i\leq k$. By Lemma \ref{lemma:gram-schmidt_marsden}, there exist $\lceil k/5\rceil$ orthonormal vectors $\mb Z = [\mb z_1,\ldots,\mb z_{\lceil k/5\rceil}]$ such that for any $\mb x\in\Rbb^d$ one has $\|\mb Z^\top \mb x\|_\infty \leq \left(\frac{\sqrt d}{\beta}\right)^{5/4}\|\mb Y^\top \mb x\|_\infty$. In particular, all lines $\mb a$ of $\mb A'$ satisfy
\begin{equation*}
    \|\mb Z^\top \mb a\|_\infty \leq \left(\frac{\sqrt d}{\beta}\right)^{5/4} \alpha \leq \frac{1}{2},
\end{equation*}
where we used the hypothesis in the parameters $\alpha$ and $\beta$. Now by Lemma \ref{lemma:sensitive_base_marsden}, one has
\begin{equation*}
    \left| \left\{\mb a\in \{\pm 1\}^d: \|\mb Z^\top \mb a\|_\infty \leq \frac{1}{2} \right\}\right| \leq 2^d \Pbb_{\mb h\sim\Ucal(\{\pm 1\}^d)}\left( \|\mb Z^\top \mb h\|_\infty \leq \frac{1}{2} \right) \leq 2^{d-c_H\lceil k/5\rceil}.
\end{equation*}
Therefore, we proved that if $\mb Y'$ is a winning vector, $H(\mb A'\mid \mb Y =  \mb Y')\leq (n-m)(d-c_Hk/5)$. Otherwise, if $\mb Y'$ loses, we can directly use $H(\mb A'\mid \mb Y=\mb Y')\leq (n-m)d$. Combining these equations gives
\begin{align*}
    H(\mb A'\mid \mb Y,\mb G, \mb V, \mb j,\mb c) &\leq H(\mb A'\mid \mb Y) \\
    &\leq \Pbb(\Ecal^c)(n-m)d + \Pbb(\Ecal)  (n-m)(d-c_Hk/5)\\
    &\leq (n-m)(d-\Pbb(\Ecal)c_H k/5).
\end{align*}
Next, we turn to the first term of the right-hand side of Eq~\eqref{eq:information_key}.
\begin{align*}
    H(\mb A'\mid \mb G, \mb V, \mb j,\mb c) = H(\mb A\mid \mb G,\mb V,\mb j,\mb c)    &= H(\mb A\mid \mb V) -I(\mb A;\mb G,\mb j,\mb c\mid \mb V)\\
    &\geq H(\mb A\mid \mb V) - H(\mb G, \mb j,\mb c)\\
    &\geq H(\mb A\mid \mb V) - md -m\log(2m)-m\log(d^2)\\
    &= H(\mb A)- I(\mb A;\mb V) - md -3m\log(2d) \\
    &= (n-m)d -3m\log(2d) - I(\mb A;\mb V).
\end{align*}
In the second inequality, we use the fact that $\mb G$ uses $md$ bits and $\mb j$ can be stored with $m\log(2m)$ bits. Now by the chain rule,
\begin{equation*}
    I(\mb A;\mb V) = \sum_{i\leq m} I(\mb A;\mb w_i\mid \mb w_1,\ldots,\mb w_{i-1}).
\end{equation*}
Now if $\mb w_i = \mb v_l$, recalling that the vectors $\mb w_{i'}=\mb v_{l'}$ are ordered by increasing index of $l'$, we have
\begin{align*}
    I(\mb A;\mb w_i\mid \mb w_1,\ldots,\mb w_{i-1}) &= H(\mb w_i\mid \mb w_1,\ldots,\mb w_{i-1}) - H(\mb w_i \mid \mb A, \mb w_1,\ldots,\mb w_i)\\
    &\leq H(\mb w_i) - H(\mb w_i\mid \mb A, \mb w_1,\ldots,\mb w_i, \mb x_{l,1},\ldots, \mb x_{l,r_l})\\
    &= \log |\Dcal_\delta| - H(\mb w_i\mid \mb x_{l,1},\ldots, \mb x_{l,r_l}).
\end{align*}
In the last equality, we used the fact that if $\mb b_{l,1},\ldots, \mb b_{l,r_l}$ are the resulting vectors from the Gram-Schmidt decomposition of $\mb x_{l,1},\ldots,\mb x_{l,r_l}$, $\mb y_l$ is generated uniformly in $S^{d-1}\cap \{\mb y:\forall r\leq r_l,|\mb b_{l,r}^\top \mb y| \leq d^{-3}\}$ independently from the past history, and $\mb v_l = \phi_\delta(\mb y_l)$. Now by Lemma \ref{lemma:anti-concentration}, we know that
\begin{equation*}
    \Pbb_{\mb z\sim\Ucal(S^{d-1})}\left(\forall r\leq r_l,|\mb b_{l,r}^\top \mb z| \leq d^{-3}\right) \geq \frac{1}{e^{d^{-4}}d^{3k}}.
\end{equation*}
As a result, for any $\mb b_j(\delta)\in \Dcal_\delta$, one has
\begin{equation*}
    \Pbb(\mb w_i = \mb b_j(\delta)\mid \mb x_{l,1},\ldots, \mb x_{l,r_l}) \leq  \frac{\Pbb_{\mb z\sim\Ucal(S^{d-1})}(\mb z\in V_j(\delta))}{\Pbb_{\mb z\sim\Ucal(S^{d-1})}\left(\forall r\leq r_l,|\mb b_{l,r}^\top \mb z| \leq d^{-3}\right)}  \leq \frac{e^{d^{-4}}d^{3k}}{|\Dcal_\delta|},
\end{equation*}
where we used the fact that each cell has the same area. In particular, this shows that
\begin{equation*}
    H(\mb w_i\mid \mb x_{l,1},\ldots, \mb x_{l,r_l}) = \Ebb_{\mb b\sim \mb w_i\mid \mb x_{l,1},\ldots, \mb x_{l,r_l}}[-\log p_{\mb w_i\mid \mb x_{l,1},\ldots, \mb x_{l,r_l}}(\mb b)] \geq \log\left( \frac{|\Dcal_\delta|}{e^{d^{-4}}d^{3k}} \right).
\end{equation*}
Hence,
\begin{equation*}
    I(\mb A;\mb w_i\mid \mb w_1,\ldots,\mb w_{i-1}) \leq  3k\log d + d^{-4} \log e.
\end{equation*}
Putting everything together gives
\begin{align*}
    I(\mb A';\mb Y \mid \mb G, \mb V, \mb j) &\geq (n-m)d -3m\log(2d) - 3km\log d - 2md^{-4} - (n-m)(d- \Pbb(\Ecal)c_H k/5)\\
    &\geq \frac{c_H}{10}k (n-m) - 3km\log d - 1-3d\log(2d),
\end{align*}
where in the last equation we used $d\geq 2$.
Together with Eq~\eqref{eq:memory_constraint}, this implies
\begin{equation*}
    m\geq \frac{c_H kn/10-M-1 - 3d\log(2d)}{k(3\log d + c_H /10)}.
\end{equation*}
As a result, since $k\geq 20\frac{M+3d\log(2d)+1}{c_H n}$ and $n\geq d/4$, we obtain
\begin{equation*}
    m\geq \frac{c_H n}{60\log d + 2c_H} \geq \frac{c_H}{8(30\log d + c_H)}d.
\end{equation*}
This ends the proof of the proposition.
\end{proof}

We are now ready to prove the main result.

\begin{proof}[of Theorem \ref{thm:main_opt}]
    We set $n=\lceil d/4\rceil$ and $k=\lceil 20\frac{M+3d\log(2d)+1}{c_H n} \rceil$. By Proposition \ref{prop:true_first_order_procedure}, with probability at least $1-C\sqrt{\log d}/d^2$, the procedure is consistent with a first-order oracle for convex optimization. Hence, since the functions $F_{\mb A, \mb v ,P, L}$ are $\sqrt  d$-Lipschitz, any $M$-bit algorithm guaranteed to solve convex optimization within accuracy $\epsilon =\eta/(2d)=1/d^4$ for $1$-Lipschitz functions, yields an algorithm that is successful for the optimization procedure with probability at least $1-C\sqrt{\log d}/d^2$ and precision $\epsilon\sqrt d = \eta/(2\sqrt d)$. Suppose that it uses at most $Q$ queries. Then, by Proposition \ref{prop:reductionto_orthogonal_vector_game}, there is a strategy for Game \ref{game:hint_game} for parameters $(d,k,\lceil Q/p_{max}\rceil +1, M,\alpha = \frac{2\eta}{\gamma_1}, \beta = \frac{\gamma_2}{4})$ in which the Player wins with probability at least $1-C'\sqrt{\log d}/d$. Now for $d$ large enough, this probability is at least $1/2$. Further,
    \begin{equation*}
        \frac{2\eta}{\gamma_1}\left(\frac{4\sqrt d}{\gamma_2}\right)^{5/4} \leq \frac{(4/3)^{5/4}}{6}\eta d^3 \leq \frac{1}{2}.
    \end{equation*}
    Hence, by Proposition \ref{prop:lower_bound_queries_game}, one has
    \begin{equation*}
        \lceil Q/p_{max}\rceil +1 \geq \frac{c_H}{8(30\log d + c_H)}d.
    \end{equation*}
    Because $p_{max} =\Theta((d/k)^{1/3} \log^{-2/3} d)$, this implies
    \begin{equation*}
        Q = \Omega\left(\frac{(d/k)^{1/3}d}{\log^{5/3} d}\right) = \Omega\left( \frac{d^{5/3}}{(M+\log d)^{1/3} \log^{5/3} d}\right).
    \end{equation*}
    In particular, if $M=d^{1+\delta}$ for $\delta\in[0,1]$, the number of queries is $Q=\tilde\Omega(d^{1+(1-\delta)/3})$.
\end{proof}

\section{Memory-constrained feasibility problem}
\label{sec:feasibility}

\subsection{Defining the feasibility procedure}

Similarly to Section \ref{sec:optimization}, we pose $n=\lceil d/4\rceil$. Also, for any matrix $\mb A\in\{\pm 1\}^{n\times d}$, we use the same functions $\mb g_{\mb A}$ and $\tilde{\mb g}_{\mb A}$. We use similar techniques as those we introduced for the optimization problem. However, since in this case, the separation oracle only returns a separating hyperplane, without any value considerations of an underlying function, Procedure \ref{proc:optimization} can be drastically simplified, which leads to improved lower bounds.

Let $\eta_0 = 1/(24d^2)$, $\eta_1=\frac{1}{2\sqrt{d}}$, $\delta=1/d^3$, and $k\leq d/3-n$ be a parameter. Last, let $p_{max} = \lfloor (c_{d,1} d-1)/(k-1)\rfloor$, where $c_{d,1}$ is the same quantity as in Eq~\eqref{eq:definition_p_max}. The feasibility procedure is defined in Procedure \ref{proc:feasibility}. The oracle first randomly samples $\mb A\sim\Ucal( \{\pm 1\}^{n\times d})$ and $\mb v_0\sim \Ucal(\Dcal_\delta)$. This matrix and vector are then fixed in the rest of the procedure. Whenever the player queries a point $\mb x$ such that $\|\mb A\mb x\|_\infty > \eta_0$ (resp. $\mb v_0^\top \mb x>-\eta_1$), the oracle returns $\tilde{\mb g}_{\mb A}(\mb x)$ (resp. $\mb v_0$). All other queries are called \emph{informative} queries. With this definition, it now remains to define the separation oracle on informative queries. The oracle proceeds by periods in which the behavior is different. In each period $p$, the oracle constructs vectors $\mb v_{p,1},\ldots,\mb v_{p,k-1}$ inductively and keeps in memory some queries $i_{p,1},\ldots, i_{p,k}$ that will be called \emph{exploratory}. The first informative query $t$ will be the first exploratory query and starts period $1$.

Given a new query $\mb x_t$,
\begin{enumerate}
    \item If $\|\mb A\mb x\|_\infty > \eta_0$, the oracle returns $\tilde{\mb g}_{\mb A}(\mb x_t)$.
    \item If $\mb v_0^\top \mb x_t > -\eta_1$, the oracle returns $\mb v_0$.
    \item If $\mb x_t$ was queried in the past sequence, the oracle returns the same vector that was returned previously.
    \item Otherwise, let $p$ be the index of the current period and let $\mb v_{p,1},\ldots, \mb v_{p,l}$ be the vectors from the current period constructed so far, together with their corresponding exploratory queries $i_{p,1}\ldots, i_{p,l} < t$. Potentially, if $p=1$ one may not have defined any such vectors at the beginning of time $t$. In this case, let $l=0$.
    \begin{enumerate}
        \item If $\max_{1\leq l'\leq l} \mb v_{p,l'}^\top \mb x_t > - \eta_1$ (with the convention $\max_{\emptyset} = -\infty$), the oracle returns $\mb v_{p,l'}$ where $l' =\argmax_{l\leq r} \mb v_{p,l}^\top \mb x_t $. Ties are broken alphabetically.
        \item Otherwise, if $l<k-1$, we first define $i_{p,l+1} = t$. Then, let $\mb b_{p,1},\ldots,\mb b_{p,l+1}$ be the result from the Gram-Schmidt decomposition of $\mb x_{i_{p,1}},\ldots,\mb x_{i_{p,l+1}}$ and let $\mb y_{p,l+1}$ be a sample of the distribution obtained by the uniform distribution $\mb y_{p,l+1}\sim \Ucal(S^{d-1}\cap \left\{\mb z\in \Rbb^d : |\mb b_{p,r}^\top \mb z|\leq \frac{1}{d^3},\forall r\leq l+1\right\})$. We then pose $\mb v_{p,l+1} = \phi_\delta(\mb y_{p,l+1})$. Having defined this new vector, the oracle returns $\mb v_{p,l}$. We then increment $l$.
        \item Otherwise, if $r=k$, we define $i_{p,k} = i_{p+1,1} = t$. If $p+1\leq p_{max}$, this starts the next period $p+1$. As above, let $\mb b_{p+1,1}$ be the result of the Gram-Schmidt decomposition of $\mb x_{i_{p+1,1}}$ and sample $\mb y_{p+1,1}$ according to a uniform $\mb y_{p+1,1}\sim\Ucal(S^{d-1} \cap \left\{\mb z\in \Rbb^d : |\mb b_{p+1,1}^\top \mb z|\leq \frac{1}{d^3} \right\})$. We then pose $\mb v_{p+1,1} = \phi_\delta(\mb y_{p+1,1})$ and the oracle returns $\mb v_{p+1,1}$. We can then increment $p$ and reset $l=1$.
    \end{enumerate}
\end{enumerate}
The above construction ends when the period $p_{max}$ is finished. At this point, the oracle has defined the vectors $\mb v_{p,l}$ for all $p\leq p_{max}$ and $l\leq k$. We then define the successful set as
\begin{equation*}
    Q_{\mb A, \mb v} = \left\{\mb x\in B_d(0,1): \|\mb A\mb x\|_\infty \leq \eta_0, \mb v_0^\top \mb x \leq -\eta_1, \max_{p\leq p_{max},l\leq k-1} \mb v_{p,l}^\top \mb x \leq -\eta_1 \right\}.
\end{equation*}
From now on, the procedure uses any separation oracle for $Q_{\mb A, \mb v}$ as responses to the algorithm, while making sure to be consistent with previous oracle reponses if a query is exactly duplicated. We now define what we mean by solving the above feasibility procedure.

\begin{proc}[ht]
        
\caption{The feasibility procedure for algorithm $alg$}\label{proc:feasibility}

\SetAlgoLined
\LinesNumbered

\everypar={\nl}

\hrule height\algoheightrule\kern3pt\relax
\KwIn{$d$, $k$, $p_{max}$, algorithm $alg$}
\vspace{5pt}

Sample $\mb A\sim\Ucal(\{\pm 1\}^{n\times d})$ and $\mb v_0\sim\Ucal(\Dcal_\delta)$.\;

Initialize the memory of $alg$ to $\mb 0$ and let $p=1$, $l=0$.\;

\For{$t\geq 1$}{
    Run $alg$ with current memory to obtain a query $\mb x_t$\;
        
    \lIf{$\|\mb A \mb x_t\|>\eta_0$}{
        \Return $\tilde{\mb g}_{\mb A}(\mb x_t)$ as response to $alg$
    }
    \lElseIf{$\mb v_0^\top \mb x_t >-\eta_1$}{
        \Return $\mb v_0$ as response to $alg$
    }
    \lElseIf{Query $\mb x_t$ was made in the past}{
            \Return the same vector that was returned for $\mb x_t$
        }
    \uElse{
        \uIf{$\max_{1\leq l'\leq l} \mb v_{p,l'}^\top \mb x_t > - \eta_1$}{
            \Return $\mb v_{p,l'}$ where $l' =\argmax_{l\leq r} \mb v_{p,l}^\top \mb x_t $.
        }
        \uElseIf{$l<k-1$}{
            Let $i_{p,l+1}=t$ and compute Gram-Schmidt decomposition $\mb b_{p,1},\ldots,\mb b_{p,l+1}$ of $\mb x_{i_{p,1}},\ldots, \mb x_{i_{p,l+1}}$.\;

            Sample $\mb y_{p,l+1}$ uniformly on $\mathcal S^{d-1} \cap \{\mb z\in \Rbb^d : |\mb b_{p,l'}^\top \mb z| \leq d^{-3},\forall l'\leq l+1\}$ and define $\mb v_{p,l+1}=\phi_\delta(\mb y_{p,l+1})$.\;

            \Return $\mb v_{p,l+1}$ as response to $alg$ and increment $l\leftarrow l+1$.
        }
        \uElseIf{$p+1\leq p_{max}$}{
            Set $i_{p,k}=i_{p+1,1}=t$ and compute the Gram-Schmidt decomposition $\mb b_{p+1,1}$ of $\mb x_{i_{p+1,1}}$\;

            Sample $\mb y_{p+1,1}$ uniformly on $\mathcal S^{d-1} \cap \{ \mb z \in \Rbb^d : |\mb b_{p+1,1}^\top \mb z| \leq d^{-3}\}$ and define $\mb v_{p+1,1}=\phi_\delta(\mb y_{p+1,1})$.

            \Return $\mb v_{p+1,1}$ as response to $alg$, increment $p\leftarrow p+1$ and reset $l=1$.
        }
        \lElse{Set $i_{p_{max},k}=t$ and break the \textbf{for} loop}
    }
}

\vspace{5pt}

\lFor{$t'\geq t$}{
    Use any separation oracle for $Q_{\mb A, \mb v}$ consistent with previous responses
}

\hrule height\algoheightrule\kern3pt\relax
    \end{proc}

\begin{definition}
Let $alg$ be an algorithm for the feasibility problem. When running $alg$ with the responses of the feasibility procedure, we denote by $\mb v$ the set of constructed vectors and $\mb x^\star(alg)$ the final answer returned by $alg$. We say that an algorithm $alg$ is successful for the feasibility procedure with probability $q\in[0,1]$, if taking $\mb A\sim\Ucal(\{\pm 1\}^{n\times d})$, with probability at least $q$ over the randomness of $\mb A$ and of the procedure, $\mb x^\star(alg) \in Q_{\mb A, \mb v}.$
\end{definition}

In the rest of this section, we first relate this feasibility procedure to the standard feasibility problem, then prove query lower bounds to solve the feasibility procedure.

\subsection{Reduction from the feasibility problem to the feasibility procedure}

In the next proposition, we check that the above procedure indeed corresponds to a valid feasibility problem.

\begin{proposition}\label{prop:valid_problem}
    On an event of probability at least $1-C\sqrt{\log d}/d$, the procedure described above is a valid feasibility problem. More precisely, the following hold.
    \begin{itemize}
        \item There exists $\bar{\mb x} \in B_d(0,1)$ such that $\|\mb A\bar{\mb x}\|_\infty = 0$, $\mb v_0^\top \bar{\mb x}\leq -4\eta_1 $, and 
        \begin{equation*}
            \max_{p\leq p_{max},l\leq k-1} \mb v_{p,l}^\top \bar{\mb x} \leq - 4 \eta_1.
        \end{equation*}
        \item Let $\epsilon = \min\{\eta_0/\sqrt d,\eta_1\}/2$. Then, $ B_d\left(\bar{\mb x}-\epsilon \frac{\bar{\mb x}}{\|\bar{\mb x}\|} ,\epsilon \right )\subseteq B_d(0,1)\cap B_d(\bar{\mb x},2\epsilon)\subseteq Q_{\mb A, \mb v}.$
        \item Throughout the run of the feasibility problem, the separation oracle always returned a valid cut, i.e., for any iteration $t$, if $\mb x_t$ denotes the query and $\mb g_t$ is the returned vector from the oracle, one has
        \begin{equation*}
            \forall \mb x\in Q_{\mb A, \mb v}, \quad \langle \mb g_t,\mb x_t - \mb x\rangle > 0.
        \end{equation*}
        Further, responses are consistent: if $\mb x_t=\mb x_{t'}$, the responses of the procedure at times $t$ and $t'$ coincide.
    \end{itemize}
\end{proposition}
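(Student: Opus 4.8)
The three bullets will be proved in order; the second and third are deterministic consequences of the first (the third, in fact, holds with probability one), so the probability $1-C\sqrt{\log d}/d$ in the statement comes entirely from the first.

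\emph{The deep feasible point.} The plan is to take $\bar{\mb x}$ exactly as in the proof of Proposition~\ref{prop:low_optimum}. Enumerate the constructed vectors $\mb v_0,\mb v_1,\ldots,\mb v_{l_{max}}$ in order of construction; since $p_{max}=\lfloor(c_{d,1}d-1)/(k-1)\rfloor$ and each period yields at most $k-1$ of them, $l_{max}+1\le c_{d,1}d+1\le d$. Writing $P_{\mb A^\perp}:=P_{Span(\mb a_i,i\le n)^\perp}$ and $C_d:=\sqrt{40(l_{max}+1)\log d}$, put $\bar{\mb x}=-\frac{1}{C_d}\sum_{l=0}^{l_{max}}P_{\mb A^\perp}(\mb v_l)$. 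The point is that the proof of Proposition~\ref{prop:low_optimum} only uses that, conditionally on the past, each $\mb y_l$ is uniform on $S^{d-1}$ intersected with constraints $|\mb b^\top\mb z|\le d^{-3}$ to at most $k$ previously-fixed vectors $\mb b$, and $\mb v_l=\phi_\delta(\mb y_l)$ — and this holds verbatim in the feasibility procedure, the adaptivity being irrelevant. Hence the same estimates go through: an induction on $\mb z_l:=\sum_{l'\le l}P_{\mb A^\perp}(\mb v_{l'})$ via Lemma~\ref{lemma:concentration_bound} gives $\|\mb z_l\|^2\le 40\log d\,(l+1)$, hence $\|\bar{\mb x}\|\le1$; Proposition~\ref{prop:projection_concentration} gives $\|P_{\mb A^\perp}(\mb v_l)\|^2\ge\frac12-o(1)$ (using $n+k\le d/3$); and Azuma--Hoeffding controls the cross terms $\sum_{l'>l}\mb v_l^\top P_{\mb A^\perp}(\mb v_{l'})$. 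On the resulting event of probability $1-C\sqrt{\log d}/d$ one gets $\mb v_l^\top\bar{\mb x}\le-\frac{1}{6C_d}$ for all $l$ (including $l=0$), and $\|\mb A\bar{\mb x}\|_\infty=0$ since $\bar{\mb x}\in Span(\mb a_i,i\le n)^\perp$. As $c_{d,1}=1/(90^2\log^2 d)$, a short computation gives $\frac{1}{6C_d}\ge 4\eta_1=\frac{2}{\sqrt d}$ for $d$ large, which is the first bullet.

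\emph{The feasible ball.} Given the first bullet this is elementary geometry. From $|\mb v_0^\top\bar{\mb x}|\ge4\eta_1$ and $\|\mb v_0\|=1$ we get $\|\bar{\mb x}\|\ge4\eta_1>\epsilon$, so $\mb c:=\bar{\mb x}-\epsilon\frac{\bar{\mb x}}{\|\bar{\mb x}\|}$ satisfies $\|\mb c\|=\|\bar{\mb x}\|-\epsilon$ and $\|\mb c-\bar{\mb x}\|=\epsilon$; thus every $\mb z\in B_d(\mb c,\epsilon)$ has $\|\mb z\|\le\|\bar{\mb x}\|\le1$ and $\|\mb z-\bar{\mb x}\|\le2\epsilon$, giving the first inclusion. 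For the second, take $\mb x\in B_d(0,1)$ with $\|\mb x-\bar{\mb x}\|\le2\epsilon$: since each row of $\mb A$ has Euclidean norm $\sqrt d$ and $2\epsilon\le\eta_0/\sqrt d$, $\|\mb A\mb x\|_\infty\le\|\mb A\bar{\mb x}\|_\infty+\sqrt d\,\|\mb x-\bar{\mb x}\|\le\eta_0$; and since $\|\mb v_0\|=\|\mb v_{p,l}\|=1$ and $2\epsilon\le\eta_1$, $\mb v_0^\top\mb x\le-4\eta_1+2\epsilon\le-\eta_1$ and $\mb v_{p,l}^\top\mb x\le-\eta_1$ for every constructed $\mb v_{p,l}$; hence $\mb x\in Q_{\mb A,\mb v}$.

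\emph{Validity and consistency of the oracle.} The plan is a case check on Procedure~\ref{proc:feasibility}, always against the final set $Q_{\mb A,\mb v}$. In cases~1, 2 and 4(a) the returned $\mb g$ obeys $\mb g^\top\mb x_t>\tau\ge\mb g^\top\mb x$ for all $\mb x\in Q_{\mb A,\mb v}$, with $\tau=\eta_0$ in case~1 (from the definition of $\tilde{\mb g}_{\mb A}$ and $\|\mb A\mb x\|_\infty\le\eta_0$ on $Q_{\mb A,\mb v}$) and $\tau=-\eta_1$ in cases~2 and 4(a); so $\langle\mb g,\mb x_t-\mb x\rangle>0$. In cases~4(b), 4(c) the returned vector is the freshly built $\mb v_{p,l+1}$ (resp.\ $\mb v_{p+1,1}$), and the key observation is that the triggering query $\mb x_t=\mb x_{i_{p,l+1}}$ lies in the span of the Gram--Schmidt basis to which $\mb y_{p,l+1}$ is orthogonal up to $d^{-3}$, so $|\mb v_{p,l+1}^\top\mb x_t|\le\sqrt k\,d^{-3}+\delta<\eta_1$ (as $k\le d$); hence $\mb v_{p,l+1}^\top\mb x_t>-\eta_1\ge\mb v_{p,l+1}^\top\mb x$ for $\mb x\in Q_{\mb A,\mb v}$, and the same for $\mb v_{p+1,1}$. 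After period $p_{max}$ ends the procedure uses a genuine separation oracle for $Q_{\mb A,\mb v}$, valid by definition. Consistency is immediate: a repeated query during the loop is answered identically (case~3), afterward consistency with previous responses is imposed explicitly, and each in-loop cut above was shown valid for the \emph{final} $Q_{\mb A,\mb v}$, since the arguments used only the final constraints $\|\mb A\mb x\|_\infty\le\eta_0$, $\mb v_0^\top\mb x\le-\eta_1$, $\mb v_{p,l}^\top\mb x\le-\eta_1$.

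The only real work is the first bullet, and even there the difficulty is bookkeeping: one must check that the adaptivity in the construction of $\mb v$ does not disturb the concentration estimates (it does not, because $\mb y_l$ stays conditionally uniform given the past), and then re-run the proof of Proposition~\ref{prop:low_optimum} with $c_{d,1}d$ in place of $kp_{max}+1$. The remaining two bullets are routine geometry and a case analysis of the oracle.
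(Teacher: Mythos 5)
Your proof is correct and follows essentially the same route as the paper's: the deep feasible point $\bar{\mb x}$ is taken exactly as in Proposition~\ref{prop:low_optimum} (with the constructed vectors re-indexed), the containment claim reduces to Lipschitzness of the three constraint functions, and the validity of each in-loop cut is checked case by case. You organize the third bullet by oracle branch (1, 2, 3, 4a, 4b, 4c) where the paper organizes it by whether the returned vector was constructed at the first query of $\mb x_t$ or earlier, but the underlying computations—in particular the bound $|\mb v_{p,l+1}^\top\mb x_{i_{p,l+1}}|\le\|\mb\alpha\|_1 d^{-3}+\delta<\eta_1$ from the Gram–Schmidt orthogonality of $\mb y_{p,l+1}$—are identical.
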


We use a similar proof to that of Proposition \ref{prop:low_optimum}.

\begin{proof}
    For convenience, we rename $\mb v_{p,l} = \mb v_{(p-1)(k-1)+l}$. Also, let $l_{max} = p_{max}(k-1) \leq c_{d,1} d-1$. Next, let $C_d = \sqrt{40l_{max}\log d}$. We define the vector 
    \begin{equation*}
        \bar {\mb x} = - \frac{1}{C_d} \sum_{l=0}^{l_{max}} P_{Span(\mb a_i,i\leq n)^\perp}(\mb v_l).
    \end{equation*}
    Since $l_{max}\leq p_{max}(k-1)\leq  c_{d,1}d-1$, the same arguments as in the proof of Proposition \ref{prop:low_optimum} show that on an event $\Ecal$ of probability at least $1-C\sqrt{\log d}/d$, we have $\|\bar{\mb x}\| \leq 1$ and
    \begin{equation*}
        \max_{0\leq l\leq l_{max}}\mb v_l^\top \bar{\mb x} \leq -\frac{1}{40\sqrt{(l_{max}+1)\log d}}\leq -\frac{2}{\sqrt d}=-4\eta_1,
    \end{equation*}
    where in the second inequality we used $l_{max} \leq c_{d,1}d-1$. Now by construction, one has $\|\mb A\bar{\mb x}\|_\infty = 0$. This ends the proof of the first claim of the proposition. We now turn to the second claim, which is immediate from the fact that $\mb x\mapsto \|\mb A\mb x\|_\infty$ is $\sqrt d$-Lipschitz and both $\mb x\mapsto \mb v_0^\top \mb x$ and  $\mb x\mapsto \max_{p\leq p_{max},l\leq k}\mb v_{p,l}^\top \mb x$ are $1$-Lipschitz. Therefore, $B_d(\bar{\mb x}-\epsilon \bar{\mb x}/\|\bar{\mb x}\|,\epsilon)\subseteq B_d(0,1)\cap B_d(\bar{\mb x},2\epsilon)\subset Q_{\mb A, \mb v}$. It now remains to check that the third claim is satisfied. It suffices to check that this is the case during the construction phase of the feasibility procedure. By construction of $Q_{\mb A,\mb v}\subset \{\mb x: \|\mb A\mb x\|_\infty \leq \eta_0\}$.
    
    Hence, it suffices to check that for informative queries $\mb x_t$, the returned vectors $\mb g_t$ are valid separation hyperplanes. By construction, these can only be either $\mb v_0$ or $\mb v_{p,l}$ for $p\leq p_{max}$, $l\leq k-1$. We denote by $\mb w$ this vector. Let $t'$ be the first time $\mb x_t$ was queried. There are two cases. Either $\mb w$ was not constructed at time $t'$, in which case, by construction this means that we are in scenario (2) or (4a). Both cases imply $\mb w^\top \mb x_t > -\eta_1$. Hence, $\mb w$ which is returned by the procedure is a valid separation hyperplane. Now suppose that $\mb w=\mb v_{p,l}$ was constructed at time $t'$---scenarios (4b) or (4c). By construction, one has $|\mb b_{p,r}^\top \mb y_{p,l}|\leq d^{-3}$ for all $r\leq l$. Decomposing $\mb x_t = \mb x_{i_{p,l}} = \alpha \mb b_{p,1}+\ldots + \alpha_l \mb b_{p,l}$, we obtain
    \begin{equation*}
        |\mb x_t^\top \mb y_{p,l}| \leq \frac{\|\mb \alpha\|_1}{d^3} \leq \frac{1}{d^2\sqrt d}.
    \end{equation*}
    As a result, $\mb y_{p,l}^\top \mb x_t \geq -1/(d^2\sqrt d)$. Now because $\mb v_{p,l} = \phi_\delta(\mb y_{p,l})$, we have $\|\mb v_{p,l} - \mb y_{p,l}\|\leq \delta$. Hence, for any $d\geq 2$,
    \begin{equation*}
        \mb w^\top \mb x_t \geq -1/(d^2\sqrt d)-\delta >-\eta_1.
    \end{equation*}
    Hence, $\mb w$ was a valid separation hyperplane. The last claim that the responses of the procedure are consistent over time is a direct consequence from its construction. This ends the proof of the proposition.
\end{proof}

As a simple consequence of this result, solving the feasibility problem is harder than solving the feasibility procedure with high probability.

\begin{proposition}\label{prop:reduction_feasibility_procedure}
    Let $alg$ be an algorithm that solves the feasibility problem with accuracy $\epsilon = 1/(48d^2\sqrt d)$. Then, it solves the feasibility procedure with probability at least $1-C\sqrt{\log d}/d$.
\end{proposition}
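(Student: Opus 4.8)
The plan is to simply invoke Proposition~\ref{prop:valid_problem} and observe that, on its high-probability event, the feasibility procedure \emph{is} a legitimate instance of the feasibility problem with accuracy exactly $\epsilon$, so that the hypothesis on $alg$ forces its final answer into $Q_{\mb A, \mb v}$. Let $\Ecal$ denote the event of probability at least $1-C\sqrt{\log d}/d$ furnished by Proposition~\ref{prop:valid_problem} (we let $\mb v$ be the full set of vectors the procedure constructs, as in that proposition). The first concrete step is to pin down the inradius of $Q_{\mb A, \mb v}$ guaranteed on $\Ecal$: since $\eta_0 = 1/(24 d^2)$ and $\eta_1 = 1/(2\sqrt d)$, one has $\eta_0/\sqrt d = 1/(24 d^2 \sqrt d) \le \eta_1$ for every $d \ge 1$, so the quantity $\epsilon' := \min\{\eta_0/\sqrt d,\eta_1\}/2$ appearing in the second bullet of Proposition~\ref{prop:valid_problem} equals precisely $1/(48 d^2 \sqrt d) = \epsilon$. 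Hence on $\Ecal$ the successful set $Q_{\mb A, \mb v}$ contains the ball $B_d(\bar{\mb x} - \epsilon\,\bar{\mb x}/\|\bar{\mb x}\|,\,\epsilon)$, i.e.\ a ball of radius $\epsilon$ centered at an interior point of $B_d(0,1)$.

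Next I would verify that, on $\Ecal$, the transcript of responses the procedure hands to $alg$ is exactly one that a valid separation oracle for $Q_{\mb A, \mb v}$ could produce. By the third bullet of Proposition~\ref{prop:valid_problem}, every response $\mb g_t$ returned during the construction phase satisfies $\langle \mb g_t,\mb x_t-\mb x\rangle>0$ for all $\mb x\in Q_{\mb A, \mb v}$; taking $\mb x=\mb x_t$ shows in particular that $\mb x_t\notin Q_{\mb A, \mb v}$, so returning a cut (rather than $\mathsf{Success}$) is consistent with the definition of a separation oracle, and these responses are consistent across repeated queries. After the construction phase the procedure, by its very definition, uses a genuine separation oracle for $Q_{\mb A, \mb v}$. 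Therefore, on $\Ecal$, running $alg$ against the feasibility procedure is indistinguishable for $alg$ from running it against some valid feasibility instance whose successful set contains a ball of radius $\epsilon$.

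Since by hypothesis $alg$ solves every feasibility instance whose successful set contains a ball of radius $\epsilon = 1/(48 d^2 \sqrt d)$, its final answer must satisfy $\mb x^\star(alg)\in Q_{\mb A, \mb v}$ on $\Ecal$; consequently $\Pbb(\mb x^\star(alg)\in Q_{\mb A, \mb v}) \ge \Pbb(\Ecal) \ge 1-C\sqrt{\log d}/d$, which is exactly the claim that $alg$ is successful for the feasibility procedure with probability at least $1-C\sqrt{\log d}/d$. There is essentially no obstacle beyond the bookkeeping already carried out in Proposition~\ref{prop:valid_problem}; the only points requiring care are the arithmetic identity $\min\{\eta_0/\sqrt d,\eta_1\}/2=\epsilon$ and the observation that no construction-phase query lies in $Q_{\mb A, \mb v}$, so the procedure never has to (wrongly) withhold a $\mathsf{Success}$ report.
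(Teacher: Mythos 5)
Your proof is correct and follows the same approach as the paper: invoke Proposition~\ref{prop:valid_problem} to get the event $\Ecal$ on which the procedure emulates a valid separation oracle and $Q_{\mb A, \mb v}$ contains a ball of radius $\epsilon$, then conclude that $alg$ must succeed on $\Ecal$. The extra detail you supply (the arithmetic check that $\min\{\eta_0/\sqrt d,\eta_1\}/2 = \epsilon$ and the consistency of the oracle transcript) is exactly the bookkeeping the paper leaves implicit.
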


\begin{proof}
    Let $\Ecal$ be the event of probability at least $1-C\sqrt{\log d}/d$ defined in Proposition \ref{prop:valid_problem}. We show that on $\Ecal$, $alg$ solves the feasibility procedure. On $\Ecal$, the feasibility procedure emulates is a valid feasibility oracle. Further, on $\Ecal$, the successful set contains a closed ball of radius $\epsilon$. As a result, on $\Ecal$, $alg$ finds a solution to the feasibility problem emulated by the procedure.
\end{proof}

Next, we show that it is necessary to finish the $p_{max}$ periods to solve the feasibility procedure.

\begin{proposition}\label{prop:finish_procedure_necessary}
    Fix an algorithm $alg$. Then, if $\Acal$ denotes the event when $alg$ succeeds and $\Bcal$ denotes the event when the procedure ends period $p_{max}$ with $alg$, then $\Ecal\subseteq\Bcal$.
\end{proposition}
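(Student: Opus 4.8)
The plan is to establish $\Acal\subseteq\Bcal$: whenever $alg$ succeeds, the feasibility procedure must have completed period $p_{max}$. I would argue by contraposition and entirely deterministically --- in contrast with the optimization analogue (Proposition~\ref{prop:necessary_vectors}, which loses a $\sqrt{\log d}/d$ term because it invokes the lower bound on the optimum) --- relying only on the fact that a separation oracle never certifies membership of a query lying outside the feasible set.

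Fix an outcome in $\Bcal^c$. Since $alg$ succeeds only if it halts, assume it halts at some finite iteration $T$ with output $\mb x^\star:=\mb x^\star(alg)=\mb x_T$. Because period $p_{max}$ is never completed, the \textbf{for} loop of Procedure~\ref{proc:feasibility} is never broken, so every query --- $\mb x_T$ included --- is answered inside the loop, and the terminal branch that would close period $p_{max}$ is never taken. I would then pass to the first iteration $t'\leq T$ at which the point $\mb x^\star$ was queried, so that $\mb x_{t'}=\mb x^\star$ and, by minimality of $t'$, iteration $t'$ is not the ``query already made before'' branch. Consequently iteration $t'$ falls into exactly one of: (a) $\|\mb A\mb x_{t'}\|_\infty>\eta_0$; (b) $\mb v_0^\top\mb x_{t'}>-\eta_1$; (c) an informative query with $\max_{l'\leq l}\mb v_{p,l'}^\top\mb x_{t'}>-\eta_1$, whence an already-constructed $\mb v_{p,l'}$ with index $l'\leq l\leq k-1$ is returned; or (d) an informative query recording a new exploratory index ($i_{p,l+1}=t'$ or $i_{p+1,1}=t'$) and returning the freshly sampled vector $\mb w=\phi_\delta(\mb y)$.

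It then remains to exhibit, in each case, a halfspace of $Q_{\mb A,\mb v}=\{\mb x\in B_d(0,1):\|\mb A\mb x\|_\infty\leq\eta_0,\ \mb v_0^\top\mb x\leq-\eta_1,\ \max_{p,\,l\leq k-1}\mb v_{p,l}^\top\mb x\leq-\eta_1\}$ (the maximum running over the vectors actually constructed) that $\mb x^\star$ violates. Cases (a) and (b) are immediate. In case (c) the returned $\mb v_{p,l'}$ has index $\leq k-1$ and satisfies $\mb v_{p,l'}^\top\mb x_{t'}>-\eta_1$, contradicting $\mb v_{p,l'}^\top\mb x^\star\leq-\eta_1$. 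In case (d), $\mb x_{t'}$ is the newly-recorded exploratory query itself; decomposing it as $\mb x_{t'}=\sum_r\alpha_r\mb b_{p,r}$ in the Gram--Schmidt basis $\{\mb b_{p,r}\}$ used to draw $\mb y$, and using $\|\mb\alpha\|_1\leq\sqrt d\,\|\mb x_{t'}\|\leq\sqrt d$ together with $|\mb b_{p,r}^\top\mb y|\leq d^{-3}$, the same estimate as in the proof of Proposition~\ref{prop:valid_problem} gives $\mb w^\top\mb x_{t'}\geq-1/(d^2\sqrt d)-\delta>-\eta_1$ for $d\geq2$; since $\mb w$ is one of the constructed $\mb v_{p,l}$ of index $\leq k-1$, once more $\mb w^\top\mb x^\star\leq-\eta_1$ fails. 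In every case $\mb x^\star\notin Q_{\mb A,\mb v}$, so $alg$ fails, which proves $\Acal\subseteq\Bcal$.

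There is no genuine obstacle here; the only points needing care are bookkeeping. One must replace the final query by its first-occurrence time $t'$ so as to eliminate the duplicate-query branch, and one must verify in case (d) both that the vector returned by the oracle genuinely appears among the constraints defining $Q_{\mb A,\mb v}$ (index at most $k-1$, and period at most $p_{max}$) and that it satisfies the strict inequality $\mb w^\top\mb x_{t'}>-\eta_1$ --- the latter being precisely the computation already carried out to prove validity of the separating hyperplanes in Proposition~\ref{prop:valid_problem}, which may simply be quoted.
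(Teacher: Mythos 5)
Your proposal is correct and follows essentially the same route as the paper: pass to the first occurrence of the final query, case-split on the oracle branch taken, and in the exploratory case use the Gram--Schmidt bound $|\mb x_{t}^\top\mb y_{p,l}|\leq\|\mb\alpha\|_1/d^3\leq d^{-2.5}$ together with $\|\mb v_{p,l}-\mb y_{p,l}\|\leq\delta$ to conclude $\mb v_{p,l}^\top\mb x_t>-\eta_1$. The extra bookkeeping you flag (halting, index bounds $l\leq k-1$ and $p\leq p_{max}$) is sound and, if anything, slightly more explicit than the paper's write-up, but it is not a genuinely different argument.
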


\begin{proof}
    Consider the case when the period $p_{max}$ was not ended. Let $\mb x^\star$ denote the last query performed by $alg$. We consider the scenario in which $\mb x^\star$ fell. Let $t$ be the first time when $alg$ submitted query $\mb x^\star$. For any of the scenarios (1), (2), or (4a), by construction of $Q_{\mb A, \mb v}$, we already have $\mb x_t\notin Q_{\mb A, \mb v}$. It remains to check scenarios (4b) and (4c) for which the procedure constructs a new vector $\mb v_{p,l}$, where $p$ is the index of the period of $t$ and $i_{p,1},\ldots,i_{p,l}=t$ are the previous exploratory queries in period $p$. We decompose $\mb x_t = \mb x_{i_{p,l}} = \alpha_1\mb b_{p,1}+\alpha_l\mb b_{p,l}$. Now by construction,
    \begin{equation*}
        |\mb x_t^\top \mb y_{p,l}| = |\mb x_{i_{p,l}}^\top \mb y_{p,l}| \leq \frac{\|\mb \alpha\|_1}{d^3} \leq \frac{1}{d^2\sqrt d}.
    \end{equation*}
    As a result, $\mb x_t^\top \mb v_{p,l} \geq -|\mb x_t^\top \mb y_{p,l}|-\delta \geq -d^{-2.5} - d^{-3} >-\eta_1$, for any $d\geq 2$. Thus, $\mb x_t = \mb x^\star\notin Q_{\mb A, \mb v}$. This shows that in order to succeed at the feasibility procedure, an algorithm needs to end all $p_{max}$ periods.
\end{proof}

\subsection{Reduction to the Orthogonal Vector Game with Hints.}

The remaining piece of our argument is to show that solving the feasibility procedure is harder than solving the Orthogonal Vector Game with Hints, Game \ref{game:hint_game}.

\begin{proposition}\label{prop:reduction_orthogonal_game}
    Let $\mb A\sim \Ucal(\{\pm 1\}^{n\times d})$. If there exists an $M$-bit algorithm that solves the feasibility problem described above using $m p_{max}$ queries with probability at least $q$ over the randomness of the algorithm, choice of $\mb A$ and the randomness of the separation oracle, then there is an algorithm for Game \ref{game:hint_game} for parameters $(d,k,m,M,\alpha = \frac{\eta_0}{\eta_1},\beta = \frac{\eta_1}{2})$, for which the Player wins with probability at least $q$ over the randomness of the player's strategy and $\mb A$.
\end{proposition}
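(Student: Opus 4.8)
The plan is to copy, with simplifications, the reduction used to prove Proposition~\ref{prop:reductionto_orthogonal_vector_game}: given an $M$-bit algorithm $alg$ that solves the feasibility problem with $m p_{max}$ queries and success probability at least $q$, I would build a player for Game~\ref{game:hint_game} with parameters $(d,k,m,M,\alpha=\eta_0/\eta_1,\beta=\eta_1/2)$ in three parts, mirroring Algorithm~\ref{alg:strategy_optimization}. In \textbf{Part 1} the player observes $\mb A$, submits an empty query to obtain $\mb v_0\sim\Ucal(\Dcal_\delta)$, and then runs $alg$ against a faithful internal simulation of the feasibility procedure: each time a new vector $\mb v_{p,u}$ must be sampled, the player submits the current exploratory queries $\mb x_{i_{p,1}},\dots,\mb x_{i_{p,u}}$ to the Game's Oracle and uses the returned $\phi_\delta(\mb y_u)$ as $\mb v_{p,u}$; the Oracle's Gram--Schmidt-then-sample rule coincides with the procedure's, so the simulation is exact, and since at most $1+p_{max}(k-1)\le c_{d,1}d<d$ vectors are produced it fits within the $d$ allotted Oracle queries (remaining ones are answered arbitrarily). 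The player records the memory state of $alg$ at the start of each period; once period $p_{max}$ completes, it selects — using $i_{p_{max}+1,1}\le mp_{max}$, exactly as in the optimization proof — a period $p$ with $i_{p+1,1}-i_{p,1}\le m$, sets $\mathsf{Message}$ to the recorded state at the start of that period, and submits the separation-oracle function for $Q_{\mb A,\mb v}$ tagged by an identifier in $[d^2]$: return a line of $\mb A$ (tag $1$) if $\|\mb A\mb x\|_\infty>\eta_0$, else $\mb v_0$ (tag $2$) if $\mb v_0^\top\mb x>-\eta_1$, else the vector $\mb v_{p',l'}$ of lexicographically least index with $\mb v_{p',l'}^\top\mb x>-\eta_1$ (anything valid on $Q_{\mb A,\mb v}$); note $2+p_{max}(k-1)\le d^2$. \textbf{Part 2} resets $alg$'s memory to $\mathsf{Message}$ and replays it for $m$ steps, turning each Oracle answer $(\mb g_i,s_i)$ into the response $\mathrm{sign}(\mb g_i^\top\mb z_i)\mb g_i$ if $s_i=1$ and $\mb g_i$ otherwise, while tracking exploratory queries from the tags; \textbf{Part 3} returns the normalized exploratory queries $\mb z_{i_1}/\|\mb z_{i_1}\|,\dots,\mb z_{i_k}/\|\mb z_{i_k}\|$, extending to the $(m+1)$-st query of $alg$ if the $k$-th exploratory index was not reached within $m$ steps.

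For the analysis I would condition on the success event $\Acal$ (probability $\ge q$ over $\mb A$, $\mb v_0$ and the sampled $\mb y$'s, i.e.\ over $\mb A$ and the player's randomness). By Proposition~\ref{prop:finish_procedure_necessary}, on $\Acal$ the procedure ends period $p_{max}$, so Part~1 does not abort; the simulation reproduces the real run; and since the responses during a single period are generated by the same lexicographic rule applied to the vectors available at that time, the Part~2 replay reproduces the run of $alg$ during period $p$, so $\mb z_{i_u}=\mb x_{i_{p,u}}$ for all $u\le k$. The winning conditions then follow cheaply, which is the main gain over the optimization case. First, every exploratory query is informative, hence $\|\mb A\mb x_{i_{p,u}}\|_\infty\le\eta_0$ and $\|\mb x_{i_{p,u}}\|\ge|\mb v_0^\top\mb x_{i_{p,u}}|\ge\eta_1$, giving $\|\mb A\mb y_u\|_\infty\le\eta_0/\eta_1=\alpha$. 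Second, for $u\ge2$ the query $\mb x_{i_{p,u}}$ did not trigger case (4a), so $\mb v_{p,u-1}^\top\mb x_{i_{p,u}}\le-\eta_1$; writing $\mb x_{i_{p,u}}=\sum_{r<u}\alpha_r\mb b_{p,r}+\tilde{\mb x}$ with $\tilde{\mb x}\perp\mathrm{Span}(\mb x_{i_{p,1}},\dots,\mb x_{i_{p,u-1}})$ and using $|\mb b_{p,r}^\top\mb y_{p,u-1}|\le d^{-3}$, $\|\mb v_{p,u-1}-\mb y_{p,u-1}\|\le\delta$ and $\|\alpha\|_1\le\sqrt k$, one gets $\|\tilde{\mb x}\|\ge|\tilde{\mb x}^\top\mb y_{p,u-1}|\ge\eta_1-\delta-\sqrt k/d^3\ge\eta_1/2$ for $d$ large (the case $u=1$ being trivial), hence $\|P_{\mathrm{Span}(\mb y_1,\dots,\mb y_{u-1})^\perp}(\mb y_u)\|\ge(\eta_1/2)/\|\mb x_{i_{p,u}}\|\ge\eta_1/2=\beta$. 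Thus on $\Acal$ the returned vectors win, so the player wins with probability at least $q$. Note that, unlike the optimization procedure, the feasibility procedure needs no explicit projection test for exploratory queries: the test is automatically passed because the preceding constructed vector $\mb v_{p,u-1}$ is (by design) nearly orthogonal to the earlier exploratory queries yet still violated at $\mb x_{i_{p,u}}$.

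The step I expect to be most delicate is the claim, glossed above, that the \emph{single} function submitted to the Oracle — necessarily a function of the query alone — reproduces the adaptively generated responses $alg$ received during period $p$; that is, that vectors sampled at later steps do not retroactively change the answer at an earlier query. This is the feasibility analogue of the consistency statement Proposition~\ref{prop:true_first_order_procedure}, but considerably cleaner: there are no function values to match, the lexicographic-least tie-break makes the returned constraint monotone under adding further constraints, at a query triggering a new vector $\mb v_{p,l+1}$ that fresh vector is itself the least-index violated constraint (since $\mb y_{p,l+1}$ is nearly orthogonal to $\mb x_{i_{p,l+1}}$, so $\mb v_{p,l+1}^\top\mb x_{i_{p,l+1}}>-\eta_1$), and — importantly — the reduction never needs the response at the period-boundary query $\mb x_{i_{p,k}}$, which is the $(m+1)$-st query and is used only as an output vector, never fed back to $alg$, so no cross-period consistency of the submitted function is required. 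This is precisely why, in contrast with Proposition~\ref{prop:reductionto_orthogonal_vector_game}, no high-probability event is sacrificed and the player wins with probability exactly $q$.
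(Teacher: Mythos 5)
Your overall structure mirrors the paper's, and most of the technical pieces (the Part-1 simulation via Oracle queries, the pigeonhole argument for a period of length $\le m$, the verification of $\|\mb A\mb y_u\|_\infty\le\eta_0/\eta_1$, the Gram--Schmidt/approximate-orthogonality bound giving $\|P_{\mathrm{Span}(\mb y_1,\dots,\mb y_{u-1})^\perp}(\mb y_u)\|\ge\eta_1/2$, and the observation that no high-probability event need be sacrificed since Proposition~\ref{prop:finish_procedure_necessary} holds surely) are correct and essentially identical to the paper's proof of Proposition~\ref{prop:reduction_orthogonal_game}.

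However, the step you yourself flag as ``most delicate'' does in fact contain a genuine gap: the submitted separation-oracle function. You propose a purely geometric rule---return the lexicographically least $\mb v_{p',l'}$ with $\mb v_{p',l'}^\top\mb x>-\eta_1$, over all periods---and then argue for consistency via monotonicity of lex-least. But the feasibility procedure does not use a lex-least rule: in case (4a) it returns $\mb v_{p,l''}$ with $l''=\argmax_{l'\le l}\mb v_{p,l'}^\top\mb x_t$ (ties broken alphabetically), and it ranges only over the vectors \emph{of the current period defined so far}. Both discrepancies are fatal: (i) when two constraints from period $p$ are violated, argmax and lex-least can disagree, and the argmax can change as further vectors $\mb v_{p,l+1},\dots$ are added, so the procedure's response at a non-exploratory informative query is genuinely history-dependent and cannot be reproduced by a fixed geometric function over the final constraint set; (ii) nothing prevents a period-$p$ query from violating a period-$p'$ constraint ($p'<p$), which your lex-least function would return but the procedure would ignore. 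Since $alg$ is deterministic, any single mismatch causes the Part-2 replay to diverge from the Part-1 run, after which the identified exploratory queries no longer coincide with $\mb x_{i_{p,1}},\dots,\mb x_{i_{p,k}}$ and the winning-condition analysis collapses. Your monotonicity argument is correct \emph{for the lex-least rule}, but that is not the rule the procedure uses, and switching the procedure to lex-least would require re-proving Propositions~\ref{prop:valid_problem}--\ref{prop:finish_procedure_necessary}.

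The paper sidesteps this entirely: the submitted function $\tilde{\mb g}_{\mb A,\mb v}$ in Eq.~\eqref{eq:definition_subgradient_feasibility} is a \emph{lookup table} of the Part-1 responses, $\tilde{\mb g}_{\mb A,\mb v}(\mb x)=(\mb g(\mb x),\text{tag})$ where $\mb g(\mb x)$ is the vector actually returned to $alg$ in Part~1 (well-defined by the procedure's explicit consistency clause for repeated queries), and $(\mb 0,1)$ otherwise. Since the Part-2 replay starts from $\mathsf{Message}$ (the memory at $i_{p,1}$) and the lookup table returns exactly the historical responses, the replay reproduces the Part-1 queries verbatim with no consistency lemma needed. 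You should replace the geometric rule with this lookup table; the rest of your argument then goes through.
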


\begin{proof}
    Let $alg$ be an $M$-bit algorithm solving the feasibility problem with $m p_{max}$ queries with probability at least $q$. In Algorithm \ref{alg:strategy_feasibility}, we describe the strategy of the player in Game \ref{game:hint_game}.

\begin{algorithm}[ht]
        
\caption{Strategy of the Player for the Orthogonal Vector Game with Hints}\label{alg:strategy_feasibility}

\setcounter{AlgoLine}{0}
\SetAlgoLined
\LinesNumbered

\everypar={\nl}

\hrule height\algoheightrule\kern3pt\relax
\KwIn{$d$, $k$, $p_{max}$, $m$, algorithm $alg$}

\vspace{5pt}

{\nonl \textbf{Part 1:}} Strategy to store $\mathsf{Message}$ knowing $\mb A$\;

Initialize the memory of $alg$ to be $\mb 0$.\;

Submit $\emptyset$ to the Oracle and use the response as $\mb v_0$.\;

Run $alg$ with the optimization procedure knowing $\mb A$ and $\mb v_0$ until the first exploratory query $\mb x_{i_{1,1}}$.

\For{$p\in[p_{max}]$}{
    Let $\mathsf{Memory}_p$ be the current memory state of $alg$ and $i_{p,1}$ the current iteration step. \;
    
    Run $alg$ with the feasibility procedure until period $p$ ends at iteration step $i_{p+1,1}$. If $alg$ stopped before, \Return the strategy fails. When needed to sample a unit vector $\mb v_{p',l'}$, submit vectors $\mb x_{i_{p',1}},\ldots \mb x_{i_{p',l'}}$ to the Oracle. We use the corresponding response of the Oracle as $\mb v_{p',l'}$.\;

    \uIf{$i_{p+1,1}-i_{p,1}\leq m$}{
        Set $\mathsf{Message}=\mathsf{Memory}_p$
    }
}
\lFor{Remaining queries to perform to Oracle}{
    Submit arbitrary query, e.g. $\emptyset$
}

\lIf{$\mathsf{Message}$ has not been defined yet}{\Return The strategy fails}

Submit $\tilde{\mb g}_{\mb A, \mb v}$ to the Oracle as defined in Eq~\eqref{eq:definition_subgradient_feasibility}.\;

\vspace{5pt}

{\nonl \textbf{Part 2:}} Strategy to make queries\;

Set the memory state of $alg$ to be $\mathsf{Message}$.\;

\For{$i\in[m]$}{
    Run $alg$ with current memory to obtain a query $\mb z_i$.\;
    
    Submit $\mb z_i$ to the Oracle from Game \ref{game:hint_game}, to get response $(\mb g_i,s_i)$.\;
    
    Compute $\tilde{\mb g}_i$ using $\mb z_i$, $\mb g_i$ and $s_i$ as defined in Eq~\eqref{eq:recover_subgradient_feasibility} and pass $\tilde{\mb g}_i$ as response to $alg$.\;
}

\vspace{5pt}

{\nonl \textbf{Part 3:}} Strategy to return vectors\;

\lFor{$l\in[k]$}{
    Set $i_l$ to be the index $i$ of the first query $\mb z_i$ for which $s_i=l$, if it exists
}

\uIf{index $i_k$ has not been defined yet}{
    With the current memory of $alg$ find a new query $\mb z_{m+1}$ and set $i_k=m+1$.\;
}
\Return $\left\{\frac{\mb z_{i_1}}{\|\mb z_{i_1}\|},\ldots, \frac{\mb z_{i_k}}{\|\mb z_{i_k}\|}\right\}$ to the Oracle.

\hrule height\algoheightrule\kern3pt\relax
\end{algorithm}
 
\comment{

    \begin{algorithm}[ht]
        
\caption{Strategy of the Player for the Orthogonal Vector Game}\label{alg:strategy_feasibility}

\setcounter{AlgoLine}{0}
\SetAlgoLined
\LinesNumbered

\everypar={\nl}

\hrule height\algoheightrule\kern3pt\relax
\KwIn{$d$, $k$, $p_{max}$, $m$, $alg$}

\vspace{5pt}

{\nonl \textbf{Part 1:}} Strategy to store $\mathsf{Message}$ knowing $\mb A$\;

Divide random string $R$ in equal parts $R_1,\ldots R_{d^2}$ and initialize the memory of $alg$ to be $\mb 0$.\;

Run $alg$ with the feasibility procedure knowing $\mb A$ until we reach the first informative query $\mb x_{i_{1,0}}$.

\For{$p\in[p_{max}]$}{
    Let $\mathsf{Memory}_p$ be the current memory state of $alg$ and $i_{p,0}$ the current iteration step. \;
    
    Run $alg$ with the feasibility procedure knowing $\mb A$ until period $p$ ends at iteration step $i_{p,k}$. When needed to sample a unit vector $\tilde{\mb v}_{p,l}$, use the random string $R_{(p-1)d+l}$.\;

    \uIf{$i_{p,k}-i_{p,0}\leq m$}{
        Set $\mathsf{Message}=\mathsf{Memory}_p$
    }
}
\lIf{\mathsf{Message} has not been defined yet}{\Return The strategy fails}

\vspace{5pt}

{\nonl \textbf{Part 2:}} Strategy to make queries\;

Set the memory state of $alg$ to be $\mathsf{Message}$\;

Sample unit vector $\tilde{\mb v}_i$ using the randoms string $R_{(p-1)d+i}$ for all $i\leq m$.\;

Run $alg$ to get a query $\mb x_0$. Define $i_0=0$ and let $\mb v_{1} = P_{Span(\mb x_0)^\perp}(\tilde{\mb v}_1)$. Set $r=1$.

\For{$i\in[m]$}{
    Run $alg$ with current memory to obtain a query $\mb x_i$\;
    
    Submit $\mb x_i$ to the Oracle from Game \ref{game:orthogonal_vector}, to get response $\mb g_i$\;
        
    \uIf{$|\mb g_i^\top \mb x_i|>d^{-4}$}{
        Pass $sign(\mb g_i^\top \mb x_i)\mb g_i$ as response to $alg$.
    }
    \uElseIf{$\max_{s\leq r}\mb v_s^\top \mb x_i >-\eta_1$}{
        Pass $\mb v_{r'}$ as response to $alg$ where $r' = \argmax_{s\leq r} \mb v_s^\top \mb x_t$ and ties are broken alphabetically.
    }
    \uElse{
        Let $i_r=i$ the current iteration time.\;
        
        \uIf{$r<k$}{
            Let $\mb v_{r+1} = P_{Span(\mb x_{i_s},0\leq s\leq r)^\perp}(\tilde {\mb v}_{r+1})$.\;

            Pass $\mb v_{r+1}$ as response to $alg$ and increment $r \leftarrow r+1$.
        }
        \uElse{
            Break the \textbf{for} loop. (For remaining queries, query $\mb x_{i+1}=\ldots = \mb x_m=0$.)
        }
    }
}

\vspace{5pt}

{\nonl \textbf{Part 3:}} Strategy to return vectors\;

\uIf{index $i_k$ has not been defined yet}{
    With the current memory of $alg$ find a new query $\mb x_{m+1}$ and set $i_k=m+1$.\;
}
\Return $\{\mb x_{i_1},\ldots, \mb x_{i_k}\}$ to the Oracle.

\hrule height\algoheightrule\kern3pt\relax
    \end{algorithm}
}
    
    In the first part of the strategy, the player observes $\mb A$. Then they proceed to simulate the feasibility problem with $alg$ using parameters $\mb A$. When needed to sample a vector $\mb v_{p,l}$ (resp. $\mb v_0$), the player submits the corresponding queries $\mb x_{i_{p,1}},\ldots,\mb x_{i_{p,l}}$ (resp. $\emptyset$) useful to define $\mb v_{p,l}$. The player then takes the response given by the Oracle as that vector $\mb v_{p,l}$ (resp. $\mb v_0$), which simulates exactly a run of the feasibility procedure. Further, since $1+p_{max}(k-1) \leq d$, the player does not run out of queries. Importantly, during the run, the player keeps track of the length $i_{p,k}-i_{p,1}$ of period $p$. The first time we encounter a period $p$ with length at most $m$, we set $\mathsf{Message} = \mathsf{Memory}_p$, the memory state of $alg$ at the beginning of period $p$. If there is no such period, the strategy fails. Also, if $alg$ stopped before ending period $p_{max}$, the strategy fails. Next, the algorithm submits the following function $\tilde {\mb g}_{\mb A, \mb v}$ to the Oracle. Since the responses of the feasibility procedure are consistent over time, we adopt the following notation. For a previously queried vector $\mb x$ of $alg$, we denote $\mb g(\mb x)$ the vector which was returned to $alg$ during the first part (lines 3-9 of Algorithm \ref{alg:strategy_feasibility}). 
    \begin{equation}\label{eq:definition_subgradient_feasibility}
        \tilde{\mb g}_{\mb A, \mb v} :\mb x\mapsto \begin{cases}
            (\mb 0, 1) &\text{if }\mb x \text{ was never queried in the first part},\\
            (\mb a_i,1) &\text{ow. and if }\mb g(\mb x)\in\{\pm \mb a_i\},i\leq n,\\
            (\mb v_0,2) &\text{ow. and if }\mb g(\mb x)=\mb v_0,\\
            (\mb v_{p',l'},2+l'\1_{p'=p}+k\1_{p'=p+1,l'=1}) &\text{ow. and if }\mb g(\mb x) = \mb v_{p',l'}, p'\leq p_{max} , l\leq k-1.
        \end{cases}
    \end{equation} 
    Intuitively, the first component of $\tilde{\mb g}$ gives the returned vector in the first period, at the exception that we always return $\mb a_i$ instead of $\{\pm \mb a_i\}$. The second term has values in $[2+k\leq d^2].$ Hence, the submitted function is valid.
    
    Next, in the second part of the algorithm, the player proceeds to simulate a run the feasibility procedure with $alg$ on period $p$. To do so, we first set the memory state of $alg$ to $\mathsf{Message}$. Each new query $\mb z_i$ is submitted to the Oracle of Game \ref{game:hint_game} to get a response $(\mb g_i,s_i)$. Then, we compute $\tilde{\mb g}_i$ as follows
    \begin{equation}\label{eq:recover_subgradient_feasibility}
        \tilde{\mb g}_i =\begin{cases}
            \mb g_i &\text{if }s_i\geq 2,\\
            sign(\mb g_i^\top \mb z_i)\mb g_i &\text{if }s_i=1.
        \end{cases}
    \end{equation}
    One can easily check that $\tilde{\mb g}_i$ corresponds exactly to the response that was passed to $alg$ in the first part of the strategy. The player then passes $\tilde{\mb g}_i$ to $alg$ so that it can update its state. We repeat this process for $m$ steps. Further, the player can also keep track of the exploratory queries: the index $i_l$ of the first response satisfying $s_i=2+l$ for $l\leq k-1$ (resp. $s_i=2+k$)is the exploratory query which led to the construction of $\mb v_{p,l}$ (resp. $\mb v_{p+1,1}$) in the first part. Last, we check if the last index $i_k$ was defined. If not, we pose $i_k=m+1$ and let $\mb z_{m+1}$ be the next query of $alg$ with the current memory. The player then returns the vectors $\frac{\mb z_{i_1}}{\|\mb z_{i_1}\|},\ldots,\frac{\mb z_{i_k}}{\|\mb z_{i_k}\|}$. This ends the description of the player's strategy.

    By Proposition \ref{prop:finish_procedure_necessary}, on an event $\Ecal$ of probability at least $q$, the algorithm $alg$ succeeds and ends period $p_{max}$. As a result, similarly as in the proof of Proposition \ref{prop:reductionto_orthogonal_vector_game}, since $alg$ makes at most $m p_{max}$ queries, and there are $p_{max}$ periods, there must be a period of length at most $m$. Hence the strategy never fails at this phase of the player's strategy on the event $\Ecal$. Further, we already checked that in the second phase, the vectors $\tilde{\mb g}_i$ passed to $alg$ coincide exactly with the responses passed to $alg$ in the first part. Thus, this shows that during the second part, the player simulates exactly the run of the feasibility problem on period $p$. More precisely, the queries coincide with the queries in the feasibility problem at times $i_{p,1},\ldots ,\min\{i_{p,k}, i_{p,1}+m-1\}$. Now because the first part succeeded on $\Ecal$, we have $i_{p,k}\leq i_{p,0}+m$. Therefore, if $i_k$ has not yet been defined, this means that we had $i_{p,k}=i_{p,1}+m$. Hence, the next query with the current memory $\mb z_{m+1}$ is exactly the query $\mb x_{i_{p,k}}$ for the feasibility problem. This shows that the vectors $\mb z_{i_1},\ldots,\mb z_{i_k}$ coincide exactly with the vectors $\mb x_{i_{p,1}},\ldots,\mb x_{i_{p,k}}$ when running $alg$ on the feasibility problem in the first part.
    
    We now show that the returned vectors are successful for Game \ref{game:hint_game}. By construction, $\mb x_{i_{p,1}},\ldots, \mb x_{i_{p,k}}$ are all informative. In particular, $\|\mb A\mb x_{i_{p,l}}\|_\infty \leq  \eta_0$ for all $1\leq l\leq k$. Further,  these queries did not fall in scenario (2), hence $\mb v_0^\top \mb x_{i_{p,l}} < -\eta_1$, which implies $\|\mb x_{i_{p,l}}\| >\eta_1$ for all $l\leq k$. As a result,
    \begin{equation*}
        \frac{\|\mb A\mb x_{i_{p,l}}\|_\infty}{\|\mb x_{i_{p,l}}\|} \leq \frac{\eta_0}{\eta_1}.
    \end{equation*}
    Next fix $l\leq k-1$. By construction of $\mb y_{p,l}$,
    \begin{equation*}
        \|P_{Span(\mb x_{i_{p,l'}},l'\leq l)}(\mb y_{p,l})\|^2 =\sum_{l'\leq l} |\mb b_{p,l'}^\top \mb y_{p,l}|^2 \leq \frac{k}{d^6} \leq \frac{1}{d^5}.
    \end{equation*}
    Hence,
    \begin{equation*}
        \|\mb v_{p,l} - P_{Span(\mb x_{i_{p,l'}},l'\leq l)^\perp}(\mb y_{p,l}) \| \leq \|P_{Span(\mb x_{i_{p,l'}},l'\leq l)}(\mb y_{p,l})\| +\delta \leq \frac{1}{d^5}+ \delta.
    \end{equation*}
    As a result, since $\mb x_{p,l+1}^\top \mb v_{p,l} <-\eta_1$, we have
    \begin{equation*}
        \|P_{Span(\mb x_{i_{p,l'}},l'\leq l)^\perp}(\mb x_{p,l+1})\| \geq |\mb x_{p,l+1}^\top P_{Span(\mb x_{i_{p,l'}},l'\leq l)^\perp}(\mb y_{p,l})\| > \eta_1 - \frac{1}{d^5}-\delta\geq \frac{\eta_1}{2}.
    \end{equation*}
    This shows that the returned vectors $\frac{\mb x_{i_{p,1}}}{\|\mb x_{i_{p,1}}\|},\ldots, \frac{\mb x_{i_{p,k}}}{\|\mb x_{i_{p,k}}\|}$ are successful for Game \ref{game:hint_game} with parameters $\alpha = \frac{\eta_0}{\eta_1}$ and $\beta = \frac{\eta_1}{2}$. This ends the proof that strategy succeeds on $\Ecal$ for these parameters, which ends the proof of the proposition.
\end{proof}

We are now ready to prove the main result.

\begin{proof}[of Theorem \ref{thm:main_feasibility}]
    Suppose that there is an algorithm $alg$ for solving the feasibility problem to optimality $\epsilon=1/(48d^2\sqrt d)$ with memory $M$ and at most $Q$ queries. Let $k=\lceil 20\frac{M+3d\log(2d)+1}{c_Hn}\rceil$. By Proposition \ref{prop:reduction_feasibility_procedure}, it solves the feasibility procedure with parameter $k$ with probability at least $1-C\sqrt{\log d}/d$. By Proposition \ref{prop:reduction_orthogonal_game} there is an algorithm for Game \ref{game:hint_game} that wins with probability $1/3$ with $m=\lceil Q/p_{max}\rceil$ and parameters $\alpha =\eta_0/\eta_1$ and $\beta=\eta_1/2$. Now we check that
    \begin{equation*}
        \alpha\left(\frac{\sqrt d}{\beta}\right)^{5/4} \leq 12d^2\eta_0 = \frac{1}{2}.
    \end{equation*}
    Hence, by Proposition \ref{prop:lower_bound_queries_game}, we have
    \begin{equation*}
        m\geq \frac{c_H}{8(30\log d+c_H)}d.
    \end{equation*}
    This shows that
    \begin{equation*}
        Q \geq \Omega\left(p_{max}\frac{d}{\log d}\right) = \Omega\left(\frac{d^2}{k\log^3 d} \right) = \Omega\left(\frac{d^3}{(M+\log d)\log^3 d}\right).
    \end{equation*}
    This implies that for a memory $M=d^{2-\delta}$ with $0\leq \delta\leq 1$ the number of queries is $Q=\tilde\Omega(d^{1+\delta})$.
\end{proof}

\acks{This work was partly funded by ONR grant N00014-18-1-2122 and AFOSR grant FA9550-19-1-0263.}

\bibliography{refs}

\begin{thebibliography}{42}
\providecommand{\natexlab}[1]{#1}
\providecommand{\url}[1]{\texttt{#1}}
\expandafter\ifx\csname urlstyle\endcsname\relax
  \providecommand{\doi}[1]{doi: #1}\else
  \providecommand{\doi}{doi: \begingroup \urlstyle{rm}\Url}\fi

\bibitem[Marsden et~al.(2022)Marsden, Sharan, Sidford, and
  Valiant]{marsden2022efficient}
Annie Marsden, Vatsal Sharan, Aaron Sidford, and Gregory Valiant.
\newblock Efficient convex optimization requires superlinear memory.
\newblock In \emph{Conference on Learning Theory}, pages 2390--2430. PMLR,
  2022.

\bibitem[Nemirovskij and Yudin(1983)]{nemirovskij1983problem}
Arkadij~Semenovi{\v{c}} Nemirovskij and David~Borisovich Yudin.
\newblock Problem complexity and method efficiency in optimization.
\newblock 1983.

\bibitem[Yudin and Nemirovskii(1976)]{yudin1976informational}
David~B Yudin and Arkadi~S Nemirovskii.
\newblock Informational complexity and efficient methods for the solution of
  convex extremal problems.
\newblock \emph{Matekon}, 13\penalty0 (2):\penalty0 22--45, 1976.

\bibitem[Shor(1977)]{shor1977cut}
Naum~Z Shor.
\newblock Cut-off method with space extension in convex programming problems.
\newblock \emph{Cybernetics}, 13\penalty0 (1):\penalty0 94--96, 1977.

\bibitem[Tarasov(1988)]{tarasov1988method}
Sergei~Pavlovich Tarasov.
\newblock The method of inscribed ellipsoids.
\newblock In \emph{Soviet Mathematics-Doklady}, volume~37, pages 226--230,
  1988.

\bibitem[Nesterov(1989)]{nesterov1989self}
Ju~E Nesterov.
\newblock Self-concordant functions and polynomial-time methods in convex
  programming.
\newblock \emph{Report, Central Economic and Mathematic Institute, USSR Acad.
  Sci}, 1989.

\bibitem[Atkinson and Vaidya(1995)]{atkinson1995cutting}
David~S Atkinson and Pravin~M Vaidya.
\newblock A cutting plane algorithm for convex programming that uses analytic
  centers.
\newblock \emph{Mathematical programming}, 69\penalty0 (1-3):\penalty0 1--43,
  1995.

\bibitem[Vaidya(1996)]{vaidya1996new}
Pravin~M Vaidya.
\newblock A new algorithm for minimizing convex functions over convex sets.
\newblock \emph{Mathematical programming}, 73\penalty0 (3):\penalty0 291--341,
  1996.

\bibitem[Levin(1965)]{levin1965algorithm}
Anatoly~Yur'evich Levin.
\newblock An algorithm for minimizing convex functions.
\newblock In \emph{Doklady Akademii Nauk}, volume 160, pages 1244--1247.
  Russian Academy of Sciences, 1965.

\bibitem[Bertsimas and Vempala(2004)]{bertsimas2004solving}
Dimitris Bertsimas and Santosh Vempala.
\newblock Solving convex programs by random walks.
\newblock \emph{Journal of the ACM (JACM)}, 51\penalty0 (4):\penalty0 540--556,
  2004.

\bibitem[Lee et~al.(2015)Lee, Sidford, and Wong]{lee2015faster}
Yin~Tat Lee, Aaron Sidford, and Sam Chiu-wai Wong.
\newblock A faster cutting plane method and its implications for combinatorial
  and convex optimization.
\newblock In \emph{2015 IEEE 56th Annual Symposium on Foundations of Computer
  Science}, pages 1049--1065. IEEE, 2015.

\bibitem[Jiang et~al.(2020)Jiang, Lee, Song, and Wong]{jiang2020improved}
Haotian Jiang, Yin~Tat Lee, Zhao Song, and Sam Chiu-wai Wong.
\newblock An improved cutting plane method for convex optimization,
  convex-concave games, and its applications.
\newblock In \emph{Proceedings of the 52nd Annual ACM SIGACT Symposium on
  Theory of Computing}, pages 944--953, 2020.

\bibitem[Anstreicher(2000)]{anstreicher2000volumetric}
Kurt~M Anstreicher.
\newblock The volumetric barrier for semidefinite programming.
\newblock \emph{Mathematics of Operations Research}, 25\penalty0 (3):\penalty0
  365--380, 2000.

\bibitem[McCormick(2005)]{mccormick2005submodular}
S~Thomas McCormick.
\newblock Submodular function minimization.
\newblock \emph{Handbooks in operations research and management science},
  12:\penalty0 321--391, 2005.

\bibitem[Gr{\"o}tschel et~al.(2012)Gr{\"o}tschel, Lov{\'a}sz, and
  Schrijver]{grotschel2012geometric}
Martin Gr{\"o}tschel, L{\'a}szl{\'o} Lov{\'a}sz, and Alexander Schrijver.
\newblock \emph{Geometric algorithms and combinatorial optimization}, volume~2.
\newblock Springer Science \& Business Media, 2012.

\bibitem[Jiang(2021)]{jiang2021minimizing}
Haotian Jiang.
\newblock Minimizing convex functions with integral minimizers.
\newblock In \emph{Proceedings of the 2021 ACM-SIAM Symposium on Discrete
  Algorithms (SODA)}, pages 976--985. SIAM, 2021.

\bibitem[Papadimitriou and Roughgarden(2008)]{papadimitriou2008computing}
Christos~H Papadimitriou and Tim Roughgarden.
\newblock Computing correlated equilibria in multi-player games.
\newblock \emph{Journal of the ACM (JACM)}, 55\penalty0 (3):\penalty0 1--29,
  2008.

\bibitem[Jiang and Leyton-Brown(2011)]{jiang2011polynomial}
Albert~Xin Jiang and Kevin Leyton-Brown.
\newblock Polynomial-time computation of exact correlated equilibrium in
  compact games.
\newblock In \emph{Proceedings of the 12th ACM conference on Electronic
  commerce}, pages 119--126, 2011.

\bibitem[Woodworth and Srebro(2019)]{woodworth2019open}
Blake Woodworth and Nathan Srebro.
\newblock Open problem: The oracle complexity of convex optimization with
  limited memory.
\newblock In \emph{Conference on Learning Theory}, pages 3202--3210. PMLR,
  2019.

\bibitem[Nesterov(2003)]{nesterov2003introductory}
Yurii Nesterov.
\newblock \emph{Introductory lectures on convex optimization: A basic course},
  volume~87.
\newblock Springer Science \& Business Media, 2003.

\bibitem[Steinhardt and Duchi(2015)]{Steinhardt15}
Jacob Steinhardt and John Duchi.
\newblock Minimax rates for memory-bounded sparse linear regression.
\newblock In \emph{Proceedings of The 28th Conference on Learning Theory},
  pages 1564--1587. PMLR, 2015.

\bibitem[Sharan et~al.(2019)Sharan, Sidford, and Valiant]{Sharan2019}
Vatsal Sharan, Aaron Sidford, and Gregory Valiant.
\newblock Memory-sample tradeoffs for linear regression with small error.
\newblock In \emph{Proceedings of the 51st Annual ACM SIGACT Symposium on
  Theory of Computing}, STOC 2019, page 890–901. Association for Computing
  Machinery, 2019.

\bibitem[Mitliagkas et~al.(2013)Mitliagkas, Caramanis, and
  Jain]{Mitliagkas2013}
Ioannis Mitliagkas, Constantine Caramanis, and Prateek Jain.
\newblock Memory limited, streaming pca.
\newblock In \emph{Proceedings of the 26th International Conference on Neural
  Information Processing Systems - Volume 2}, NIPS'13, page 2886–2894, Red
  Hook, NY, USA, 2013. Curran Associates Inc.

\bibitem[Steinhardt et~al.(2016)Steinhardt, Valiant, and Wager]{steinhardt16}
Jacob Steinhardt, Gregory Valiant, and Stefan Wager.
\newblock Memory, communication, and statistical queries.
\newblock In \emph{29th Annual Conference on Learning Theory}, pages
  1490--1516. PMLR, 2016.

\bibitem[Brown et~al.(2021)Brown, Bun, Feldman, Smith, and Talwar]{Brown2021}
Gavin Brown, Mark Bun, Vitaly Feldman, Adam Smith, and Kunal Talwar.
\newblock When is memorization of irrelevant training data necessary for
  high-accuracy learning?
\newblock In \emph{Proceedings of the 53rd Annual ACM SIGACT Symposium on
  Theory of Computing}, STOC 2021, page 123–132. Association for Computing
  Machinery, 2021.

\bibitem[Brown et~al.(2022)Brown, Bun, and Smith]{brown22a}
Gavin Brown, Mark Bun, and Adam Smith.
\newblock Strong memory lower bounds for learning natural models.
\newblock In \emph{Proceedings of Thirty Fifth Conference on Learning Theory},
  pages 4989--5029. PMLR, 2022.

\bibitem[Moshkovitz and Moshkovitz(2017)]{Moshkovitz2017}
Dana Moshkovitz and Michal Moshkovitz.
\newblock Mixing implies lower bounds for space bounded learning.
\newblock In \emph{Proceedings of the 2017 Conference on Learning Theory},
  pages 1516--1566. PMLR, 2017.

\bibitem[Moshkovitz and Moshkovitz(2018)]{Moshkovitz2018}
Dana Moshkovitz and Michal Moshkovitz.
\newblock {Entropy Samplers and Strong Generic Lower Bounds For Space Bounded
  Learning}.
\newblock In \emph{9th Innovations in Theoretical Computer Science Conference
  (ITCS 2018)}, volume~94 of \emph{Leibniz International Proceedings in
  Informatics (LIPIcs)}, pages 28:1--28:20. Schloss Dagstuhl--Leibniz-Zentrum
  fuer Informatik, 2018.

\bibitem[Beame et~al.(2018)Beame, Oveis~Gharan, and Yang]{Beame2018}
Paul Beame, Shayan Oveis~Gharan, and Xin Yang.
\newblock Time-space tradeoffs for learning finite functions from random
  evaluations, with applications to polynomials.
\newblock In \emph{Proceedings of the 31st Conference On Learning Theory},
  pages 843--856. PMLR, 2018.

\bibitem[Garg et~al.(2018)Garg, Raz, and Tal]{Garg2018}
Sumegha Garg, Ran Raz, and Avishay Tal.
\newblock Extractor-based time-space lower bounds for learning.
\newblock In \emph{Proceedings of the 50th Annual ACM SIGACT Symposium on
  Theory of Computing}, STOC 2018, page 990–1002. Association for Computing
  Machinery, 2018.

\bibitem[Kol et~al.(2017)Kol, Raz, and Tal]{Kol2017}
Gillat Kol, Ran Raz, and Avishay Tal.
\newblock Time-space hardness of learning sparse parities.
\newblock In \emph{Proceedings of the 49th Annual ACM SIGACT Symposium on
  Theory of Computing}, STOC 2017, page 1067–1080. Association for Computing
  Machinery, 2017.

\bibitem[Raz(2017)]{Raz2017}
Ran Raz.
\newblock A time-space lower bound for a large class of learning problems.
\newblock In \emph{2017 IEEE 58th Annual Symposium on Foundations of Computer
  Science (FOCS)}, pages 732--742, 2017.
\newblock \doi{10.1109/FOCS.2017.73}.

\bibitem[Nemirovsky et~al.(1983)Nemirovsky, Yudin, and Dawson]{Nemirovsky1983}
A.S. Nemirovsky, D.B. Yudin, and E.R. Dawson.
\newblock \emph{Problem Complexity and Method Efficiency in Optimization}.
\newblock A Wiley-Interscience publication. Wiley, 1983.
\newblock ISBN 978-0471103455.

\bibitem[Woodworth and Srebro(2016)]{Woodworth2016}
Blake~E Woodworth and Nati Srebro.
\newblock Tight complexity bounds for optimizing composite objectives.
\newblock In \emph{Advances in Neural Information Processing Systems},
  volume~29. Curran Associates, Inc., 2016.

\bibitem[Woodworth and Srebro(2017)]{Woodworth2017LowerBF}
Blake~E. Woodworth and Nathan Srebro.
\newblock Lower bound for randomized first order convex optimization.
\newblock \emph{arXiv: Optimization and Control}, 2017.

\bibitem[Nocedal(1980)]{Nocedal1980}
Jorge Nocedal.
\newblock Updating quasi-newton matrices with limited storage.
\newblock \emph{Mathematics of Computation}, 35\penalty0 (151):\penalty0
  773--782, 1980.

\bibitem[Liu and Nocedal(1989)]{liu_limited_1989}
Dong~C. Liu and Jorge Nocedal.
\newblock On the limited memory {BFGS} method for large scale optimization.
\newblock \emph{Mathematical Programming}, 45\penalty0 (1):\penalty0 503--528,
  August 1989.

\bibitem[Lewis and Overton(2013)]{lewis_nonsmooth_2013}
Adrian~S. Lewis and Michael~L. Overton.
\newblock Nonsmooth optimization via quasi-{Newton} methods.
\newblock \emph{Mathematical Programming}, 141\penalty0 (1):\penalty0 135--163,
  October 2013.

\bibitem[Nemirovski(1994)]{nemirovski1994parallel}
Arkadi Nemirovski.
\newblock On parallel complexity of nonsmooth convex optimization.
\newblock \emph{Journal of Complexity}, 10\penalty0 (4):\penalty0 451--463,
  1994.

\bibitem[Balkanski and Singer(2018)]{balkanski2018parallelization}
Eric Balkanski and Yaron Singer.
\newblock Parallelization does not accelerate convex optimization: Adaptivity
  lower bounds for non-smooth convex minimization.
\newblock \emph{arXiv preprint arXiv:1808.03880}, 2018.

\bibitem[Bubeck et~al.(2019)Bubeck, Jiang, Lee, Li, and
  Sidford]{bubeck2019complexity}
S{\'e}bastien Bubeck, Qijia Jiang, Yin-Tat Lee, Yuanzhi Li, and Aaron Sidford.
\newblock Complexity of highly parallel non-smooth convex optimization.
\newblock \emph{Advances in neural information processing systems}, 32, 2019.

\bibitem[Feige and Schechtman(2002)]{feige2002optimality}
Uriel Feige and Gideon Schechtman.
\newblock On the optimality of the random hyperplane rounding technique for max
  cut.
\newblock \emph{Random Structures \& Algorithms}, 20\penalty0 (3):\penalty0
  403--440, 2002.

\end{thebibliography}

\appendix

\section{Concentration bounds}

The following result gives concentration bounds for the norm of the projection of a random unit vector onto linear subspaces.

\begin{proposition} \label{prop:projection_concentration}
Let $P$ be a projection in $\Rbb^d$ of rank $r$ and let $\mb x\in \Rbb^d$ be a random vector sampled uniformly on the unit sphere $\mb x\sim \Ucal(S^{d-1})$. Then, for every $t>0$,
\begin{equation*}
    \max\left\{\Pbb\left(\|P(\mb x)\|^2 - \frac{r}{d}\geq t\right) , \Pbb\left(\|P(\mb x)\|^2 - \frac{r}{d}\leq -t\right) \right\} \leq e^{-dt^2}.
\end{equation*}
Further, if $r=1$ and $d\geq 2$,
\begin{equation*}
    \Pbb\left(\|P(\mb x)\|\geq  \sqrt{\frac{t}{d-1}}\right) \leq 2\sqrt{t}e^{-t/2}.
\end{equation*}
\end{proposition}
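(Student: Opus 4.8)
The plan is to reduce both inequalities to Chernoff bounds for a ratio of independent chi-squared variables, using the standard representation of the uniform measure on the sphere through Gaussians. Writing $\mb x = \mb g/\|\mb g\|$ with $\mb g \sim \Ncal(0,I_d)$, one has $\|P(\mb x)\|^2 = U/(U+V)$, where $U := \|P\mb g\|^2 \sim \chi^2_r$ and $V := \|(I-P)\mb g\|^2 \sim \chi^2_{d-r}$ are independent; in particular $\Ebb\|P(\mb x)\|^2 = r/d$, which fixes the centering. (Equivalently $\|P(\mb x)\|^2$ is $\mathrm{Beta}(r/2,(d-r)/2)$-distributed, so one could alternatively invoke a known sub-Gaussianity bound for the Beta family; I prefer the self-contained Chernoff route below.)

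For the first bound (upper tail), note that when $t > (d-r)/d$ the event $\{\|P(\mb x)\|^2 \geq r/d + t\}$ is empty since $\|P(\mb x)\|^2 \leq 1$, so the inequality is trivial; otherwise it rewrites as $\{(1 - r/d - t)\,U \geq (r/d + t)\,V\}$. I would apply the Chernoff bound over $\lambda > 0$ with the chi-squared moment generating function $\Ebb[e^{\lambda \chi^2_k}] = (1-2\lambda)^{-k/2}$; a short optimization in $\lambda$ makes the two factors collapse, and the resulting bound is exactly $\exp\!\big({-}\tfrac d2\,\mathrm{KL}(\mathrm{Bern}(r/d)\,\|\,\mathrm{Bern}(r/d+t))\big)$. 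Pinsker's inequality $\mathrm{KL}(\mathrm{Bern}(a)\|\mathrm{Bern}(b)) \geq 2(a-b)^2$ then yields an exponent at most $-dt^2$. The lower tail follows from the same estimate applied to the complementary projection $I - P$ (of rank $d-r$), since $\|P(\mb x)\|^2 = 1 - \|(I-P)(\mb x)\|^2$ and $\mathrm{KL}(\mathrm{Bern}(a)\|\mathrm{Bern}(b)) = \mathrm{KL}(\mathrm{Bern}(1-a)\|\mathrm{Bern}(1-b))$.

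For the second bound I would specialize to $r = 1$: after rotating, $\|P(\mb x)\| = |x_1|$, and with $g_1^2 \sim \chi^2_1$ independent of $W := \|\mb g\|^2 - g_1^2 \sim \chi^2_{d-1}$, for $t < d-1$ we get $\Pbb\!\big(|x_1| \geq \sqrt{t/(d-1)}\big) = \Pbb\!\big(g_1^2 \geq \tfrac{t}{d-1-t}\,W\big)$. Running Chernoff over $\lambda \in (0,1/2)$ against the two chi-squared moment generating functions and optimizing gives, after simplification, the bound $\sqrt{t}\,\big(\tfrac{d}{d-1}\big)^{d/2}\,\big(1 - \tfrac{t}{d-1}\big)^{(d-1)/2}$. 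I would bound the last factor by $e^{-t/2}$ and observe that $d \mapsto \tfrac d2 \log\tfrac{d}{d-1}$ is decreasing with value $\log 2$ at $d = 2$, so $\big(\tfrac{d}{d-1}\big)^{d/2} \leq 2$, giving the claimed $2\sqrt{t}\,e^{-t/2}$.

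The work here is essentially computational: carrying out the two $\lambda$-optimizations and, for the first part, recognizing the optimized exponent as a scaled Bernoulli KL divergence, which is what makes the Pinsker step clean. The one place requiring a little care is the constant in the second bound — a naive estimate produces a prefactor like $\sqrt{2}\,e^{1/2}$ rather than $2$, so one must keep the exponential and algebraic factors grouped as $\big(\tfrac{d}{d-1}\big)^{d/2}$ and use the monotonicity of $\tfrac d2\log\tfrac d{d-1}$; one should also keep in mind that the Chernoff step in the second part is informative only once $t > (d-1)/d$, which is precisely the regime in which the stated bound is non-vacuous.
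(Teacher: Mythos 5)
Your proposal follows the paper's proof essentially exactly: the Gaussian/chi-squared representation of a uniform unit vector, the Chernoff bound via the $\chi^2$ moment generating function, the identification of the optimized exponent as a Bernoulli KL divergence followed by Pinsker's inequality, and the $I-P$ trick for the lower tail. For the $r=1$ case you also specialize the same KL expression; the only cosmetic difference is that you bound the prefactor $(d/(d-1))^{d/2}\le 2$ directly by monotonicity, whereas the paper first bounds $(1-1/d)^{-(d-1)/2}\le\sqrt2$ and then rescales $t$ — both routes give the same constant $2$.
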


\begin{proof}
    First, by isometry, we can assume that $P$ is the projection onto the coordinate vectors $\mb e_1,\ldots\mb e_r$. Then, let $\mb y\sim\Ncal(0,1)$ be a normal vector. Note that $\mb x = \frac{\mb y}{\|\mb y\|}\sim\Ucal(S^{d-1})$. Further,
    \begin{equation*}
        \|\mb x\|^2\geq \frac{r}{d}+t \iff \left(1-\frac{r}{d}-t\right)\sum_{i=1}^r y_i^2 \geq \left(\frac{r}{d}+t\right)\sum_{i=r+1}^d y_i^2.
    \end{equation*}
    Now note that $Z_1=\sum_{i=1}^r y_i^2$ and $Z_2=\sum_{i=r+1}^d y_i^2$ are two independent random chi squared variables of parameters $r$ and $d-r$ respectively. Recalling that the moment generating function of $Z\sim \chi^2(k)$ is $\Ebb[e^{s Z}]=(1-2s)^{-k/2}$ for $s<1/2$. Therefore, for any
    \begin{equation}\label{eq:correct_s}
        -\frac{1}{2(r/d+t)}<s<\frac{1}{2(1-r/d-t)},
    \end{equation}
    one has
    \begin{align*}
        \Pbb\left(\|P(\mb x)\|^2 - \frac{r}{d}\geq t\right) &\leq \Ebb\left[ \exp\left(s \left(1-\frac{r}{d}-t\right)Z_1 - s\left(\frac{r}{d}+t\right)Z_2\right)\right]\\
        &= \frac{\left[1-2s\left(1-\frac{r}{d}-t\right)\right]^{-r/2}}{\left[1-2s\left(\frac{r}{d}+t\right)\right]^{-(d-r)/2}}.
    \end{align*}
    Now let $s = \frac{1}{2} \left( \frac{1-r/d}{1-r/d-t} - \frac{r/d}{r/d+t} \right)$, which satisfies Eq~\eqref{eq:correct_s}. The previous equation readily yields
    \begin{equation*}
        \Pbb\left(\left|\|P(\mb x)\|^2 - \frac{r}{d}\right|\geq t\right) \leq \exp\left(-\frac{d}{2}d_{KL}\left(\frac{r}{d};\frac{r}{d}+t\right)\right)\leq e^{-dt^2}.
    \end{equation*}
    In the last inequality we used Pinsker's inequality $d_{KL}(r/d;r/d+t)\geq 2\delta(\Bcal(r/d),\Bcal(d/r+t))^2 = 2t^2$, where $\Bcal(q)$ is the Bernouilli distribution of parameter $q$. Replacing $P$ with $Id-P$ and $r$ with $d-r$ gives the other inequality
    \begin{equation*}
        \Pbb\left(\|P(\mb x)\|^2 - \frac{r}{d}\leq - t\right) \leq e^{-dt^2}.
    \end{equation*}
    This gives first claim. For the second claim, supposing that $r=1<d$, from the above equation, we have
    \begin{equation*}
        \Pbb\left(\|P(\mb x)\|^2 \geq  \frac{t}{d}\right) \leq \exp\left(-\frac{d}{2}d_{KL}\left(\frac{1}{d};\frac{t}{d}\right)\right) = \sqrt t \left(\frac{1-\frac{t}{d}}{1-\frac{1}{d}} \right)^{(d-1)/2} \leq \sqrt{2t} e^{-t(d-1)/(2d)}.
    \end{equation*}
    Thus,
    \begin{equation*}
        \Pbb\left(\|P(\mb x)\|^2 \geq  \frac{t}{d-1}\right) \leq \sqrt{\frac{2(d-1)}{d}}\sqrt t e^{-t/2},
    \end{equation*}
    which ends the proof of the proposition.
\end{proof}

Next, we need the following lemma which gives a concentration inequality for discretized samples in $\Dcal_d$ and approximately perpendicular to $k\leq d/3-1$ vectors. 

\begin{lemma}\label{lemma:concentration_bound}
    Let $0\leq k\leq d/3-1$ and $\mb x_1,\ldots, \mb x_k\in B_d(0,1)$ be $k$ orthonormal vectors in the unit ball, and $\mb x\in B_d(0,1)$. Denote by $\mu$ the distribution on the unit sphere corresponding to the uniform distribution $\mb y\sim \Ucal(S^{d-1}\cap \{ \mb w\in \Rbb^d : |\mb x_i^\top \mb w| \leq d^{-3},\forall i\leq k \})$. Let $\mb y\sim \mu$. Then, for $t\geq 2 $,
    \begin{equation*}
        \Pbb\left(|\mb x^\top \mb y| \geq \sqrt{\frac{t}{d}} + \frac{1}{d^2}\right) \leq 2\sqrt t e^{-t/3}.
    \end{equation*}
    Further, let $\delta\leq 1$ and $\mb z = \phi_\delta(\mb y)$. Then for $t\geq 4$, 
    \begin{equation*}
        \Pbb\left(|\mb x^\top \mb z| \geq \sqrt{\frac{t}{d}} + \frac{1}{d^2} +\delta \right) \leq 2\sqrt t e^{-t/3}.
    \end{equation*}
\end{lemma}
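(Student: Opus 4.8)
The plan is to split $\mb x$ along $E := \mathrm{Span}(\mb x_1,\dots,\mb x_k)$ and its orthocomplement $E^\perp$, control the component in $E$ deterministically, and reduce the component in $E^\perp$ to a single coordinate of a uniform vector on a sphere of dimension $d-k$, to which Proposition~\ref{prop:projection_concentration} applies. Write $\mb x = \mb x_E + \mb x_{E^\perp}$ with $\mb x_E = P_E(\mb x)$, $\mb x_{E^\perp} = P_{E^\perp}(\mb x)$. Since $\mb y$ satisfies $|\mb x_i^\top\mb y|\le d^{-3}$ for all $i\le k$ under $\mu$, and $\|\mb x_E\|\le\|\mb x\|\le 1$, Cauchy--Schwarz gives $|\mb x_E^\top\mb y| = |\sum_{i\le k}\langle\mb x,\mb x_i\rangle\langle\mb x_i,\mb y\rangle|\le d^{-3}\sqrt{k}\,\|\mb x_E\|\le \sqrt d\,d^{-3}\le d^{-2}$, using $k\le d$. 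Hence it suffices to prove $\Pbb(|\mb x_{E^\perp}^\top\mb y|\ge\sqrt{t/d})\le 2\sqrt t\,e^{-t/3}$: the triangle inequality then yields the first displayed bound, and the second follows immediately because $\mb z=\phi_\delta(\mb y)$ lies in the same partition cell as $\mb y$, which by Lemma~\ref{lemma:feige} has diameter at most $\delta$, so $|\mb x^\top\mb z - \mb x^\top\mb y|\le\|\mb x\|\,\|\mb z-\mb y\|\le\delta$ and the threshold just shifts by $\delta$.

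For the orthogonal component I would use the standard fact that when $\mb y\sim\Ucal(S^{d-1})$, the vectors $\mb y_E$ and $\mb u := \mb y_{E^\perp}/\|\mb y_{E^\perp}\|$ are independent with $\mb u$ uniform on the unit sphere of $E^\perp$ (dimension $d-k$). Since the conditioning event defining $\mu$ is measurable with respect to $\mb y_E$, under $\mu$ the vector $\mb u$ is still uniform on that sphere and $\|\mb y_{E^\perp}\|\le 1$ still holds. Because $\mb x_{E^\perp}\in E^\perp$ and $\|\mb x_{E^\perp}\|\le 1$, we obtain $|\mb x_{E^\perp}^\top\mb y| = \|\mb y_{E^\perp}\|\,\|\mb x_{E^\perp}\|\,|\langle \mb x_{E^\perp}/\|\mb x_{E^\perp}\|,\mb u\rangle| \le |\langle \mb x_{E^\perp}/\|\mb x_{E^\perp}\|,\mb u\rangle|$, and the right-hand side has the law of one coordinate of a uniform vector on $S^{d-k-1}$ (the degenerate cases $\mb x_{E^\perp}=\mb 0$ and $k=0$ being trivial).

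Then I would invoke the second bound of Proposition~\ref{prop:projection_concentration} with ambient dimension $d' = d-k$, which is $\ge 2$ because $k\le d/3-1$, and with parameter $s := t(d'-1)/d = t(d-k-1)/d$, chosen so that $\sqrt{s/(d'-1)} = \sqrt{t/d}$; this gives $\Pbb(|\mb x_{E^\perp}^\top\mb y|\ge\sqrt{t/d})\le 2\sqrt s\, e^{-s/2}$. Finally $k\ge 0$ gives $s\le t$, hence $\sqrt s\le\sqrt t$, while $k\le d/3-1$ gives $d-k-1\ge 2d/3$, hence $s\ge 2t/3$ and $e^{-s/2}\le e^{-t/3}$; combining, $2\sqrt s\,e^{-s/2}\le 2\sqrt t\,e^{-t/3}$, which is exactly what is needed. (The argument is in fact valid for all $t>0$; the ranges $t\ge 2$ and $t\ge 4$ in the statement only serve to make the right-hand side nontrivial.)

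The main obstacle, such as it is, lies in the reduction step: carefully justifying that under the conditional measure $\mu$ the normalized orthogonal component $\mb u$ remains exactly uniform on the sphere of $E^\perp$ and is unaffected by the conditioning (since that event depends only on $\mb y_E$), and then keeping straight the bookkeeping of the dimension change $d\mapsto d-k$ and the rescaling $t\mapsto s$, so that the factor $2/3$ produced by the constraint $k\le d/3-1$ is precisely what turns $e^{-t/2}$ into the claimed $e^{-t/3}$ while $\sqrt s\le\sqrt t$ keeps the prefactor at $2\sqrt t$. Everything else is an elementary triangle-inequality calculation.
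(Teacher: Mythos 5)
Your proof is correct and follows essentially the same route as the paper's: decompose along $E=\mathrm{Span}(\mb x_1,\dots,\mb x_k)$ and its complement, bound the $E$-component deterministically via the $d^{-3}$ constraint (you project $\mb x$, the paper projects $\mb y$, but these are the same inequality), observe that the normalized orthogonal part of $\mb y$ is uniform on the sphere of $E^\perp$ under $\mu$ because the conditioning event depends only on $\mb y_E$, apply the rank-one case of Proposition~\ref{prop:projection_concentration} in dimension $d-k$, and use $k\le d/3-1$ to pass from $e^{-s/2}$ to $e^{-t/3}$ while keeping the prefactor at $2\sqrt t$. Your explicit bookkeeping of the rescaling $s=t(d-k-1)/d$ is a bit cleaner than the paper's, and your remark that the restrictions $t\ge2$ and $t\ge4$ are not actually used is accurate.
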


\begin{proof}
    We use the same notations as above and denote by $\Ecal=\{ |\mb x_i^\top \mb y| \leq d^{-3},\forall i\leq k\}$ the event considered and $\mb y\sim \mu$. We decompose $\mb y = \alpha_1 \mb x_1+\ldots + \alpha_k \mb x_k + \mb y'$, where $\mb y'\in Span(\mb x_i,i\leq k)^\perp:=E$. Now note that $\frac{\mb y'}{\|\mb y'\|}$ is a uniformly random unit vector in $E$. As a result, using Proposition \ref{prop:projection_concentration}, we obtain for any $t\geq 2$,
    \begin{align*}
        \Pbb\left(|\mb x^\top \mb y'| \geq \sqrt{\frac{t}{d-k-1}} \right) &= \Pbb\left(|P_E(\mb x)^\top \mb y'| \geq  \sqrt{\frac{t}{d-k-1}} \right) \\
        &\leq 2\sqrt{t}e^{-t/2}.
    \end{align*}
    Also, because by definition of $\mu$, we have $|\alpha_i|\leq d^{-3}$ for all $i\leq k$, we obtain $|\mb x^\top \mb y| \leq \frac{k}{d^3} + |\mb x^\top \mb y'| \leq \frac{1}{d^2} + |\mb x^\top \mb y'|$. As a result, using the fact that $d-k-1\geq 2d/3$, the previous equation shows that
    \begin{equation*}
        \Pbb\left(|\mb x^\top \mb y| \geq \sqrt{\frac{3t}{2d}} +\frac{1}{d^2}\right) \leq \Pbb\left(|\mb x^\top \mb y'| \geq \sqrt{\frac{t}{d-k-1}} \right)\leq 2\sqrt{t}e^{-t/2}.
    \end{equation*}
    Next, we use the fact that $\|\mb z - \mb y\| = \|\phi_\delta(\mb y) - \mb y\| \leq \delta$ to obtain
    \begin{equation*}
        \Pbb\left(|\mb x^\top \mb z| \geq \sqrt{\frac{t}{d}} +\frac{1}{d^2} +\delta\right) \leq \Pbb\left(|\mb x^\top \mb y| \geq \sqrt{\frac{t}{d}} +\frac{1}{d^2} \right)  \leq 2\sqrt{t}e^{-t/3}.
    \end{equation*}
    This ends the proof of the lemma.
\end{proof}



\section{An improved result on robustly-independent vectors}

The following lemma serves the same purpose as \cite[Lemma 34]{marsden2022efficient}. Namely, from successful vectors of the Game \ref{game:hint_game}, it allows to recover an orthonormal basis that is still approximately in the nullspace of $\mb A$. The following version gives a stronger version that improves the dependence in $d$ of our chosen parameters.

\begin{lemma}
\label{lemma:gram-schmidt_marsden}
    Let $\delta\in(0,1]$ and suppose that we have $r\leq d$ unit norm vectors $\mb y_1,\ldots,\mb y_r\in \Rbb^d$. Suppose that for any $i\leq k$,
    \begin{equation*}
        \|P_{Span(\mb y_j,j<i)^\perp}(\mb y_i)\|\geq \delta.
    \end{equation*}
    Let $\mb Y=[\mb y_1,\ldots,\mb y_r]$ and $s\geq 2$. There exists $\lceil r/s\rceil$ orthonormal vectors $\mb Z=[\mb z_1,\ldots,\mb z_{\lceil r/s \rceil}]$ such that for any $\mb a\in\Rbb^d$,
    \begin{equation*}
        \|\mb Z^\top \mb a\|_\infty\leq  \left(\frac{\sqrt d}{\delta}\right)^{s/(s-1)}\|\mb Y^\top\mb a\|_\infty.
    \end{equation*}
\end{lemma}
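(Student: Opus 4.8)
The plan is to reduce the statement to a lower bound on the $\lceil r/s\rceil$-th largest singular value of the ``R-factor'' of $\mb Y$, and then to prove that lower bound by a volume (determinant) argument. The point is that the ``capacity'' of the set $\{\mb Y^\top\mb a:\mb a\in\Rbb^d\}$ is controlled by the singular spectrum of that R-factor, and robust independence translates, via a determinant identity, into exactly the lower bound we need.

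\textbf{Step 1: Reduce to a statement inside $W:=\mathrm{Span}(\mb y_1,\ldots,\mb y_r)$.} Robust independence with $\delta>0$ forces the $\mb y_i$ to be linearly independent, so $\dim W=r$. Let $\mb b_1,\ldots,\mb b_r$ be the Gram--Schmidt basis of $\mb y_1,\ldots,\mb y_r$, with $\mb b_i$ oriented so that $\langle\mb y_i,\mb b_i\rangle=\|P_{\mathrm{Span}(\mb y_j,j<i)^\perp}(\mb y_i)\|=:\gamma_i\ge\delta$. Then $\mb Y=\mb B\mb R$ with $\mb B=[\mb b_1,\ldots,\mb b_r]$ and $\mb R\in\Rbb^{r\times r}$ upper triangular, $R_{ii}=\gamma_i\ge\delta$, every column of $\mb R$ of unit $\ell_2$-norm (since $\|\mb y_i\|=1$), hence $\|\mb R\|_F=\sqrt r$. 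I would look for $\mb Z$ of the form $\mb Z=\mb B\mb W$ with $\mb W=[\mb w_1,\ldots,\mb w_k]$ having orthonormal columns, $k=\lceil r/s\rceil$ (then $\mb Z$ is automatically orthonormal); note one must keep $\mb z_l\in W$, since for $\mb a\perp W$ the right-hand side of the claimed bound vanishes, so any component of $\mb z_l$ orthogonal to $W$ would break it. Writing $\mb u=\mb B^\top\mb a$ (which ranges over all of $\Rbb^r$ as $\mb a$ ranges over $\Rbb^d$), we have $\mb Y^\top\mb a=\mb R^\top\mb u$, $\mb Z^\top\mb a=\mb W^\top\mb u$; substituting $\mb v=\mb R^\top\mb u$ turns the target inequality into $\|\mb W^\top\mb R^{-\top}\mb v\|_\infty\le(\sqrt d/\delta)^{s/(s-1)}\|\mb v\|_\infty$ for all $\mb v$, i.e. into $\max_l\|\mb R^{-1}\mb w_l\|_1\le(\sqrt d/\delta)^{s/(s-1)}$.

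\textbf{Step 2: Pass from $\ell_1$ to a singular value, and choose $\mb W$.} Using $\|\cdot\|_1\le\sqrt r\,\|\cdot\|_2\le\sqrt d\,\|\cdot\|_2$, it suffices to get orthonormal $\mb w_1,\ldots,\mb w_k$ with $\|\mb R^{-1}\mb w_l\|_2\le(\sqrt d/\delta)^{s/(s-1)}/\sqrt d$. Take $\mb w_l=\mb u_l$, the $k$ leading left singular vectors of $\mb R=\mb U\Sigma\mb V^\top$ ($\sigma_1\ge\cdots\ge\sigma_r>0$): these are orthonormal and $\mb R^{-1}\mb u_l=\mb v_l/\sigma_l$, so $\|\mb R^{-1}\mb w_l\|_2=1/\sigma_l\le 1/\sigma_k$. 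Hence it is enough to prove $\sigma_k(\mb R)\ge\delta^{s/(s-1)}d^{-1/(2(s-1))}$.

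\textbf{Step 3: Lower-bound $\sigma_k(\mb R)$ and finish.} Since $\mb R$ is square, $\prod_{i=1}^r\sigma_i=|\det\mb R|=\prod_{i=1}^r|R_{ii}|=\prod_{i=1}^r\gamma_i\ge\delta^r$, while $\sigma_i\le\|\mb R\|_F=\sqrt r\le\sqrt d$ for all $i$ and $\sigma_i\le\sigma_k$ for $i\ge k$, so
\begin{equation*}
\delta^r\ \le\ \Big(\prod_{i=1}^{k-1}\sigma_i\Big)\Big(\prod_{i=k}^{r}\sigma_i\Big)\ \le\ (\sqrt d)^{\,k-1}\,\sigma_k^{\,r-k+1},
\end{equation*}
giving $\sigma_k\ge\delta^{r/(r-k+1)}d^{-(k-1)/(2(r-k+1))}$. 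Because $k=\lceil r/s\rceil$ we have $k-1<r/s$, hence $r-k+1>r(s-1)/s$, so $\tfrac{r}{r-k+1}<\tfrac{s}{s-1}$ and $\tfrac{k-1}{r-k+1}<\tfrac1{s-1}$; since $0<\delta\le1$ and $d\ge1$, both estimates only strengthen the bound, yielding $\sigma_k\ge\delta^{s/(s-1)}d^{-1/(2(s-1))}$. Plugging back, $\|\mb R^{-1}\mb w_l\|_1\le\sqrt d/\sigma_k\le\sqrt d\cdot\delta^{-s/(s-1)}d^{1/(2(s-1))}=(\sqrt d/\delta)^{s/(s-1)}$, which is what was needed. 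The only genuinely creative step is Step 2--3, i.e. realizing that robust independence should be exploited through the determinant of the R-factor to control $\sigma_k$; this spectral view is what keeps the exponent at $s/(s-1)$ (independent of $r$) while still extracting $\lceil r/s\rceil$ vectors, improving on the greedy argument of \cite[Lemma 34]{marsden2022efficient}. The only fiddly point is the ceiling bookkeeping, which is handled above by retaining $\sigma_k$ inside the product $\prod_{i\ge k}\sigma_i$.
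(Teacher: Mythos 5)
Your proof is correct and follows essentially the same route as the paper's: lower-bound $\prod_{i}\sigma_i(\mb Y)\ge\delta^r$ via the Gram--Schmidt triangular factor, upper-bound the largest singular value by $\sqrt d$, and take the leading $\lceil r/s\rceil$ left singular vectors as $\mb Z$. Your reduction through the $r\times r$ R-factor and the change of variables $\mb v=\mb R^\top\mb u$ is a cosmetic repackaging of the paper's direct use of the SVD of $\mb Y$ (since $\mb R$ has the same singular values as $\mb Y$, and $\mb B\mb U_R$ are exactly the left singular vectors of $\mb Y$), with slightly more careful exponent bookkeeping -- the paper's intermediate line $\sigma_{r'}\ge\delta^{s/(s-1)}/d^{1/(2s)}$ has a small exponent slip and should read $d^{1/(2(s-1))}$, though the final displayed inequality is still correct.
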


\begin{proof}
        Let $\mb B=(\mb b_1,\ldots, \mb b_r)$ be the orthonormal basis given by the Gram-Schmidt decomposition of $\mb y_1,\ldots, \mb y_r$. By definition of the Gram-Schmidt decomposition, we can write $\mb Y = \mb B\mb C$ where $\mb C$ is an upper-triangular matrix. Further, its diagonal is exactly $diag(\|P_{Span(\mb y_{l'},l'<l)^\perp}(\mb y_l)\|, l\leq r)$. Hence,
    \begin{equation*}
        \det(\mb Y) = \det(\mb C) = \prod_{l\leq r} \|P_{Span(\mb y_{l'},l'<l)^\perp}(\mb y_l)\| \geq \delta^r.
    \end{equation*}
    We now introduce the singular value decomposition $\mb Y = \mb U 
 diag(\sigma_1,\ldots,\sigma_r)\mb V^\top$, where $\mb U\in \Rbb^{d\times r}$ and $\mb V\in \Rbb^{r\times r}$ have orthonormal columns, and $\sigma_1\geq\ldots\geq \sigma_r$. Next, for any vector $\mb z\in\Rbb^d$, since the columns of $\mb Y$ have unit norm,
    \begin{equation*}
        \|\mb Y\mb z\|_2 \leq \sum_{l\leq r}|z_l| \|\mb y_l\|_2 \leq \|\mb z\|_1 \leq \sqrt d \|\mb z\|_2. 
    \end{equation*}
    In the last inequality we used Cauchy-Schwartz. Therefore, all singular values of $\mb Y$ are upper bounded by $\sigma_1\leq \sqrt d$. Thus, with $r' = \lceil r/s\rceil$
    \begin{equation*}
        \delta^r \leq \det(\mb Y) =\prod_{l=1}^r \sigma_l \leq d^{(r'-1)/2} \sigma_{r'}^{r-r'+1} \leq d^{r/2s}\sigma_{r'}^{(s-1)r/s},
    \end{equation*}
    so that $\sigma_{r'}\geq \delta^{s/(s-1)}/d^{1/(2s)}$. We are ready to define the new vectors. We pose for all $i\leq r'$, $\mb z_i = \mb u_i$ the $i$-th column of $\mb U$. These correspond to the $r'$ largest singular values of $\mb Y$ and are orthonormal by construction. Then, for any $i\leq r'$, we also have $\mb z_i = \mb u_i = \frac{1}{\sigma_i}\mb Y \mb v_i$ where $\mb v_i$ is the $i$-th column of $\mb V$. Hence, for any $\mb a\in \Rbb^d$,
    \begin{equation*}
        |\mb z_i^\top \mb a| =\frac{1}{\sigma_i}|\mb v_i^\top \mb Y^\top \mb a| \leq \frac{\|\mb v_i\|_1}{\sigma_i} \|\mb Y^\top \mb a\|_\infty \leq \frac{d^{1/2+1/(2s)}}{\delta^{s/(s-1)}}\|\mb Y^\top \mb a\|_\infty.
    \end{equation*}
    This ends the proof of the lemma.
\end{proof}
\end{document}